%% file: tuning.tex
\PassOptionsToPackage{font=small}{caption}
\PassOptionsToPackage{hyperfootnotes=false}{hyperref}
\documentclass{antgroup}
\definecolor{title_blue}{HTML}{1677FF}
\definecolor{cite_blue}{HTML}{044dc1}
\usepackage{antgroup}

\usepackage[hyperfootnotes=false]{hyperref}
\usepackage{url}
\usepackage{xcolor}
\usepackage{booktabs}
\definecolor{color1}{HTML}{406058}
\definecolor{color2}{HTML}{78A0C8}
\definecolor{color3}{HTML}{3850A0}
\definecolor{color4}{HTML}{6860A8}
\definecolor{color5}{HTML}{A03850}
\definecolor{color6}{HTML}{AE7694}
\definecolor{color7}{HTML}{8488B4}
\hypersetup{
    pdftitle={Intrinsic Distribution Matching},
    pdfauthor={Deyuan Liu, Yansen Han, Kaiwen Zheng, Jun Xie, Jiacheng Liu, Zhenglin Cheng, Peng Sun, Jintao Zhang, Jianguo Li, Tao Lin},
    colorlinks=true,
    linkcolor=title_blue,
    citecolor=cite_blue,
    urlcolor=title_blue,
    pdfborder={0 0 0}
}
\usepackage[T1]{fontenc}

\input{resources/preamble.tex}

\title{Intrinsic Distribution Matching}

\author{Deyuan Liu$^{1,2,\dagger}$ \,
  Yansen Han$^{2,4,\dagger}$ \,
  Kaiwen Zheng$^{3,\dagger}$ \,
  Jun Xie$^{1,2,4,5}$ \,
  Jiacheng Liu$^{1,6}$ \\
  Zhenglin Cheng$^{1,2,4,5}$ \,
  Peng Sun$^{1,2,4}$ \,
  Jintao Zhang$^{3,*}$ \,
  Jianguo Li$^{1,*}$ \,
  Tao Lin$^{1,2,*}$
}

\affiliation{
  $^{1}$Inclusion AI \quad
  $^{2}$Westlake University \quad
  $^{3}$Tsinghua University \quad
  $^{4}$Zhejiang University \\
  $^{5}$Shanghai Innovation Institute \quad
  $^{6}$MBZUAI
}

\fancypagestyle{firststyle}{
  \fancyhead{}
  \fancyhead[L]{\raisebox{0pt}[0pt][0pt]{\includegraphics[height=20pt]{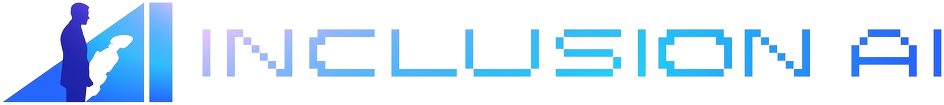}}}
  \renewcommand{\headrulewidth}{0pt}
}

\begin{document}

\maketitle

\begingroup
\renewcommand{\thefootnote}{\fnsymbol{footnote}}
\setcounter{footnote}{2}\footnotetext{Equal contribution.}
\setcounter{footnote}{1}\footnotetext{Corresponding author.}
\endgroup
\setcounter{footnote}{0}

\begin{abstract}
One-step and few-step text-to-image generation must accurately render the details specified in the prompt despite limited opportunities for refinement.
Distribution matching distillation (DMD) uses score differences to bring the generated distribution closer to a reference distribution.
Following prior work, we estimate this difference with internal velocity branches, avoiding a separate fake-score network.
We ask whether an explicit moment constraint improves this learned score supervision.
A decomposition of the reverse-KL field into a Gaussian moment component and a non-Gaussian residual identifies statistics that can be constrained explicitly.
We introduce \textbf{Intrinsic Distribution Matching (\method)}, which retains the full DMD correction and matches feature means and per-coordinate standard deviations using the generator's exponential-moving-average (EMA) copy, requiring no external feature encoder.
Experiments use full-parameter and low-rank adaptation (LoRA) tuning on Z-Image-Turbo 6B and Qwen-Image 20B.
At two network evaluations, \method reaches 94.25\% and 90.20\% on Chinese and English LongText-Bench, the highest scores among the compared methods at this budget.
Controlled ablations show gains from the added moment loss, while generator EMA features perform competitively with an external DINOv2 encoder across the five reported metrics.
Code is available \href{https://github.com/inclusionAI/IDM}{here}.
\end{abstract}

\input{resources/main.tex}

\clearpage

\bibliography{resources/reference}
\bibliographystyle{antgroup}

\clearpage
\appendix

\input{resources/appendix.tex}

\end{document}

%% file: resources/preamble.tex
\usepackage{amsmath,amssymb,amsthm}

\providecommand{\R}{\mathbb{R}}

\providecommand{\EEb}[2]{{\mathbb E}_{#1}\left[#2\right] }

\providecommand{\dm}{\mathrm{d}}

\DeclareMathOperator*{\argmin}{arg\,min\,}

\DeclareMathOperator*{\diag}{diag}

\providecommand{\bb}{\mathbf{b}}
\providecommand{\cc}{\mathbf{c}}
\providecommand{\dd}{\mathbf{d}}

\providecommand{\gv}{\mathbf{g}}
\providecommand{\hh}{\mathbf{h}}

\providecommand{\mm}{\mathbf{m}}

\renewcommand{\ss}{\mathbf{s}}

\providecommand{\uu}{\mathbf{u}}
\providecommand{\vv}{\mathbf{v}}
\providecommand{\ww}{\mathbf{w}}
\providecommand{\xx}{\mathbf{x}}
\providecommand{\yy}{\mathbf{y}}
\providecommand{\zz}{\mathbf{z}}

\providecommand{\mA}{\mathbf{A}}

\providecommand{\mI}{\mathbf{I}}

\providecommand{\mX}{\mathbf{X}}
\providecommand{\mY}{\mathbf{Y}}

\providecommand{\mmF}{\boldsymbol{F}}

\providecommand{\mepsilon}{\boldsymbol{\epsilon}}

\providecommand{\cC}{\mathcal{C}}
\providecommand{\cD}{\mathcal{D}}

\providecommand{\cL}{\mathcal{L}}

\providecommand{\cN}{\mathcal{N}}

\providecommand{\cQ}{\mathcal{Q}}

\newenvironment{talign*}
{\csname align*\endcsname}
{\endalign}

\usepackage{algorithm}
\usepackage{algorithmic}
\usepackage{graphicx}
\usepackage{subcaption}
\usepackage[flushleft]{threeparttable}
\usepackage{float}
\usepackage{placeins}
\usepackage{multirow}
\usepackage{xspace}
\usepackage{natbib}
\usepackage{enumitem}
\usepackage[font=small]{caption}
\usepackage{autobreak}
\usepackage{sidecap}
\usepackage{wrapfig}
\usepackage{bbding}
\usepackage[toc, page, header]{appendix}
\usepackage{tikz}
\usetikzlibrary{arrows.meta,positioning,calc}
\usepackage{pifont}
\usepackage{mdframed}
\usepackage{tcolorbox}
\tcbuselibrary{skins,breakable}
\usepackage{colortbl}
\usepackage{mathrsfs}
\usepackage{footmisc}

\newcommand{\firsttablegroup}[2]{\multicolumn{#1}{@{}l}{\textcolor{black!55}{\itshape\footnotesize #2}}\\[-0.15em]}
\newcommand{\tablegroup}[2]{\midrule\multicolumn{#1}{@{}l}{\textcolor{black!55}{\itshape\footnotesize #2}}\\[-0.15em]}
\newcommand{\circnum}[1]{\raisebox{0.15ex}{\scriptsize\textcircled{\raisebox{-0.05ex}{\tiny #1}}}}

\definecolor{coral}{RGB}{255,127,80}
\definecolor{darkgreen}{RGB}{0,100,0}
\definecolor{darkyellow}{RGB}{204,153,0}
\definecolor{salmon}{RGB}{250,128,114}
\definecolor{champagne}{RGB}{247,231,206}
\definecolor{kleinblue}{RGB}{0,47,167}
\definecolor{bauhiniapurple}{RGB}{193,84,193}
\definecolor{darkred}{RGB}{150,0,0}

\newcommand{\darkredtext}[1]{{\color{darkred}#1}}

\newcommand{\thmref}[1]{\hyperref[#1]{\darkredtext{Thm.~\ref*{#1}}}}
\newcommand{\defref}[1]{\hyperref[#1]{\darkredtext{Def.~\ref*{#1}}}}
\newcommand{\propref}[1]{\hyperref[#1]{\darkredtext{Prop.~\ref*{#1}}}}
\newcommand{\assumpref}[1]{\hyperref[#1]{\darkredtext{Assump.~\ref*{#1}}}}
\newcommand{\remarkref}[1]{\hyperref[#1]{\darkredtext{Rem.~\ref*{#1}}}}
\newcommand{\hypref}[1]{\hyperref[#1]{\darkredtext{Hyp.~\ref*{#1}}}}
\newcommand{\conjref}[1]{\hyperref[#1]{\darkredtext{Conj.~\ref*{#1}}}}
\newcommand{\lemref}[1]{\hyperref[#1]{\darkredtext{Lem.~\ref*{#1}}}}
\newcommand{\corref}[1]{\hyperref[#1]{\darkredtext{Cor.~\ref*{#1}}}}
\newcommand{\noteref}[1]{\hyperref[#1]{\darkredtext{Nota.~\ref*{#1}}}}
\newcommand{\claimref}[1]{\hyperref[#1]{\darkredtext{Clm.~\ref*{#1}}}}
\newcommand{\obsref}[1]{\hyperref[#1]{\darkredtext{Obs.~\ref*{#1}}}}

\newcommand{\transparentteal}[1]{\tikz[baseline=(X.base)] \node[fill=teal, fill opacity=0.1, text opacity=1, inner sep=2pt, outer sep=0pt] (X) {#1};}
\newcommand{\figref}[1]{\hyperref[#1]{\transparentteal{Figure~\ref*{#1}}}}

\newcommand{\transparentdarkgreen}[1]{\tikz[baseline=(X.base)] \node[fill=darkgreen, fill opacity=0.1, text opacity=1, inner sep=2pt, outer sep=0pt] (X) {#1};}
\newcommand{\tabref}[1]{\hyperref[#1]{\transparentdarkgreen{Table~\ref*{#1}}}}

\newcommand{\transparentdarkyellow}[1]{\tikz[baseline=(X.base)] \node[fill=darkyellow, fill opacity=0.1, text opacity=1, inner sep=2pt, outer sep=0pt] (X) {#1};}
\newcommand{\secref}[1]{\hyperref[#1]{\transparentdarkyellow{Section~\ref*{#1}}}}

\newcommand{\transparentcoral}[1]{\tikz[baseline=(X.base)] \node[fill=coral, fill opacity=0.1, text opacity=1, inner sep=2pt, outer sep=0pt] (X) {#1};}
\newcommand{\appref}[1]{\hyperref[#1]{\transparentcoral{Appendix~\ref*{#1}}}}

\newcommand{\transparentkleinblue}[1]{\tikz[baseline=(X.base)] \node[fill=kleinblue, fill opacity=0.1, text opacity=1, inner sep=2pt, outer sep=0pt] (X) {#1};}
\newcommand{\algref}[1]{\hyperref[#1]{\transparentkleinblue{Algorithm~\ref*{#1}}}}

\newcommand{\transparentbauhiniapurple}[1]{\tikz[baseline=(X.base)] \node[fill=bauhiniapurple, fill opacity=0.1, text opacity=1, inner sep=2pt, outer sep=0pt] (X) {#1};}
\newcommand{\equref}[1]{\hyperref[#1]{\transparentbauhiniapurple{Equation~\ref*{#1}}}}

\newtheoremstyle{custom}
{1pt} {1pt} {\itshape} {} {\bfseries} {} { } {\thmname{#1} \thmnumber{#2} \thmnote{(#3)} . }
\theoremstyle{custom}

\newtheorem{innerdefinition}{Definition}
\newtheorem{innerproposition}{Proposition}
\newtheorem{innerassumption}{Assumption}
\newtheorem{innerremark}{Remark}
\newtheorem{innertheorem}{Theorem}
\newtheorem{innerhypothesis}{Hypothesis}
\newtheorem{innerconjecture}{Conjecture}
\newtheorem{innerlemma}{Lemma}
\newtheorem{innercorollary}{Corollary}
\newtheorem{innerexample}{Example}
\newtheorem{innernotation}{Notation}
\newtheorem{innerclaim}{Claim}
\newtheorem{innerproblem}{Problem}

\newtheorem{innerobservation}{Observation}

\tcbset{
  theoremframebase/.style={
    enhanced,
    breakable,
    sharp corners,
    boxrule=0pt,
    leftrule=2pt,
    top=6pt,
    bottom=5pt,
    left=5pt,
    right=5pt,
    before skip=4pt,
    after skip=4pt,
  }
}

\newtcolorbox{definitionframe}{theoremframebase, colback=gray!10, colframe=gray!75!black}
\newtcolorbox{propositionframe}{theoremframebase, colback=blue!8, colframe=blue!70!black}
\newtcolorbox{assumptionframe}{theoremframebase, colback=green!8, colframe=green!55!black}
\newtcolorbox{remarkframe}{theoremframebase, colback=yellow!12, colframe=yellow!60!black}
\newtcolorbox{theoremframe}{theoremframebase, colback=red!7, colframe=red!70!black}
\newtcolorbox{hypothesisframe}{theoremframebase, colback=purple!7, colframe=purple!70!black}
\newtcolorbox{conjectureframe}{theoremframebase, colback=orange!8, colframe=orange!70!black}
\newtcolorbox{lemmaframe}{theoremframebase, colback=cyan!8, colframe=cyan!70!black}
\newtcolorbox{corollaryframe}{theoremframebase, colback=magenta!7, colframe=magenta!70!black}
\newtcolorbox{exampleframe}{theoremframebase, colback=lime!8, colframe=lime!45!black}
\newtcolorbox{notationframe}{theoremframebase, colback=pink!7, colframe=pink!70!black}
\newtcolorbox{claimframe}{theoremframebase, colback=violet!7, colframe=violet!70!black}
\newtcolorbox{problemframe}{theoremframebase, colback=salmon!9, colframe=salmon!70!black}
\newtcolorbox{observationframe}{theoremframebase, colback=purple!5, colframe=purple!55!black}

\newenvironment{definition}
{\begin{definitionframe}\begin{innerdefinition}}
{\end{innerdefinition}\end{definitionframe}}

\newenvironment{proposition}
{\begin{propositionframe}\begin{innerproposition}}
{\end{innerproposition}\end{propositionframe}}

\newenvironment{assumption}
{\begin{assumptionframe}\begin{innerassumption}}
{\end{innerassumption}\end{assumptionframe}}

\newenvironment{remark}
{\begin{remarkframe}\begin{innerremark}}
{\end{innerremark}\end{remarkframe}}

\newenvironment{theorem}
{\begin{theoremframe}\begin{innertheorem}}
{\end{innertheorem}\end{theoremframe}}

\newenvironment{lemma}
{\begin{lemmaframe}\begin{innerlemma}}
{\end{innerlemma}\end{lemmaframe}}

\newenvironment{corollary}
{\begin{corollaryframe}\begin{innercorollary}}
{\end{innercorollary}\end{corollaryframe}}

\usepackage[textwidth=3.5cm]{todonotes}

\newcommand{\method}{\textsc{IDM}\xspace}

\newcommand{\sg}{\mathop{\operatorname{sg}}\nolimits}

\usepackage{makecell}

\usepackage{bm}

\definecolor{ForestGreen}{RGB}{34,139,34}
\definecolor{OrangeRed}{RGB}{255,69,0}

%% file: resources/main.tex
\section{Introduction}
\label{sec:intro}

Diffusion and flow-matching models produce high-quality images, but their repeated network evaluations impose substantial inference cost~\citep{ho2020denoising,song2020score,lipman2022flow,liu2022flow}. One and few-step generation reduces this cost by learning a much shorter sampling process. The challenge is to accurately render the details specified in the prompt using only one or a few sampling steps. Long-text rendering makes this challenge concrete: many characters must remain legible and correct within a coherent image~\citep{geng2025x,wu2025qwen}. It therefore provides a demanding test of whether fast generation retains detailed conditional information.

Distribution matching distillation (DMD) trains a fast generator to match a reference distribution~\citep{yin2024one,yin2024improved}. A score describes the local direction of increasing log density. A pretrained teacher supplies the reference score, while an auxiliary fake-score network estimates the score of the current generated distribution. Their difference guides updates to generated samples. These updates are applied sample by sample, but their direction depends on both distributions. Distribution matching can complement trajectory-based training: rCM combines score regularization with continuous-time consistency training~\citep{zheng2025large}.

\begin{figure}[!t]
  \centering
  \includegraphics[width=\linewidth]{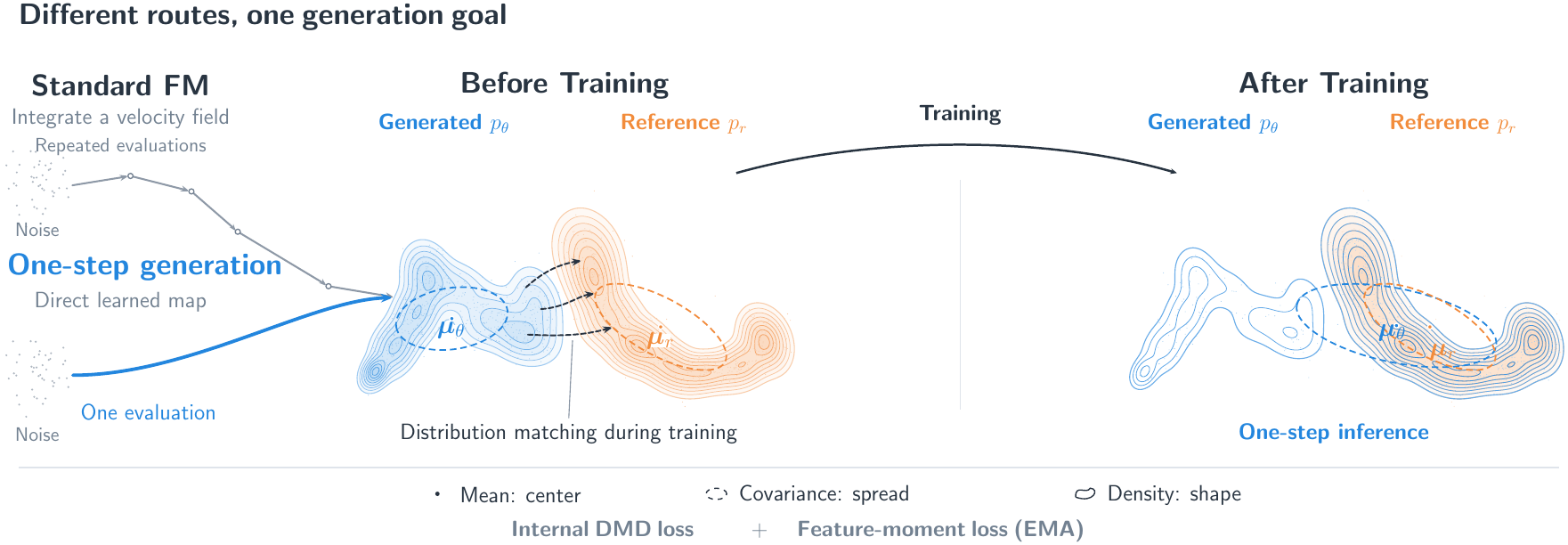}
  \caption{\textbf{Overview of \method for one-step generation.}
  Multi-step sampling refines samples iteratively, whereas one-step generation learns a direct noise-to-data map.
  \method combines an internal DMD loss with a feature-moment loss in the generator's EMA feature space.
  The right panel illustrates idealized alignment with single-evaluation inference.}
  \label{fig:overview}
  \vspace{-2em}
\end{figure}

TwinFlow and APEX obtain this score difference from internally separated velocity branches of the generator~\citep{cheng2025twinflow,liu2026self}. The model-score estimate is still learned, but no separate fake-score network is needed. We adopt this internal construction and ask whether the generator can also supply an explicit constraint on statistics across samples. Although the exact DMD field contains full distributional information, learned score estimates and finite training can leave discrepancies that an additional statistical objective may help reduce.

To motivate such a constraint, we revisit the reverse-KL score difference underlying DMD. Building on the Gaussian-plus-residual score representation~\citep{wang2023hidden,wang2024unreasonable}, we separate a component determined by means and covariances from the remaining score structure in~\secref{sec:decomposition}. This identifies statistics that can be constrained explicitly, while showing why moments alone do not determine the full distribution. We therefore retain the full DMD correction and add a moment loss, testing its benefit through controlled ablations in a shared training setting.

We introduce \textbf{Intrinsic Distribution Matching (\method)}, which augments internal DMD with moment matching in the generator's own feature space. Its exponential-moving-average (EMA) copy provides a shared representation for generated and reference samples; we match the means and per-coordinate standard deviations of these features. Unlike objectives that obtain features from an external encoder or teacher~\citep{yang2026fdloss,zhang2026teacher}, this design reuses the generator for both score estimation and feature extraction. Both training signals are obtained from the generator and its EMA copy, without a separate discriminator, score model, or external feature encoder.

We evaluate \method using Z-Image-Turbo 6B and Qwen-Image 20B with full-parameter and LoRA tuning in~\secref{sec:exp}. LongText-Bench evaluates the Chinese and English long-text rendering fidelity emphasized above, with subsets denoted LongText-CN and LongText-EN~\citep{geng2025x}. At two network evaluations (2~NFE), \method on Z-Image-Turbo achieves state-of-the-art scores among the compared methods: 94.25\% and 90.20\%, improving over Z-Image-Turbo++ by 2.63 and 0.32 percentage points, respectively in~\tabref{tab:ablation_summary_2nfe}.

\textbf{Our main contributions are:}
\begin{enumerate}[label=\alph*., nosep, leftmargin=16pt]
  \item \textbf{Intrinsic distributional supervision.} We augment internal DMD with an explicit loss on the means and per-coordinate standard deviations of generator EMA features. This adds statistical supervision across samples without a separate external feature encoder.
  \item \textbf{Distribution-gradient explanation.} Building on the Gaussian-plus-residual score representation, we motivate an explicit moment constraint alongside the full DMD correction.

  \item \textbf{Few-step training and evaluation.} We demonstrate full-parameter and LoRA training on 6B and 20B backbones. Ablations assess the added moment loss, EMA features, and regression choices, while LongText-Bench evaluates fine-grained text fidelity.
\end{enumerate}

\clearpage

\begin{figure}[!t]
\centering
\includegraphics[width=\linewidth,height=0.95\textheight,keepaspectratio]{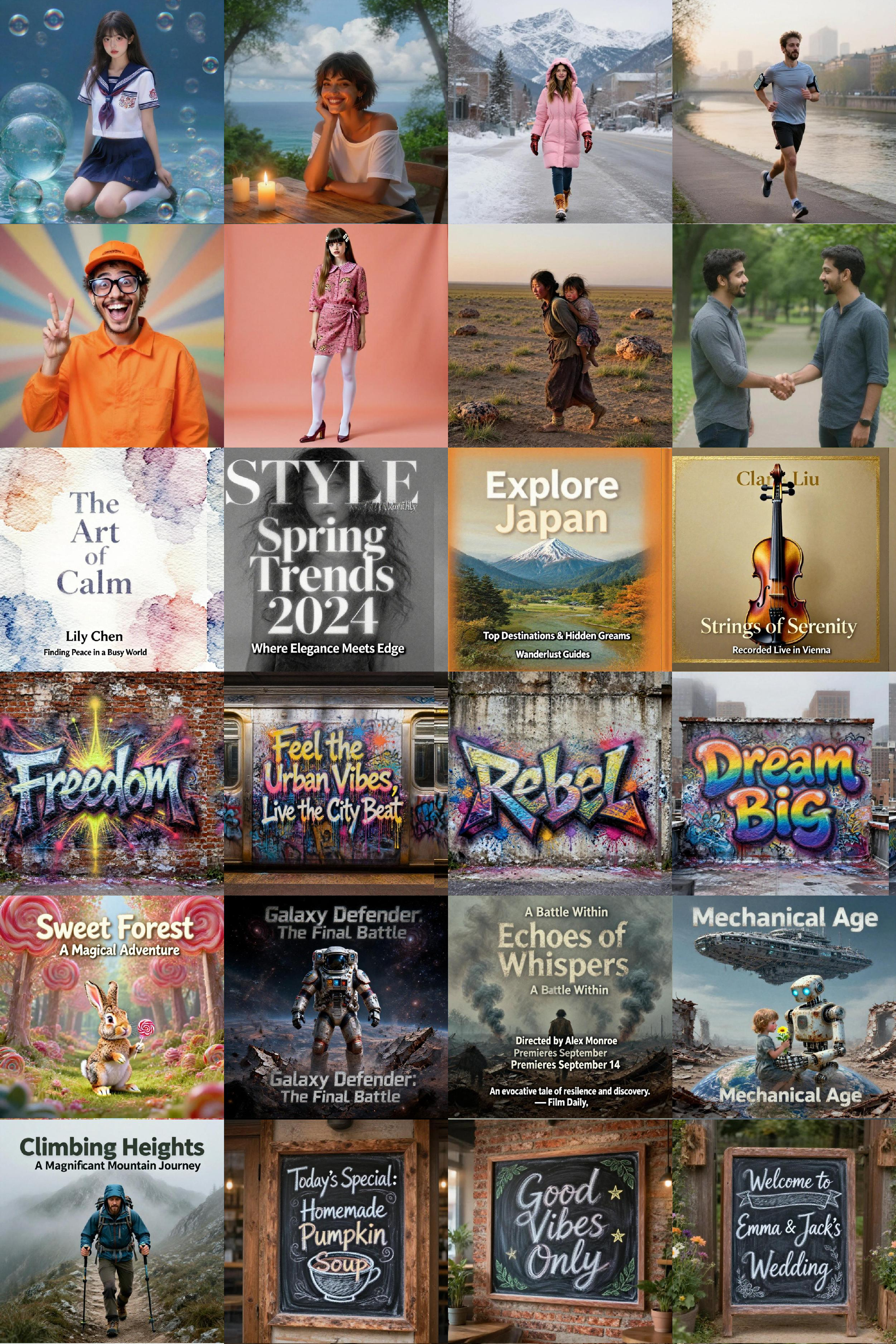}
\caption{\textbf{\method (Z-Image), 2~NFE.} Text rendering and portrait generation.}
\label{fig:qualitative_showcase_1}
\end{figure}

\clearpage

\section{Background}
\label{sec:background}

\paragraph{Continuous Generative Models.}
Diffusion and flow-matching models transform noise into data through repeated network evaluations~\citep{ho2020denoising,song2020score,lipman2022flow}. Distillation learns a shorter process~\citep{salimans2022progressive,song2023consistency}. We use rectified flow to define the generator and then explain the DMD supervision used to train it.

\paragraph{Rectified Flow and Few-Step Generation.}
Let $p_z=\cN(\mathbf{0},\mI_d)$ be the noise distribution and $p_r$ the reference-data distribution. Rectified flow uses the path $\xx_{r,t}=(1-t)\xx_r+t\mepsilon$, with independently sampled $\xx_r\sim p_r$ and $\mepsilon\sim p_z$~\citep{liu2022flow}. Thus $t=0$ denotes clean data and $t=1$ pure noise.
A network $\mmF_\theta(\xx_{r,t},t,\cc)$ regresses the pair velocity $\mepsilon-\xx_r$, where $\cc\in\cC$ is a text-conditioning vector. At the squared-error optimum it recovers the marginal velocity, the conditional average of pair velocities at the same noisy input and condition.
Few-step generation integrates this field from $t=1$ to $t=0$ with a small number of forward-Euler steps. At one network evaluation, the generator is $G_\theta(\zz,\cc)=\zz-\mmF_\theta(\zz,1,\cc)$ for $\zz\sim p_z$.

\paragraph{Distribution Matching Distillation.}
The generator induces a distribution $p_\theta$. DMD aims to match it to $p_r$ through the reverse-KL gradient~\citep{wang2023prolificdreamer,yin2024one}. At a fixed condition and noise level $t\in(0,1)$, let $p_{\theta,t}$ and $p_{r,t}$ be their noised distributions. A score $\ss_{i,t}=\nabla_{\xx_t}\log p_{i,t}$ points locally toward increasing log density. Evaluated at the same noised model sample, $\ss_{r,t}-\ss_{\theta,t}$ points toward a lower model-to-reference density ratio in~\secref{sec:canonical_field}.
The reference score is supplied by a frozen pretrained teacher. Since the generator provides samples but no tractable score, DMD trains a separate \emph{fake-score network} on its outputs. This network estimates the generated distribution; the generator itself produces the images. The fake-score estimate must track the changing generator. The reference score guides samples toward regions supported by the target, while the model score accounts for the generator's current concentration of probability mass. Both enter the distributional update, so the model score must still be learned when a separate network is removed.

\paragraph{Score--Velocity Identity.}
For independent $\xx\sim p$ and $\mepsilon\sim p_z$, set $\xx_t=(1-t)\xx+t\mepsilon$. For $t\in(0,1)$, the score $\ss_t:=\nabla_{\xx_t}\log p_t$ and marginal velocity $\vv_t^*(\xx_t):=\mathbb E[\mepsilon-\xx\mid\xx_t]$ satisfy
\begin{equation}
  \ss_t(\xx_t)=-\frac{\xx_t+(1-t)\vv_t^*(\xx_t)}{t}\,.
  \label{eq:score_velocity}
\end{equation}
Thus a velocity estimate also provides a score estimate in~\appref{app:score_velocity}. Subtracting two velocities at the same input gives a scaled score difference.

\paragraph{Internal Score Estimation.}
TwinFlow and APEX use two internally separated velocity branches of one generator to estimate the reference and model scores~\citep{cheng2025twinflow,liu2026self}. This removes the separate fake-score network while retaining model-score learning. Following APEX, we use this construction for DMD in~\secref{sec:micro}. We next examine the statistics represented by its target field to motivate additional supervision from the generator's features.

\section{A Distribution-Gradient View}
\label{sec:unified}

The score difference in DMD defines a direction for changing the generated distribution. Separating the contribution of means and covariances from the remaining score structure clarifies what moment matching can constrain and why the full DMD correction is still needed.

\subsection{Reverse-KL Field}
\label{sec:setup}
\label{sec:canonical_field}

The one-step generator maps $\zz\sim p_z$ to $\xx_\theta=G_\theta(\zz,\cc)$, inducing $p_\theta(\cdot\mid\cc)$; $p_r(\cdot\mid\cc)$ is the reference law. Here $\xx$ is a sample in the generator's output representation, and conditions follow $p_c$ independently of $\zz$.
For differentiable positive densities, define $\ss_i(\xx\mid\cc):=\nabla_\xx\log p_i(\xx\mid\cc)$, $i\in\{r,\theta\}$. At fixed $\cc$, the reverse KL is
\begin{equation}
  \cD_{\mathrm{KL}}(p_\theta\|p_r)=\int p_\theta(\xx\mid\cc)\log\frac{p_\theta(\xx\mid\cc)}{p_r(\xx\mid\cc)}\,\dm\xx.
\end{equation}
Its sample-space distribution gradient is
\begin{equation}
  \gv_{\mathrm{KL}}(\xx\mid\cc)=\ss_\theta(\xx\mid\cc)-\ss_r(\xx\mid\cc).
  \label{eq:reverse_kl_field}
\end{equation}
The descent direction is $-\gv_{\mathrm{KL}}=\ss_r-\ss_\theta$, pointing locally toward a lower model-to-reference density ratio. It changes as the generator distribution changes.
Applying the same identity to $p_{\theta,t}$ and $p_{r,t}$ gives $\gv_{\mathrm{KL},t}=\ss_{\theta,t}-\ss_{r,t}$, the noisy field estimated in DMD. We state the identities below for clean laws under the regularity assumptions in \appref{app:unified_grad}; the decomposition also applies to the noisy laws at each fixed $t$, with their own means, covariances, and score residuals.

\begin{figure}[!t]
  \centering
  \includegraphics[width=\linewidth]{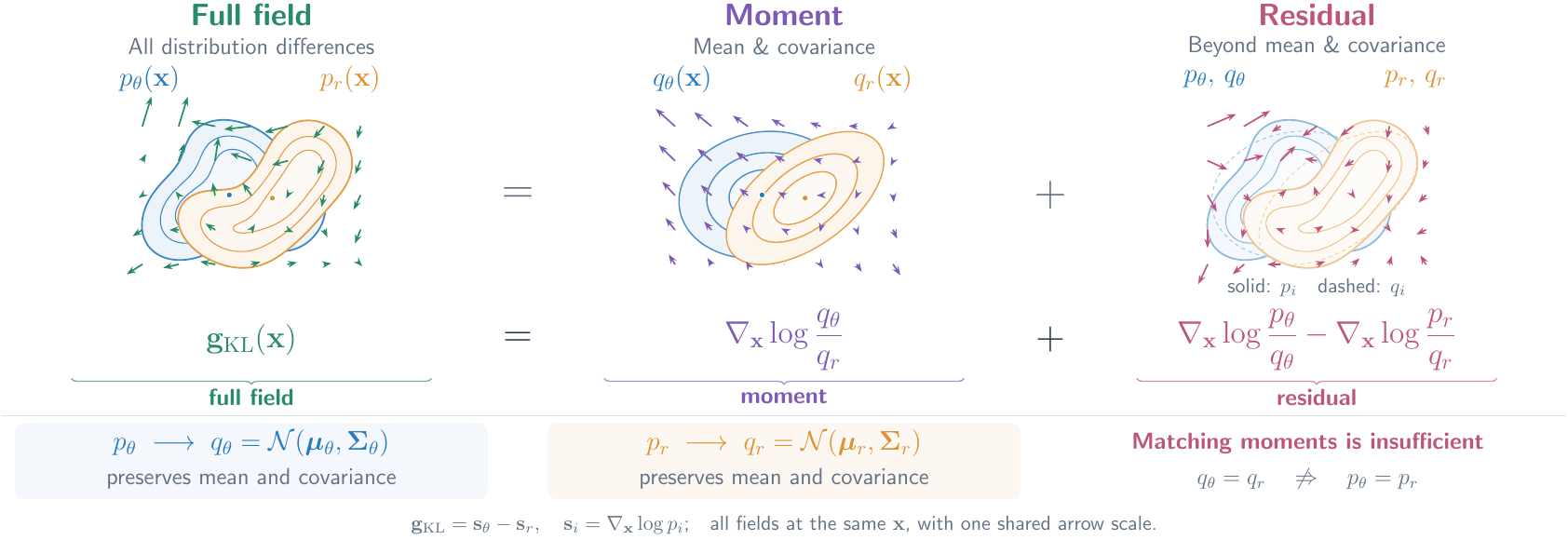}
  \caption{\textbf{Reverse-KL Field Decomposition.}
  The full field $\gv_{\mathrm{KL}}=\ss_\theta-\ss_r$ (\textbf{left}) is the sum of a Gaussian moment component (\textbf{middle}) and a non-Gaussian residual component (\textbf{right}). Solid contours show $p_i$ and dashed contours its moment-matched Gaussian $q_i$, for $i\in\{r,\theta\}$.
  Equal means and covariances eliminate the Gaussian component but can leave a nonzero residual. All fields use the same evaluation points and arrow scale.}
  \label{fig:decomposition_geometry}
\end{figure}

\subsection{Parameter Gradient}

The field above specifies a sample update. To relate it to training, consider a discrepancy $\cD(p_\theta,p_r)$ with a differentiable first-variation potential, which describes how redistributing probability mass changes $\cD$. Its spatial gradient specifies the field used in the sample update:
\begin{equation}
  \gv_{\cD}(\xx\mid\cc)
  :=\nabla_\xx\frac{\delta\cD(p_\theta,p_r)}{\delta p_\theta}(\xx\mid\cc).
  \label{eq:distribution_gradient}
\end{equation}
Under the assumptions in \appref{app:unified_grad},
\begin{equation}
  \nabla_\theta\cD(p_\theta,p_r)
  =\EEb{\zz\sim p_z}{
    \underbrace{[J_\theta G_\theta(\zz,\cc)]^\top}_{\text{Jacobian transpose}}
    \underbrace{\gv_{\cD}(G_\theta(\zz,\cc)\mid\cc)}_{\text{distribution gradient}}
  }\,.
  \label{eq:universal_grad}
\end{equation}
Here $J_\theta G_\theta:=\partial G_\theta/\partial\theta$; backpropagation applies its transpose, and training averages over $\cc$. For reverse KL, the first variation is $\log(p_\theta/p_r)+1$, recovering Eq.~\ref{eq:reverse_kl_field}. Thus the score difference gives the distributional direction that is passed back through the generator. This separates the distributional change that reduces the discrepancy from the parameter update that realizes this change. The decomposition below concerns the former and does not depend on a particular generator architecture.

\subsection{Decomposition of the Reverse-KL Field}
\label{sec:decomposition}

To identify which statistics this field contains, let $p$ be a smooth positive density with finite second moment, mean $\mu_p$, and positive-definite covariance $\Sigma_p$. Its moment-matched Gaussian is $q_p:=\cN(\mu_p,\Sigma_p)$, with score $\ss_{q_p}(\xx)=-\Sigma_p^{-1}(\xx-\mu_p)$. The score of $p$ decomposes exactly as
\begin{equation}
  \ss_p(\xx)
  =\underbrace{\ss_{q_p}(\xx)}_{=-\Sigma_p^{-1}(\xx-\mu_p)}
  +\underbrace{\boldsymbol{\eta}_p(\xx)}_{\text{non-Gaussian score residual}}\,,
  \label{eq:score_decomp_main}
\end{equation}
where $\boldsymbol{\eta}_p:=\ss_p-\ss_{q_p}$ contains the score structure beyond this Gaussian approximation~\citep{wang2023hidden,wang2024unreasonable}.
At fixed $\cc$, use $q_i:=q_{p_i}$ and $\boldsymbol{\eta}_i:=\boldsymbol{\eta}_{p_i}$ for the conditional laws $p_i(\cdot\mid\cc)$, $i\in\{r,\theta\}$. Subtracting their decompositions gives
\begin{equation}
  \gv_{\mathrm{KL}}(\xx\mid\cc)
  =
  \underbrace{\left(\ss_{q_\theta}(\xx\mid\cc)-\ss_{q_r}(\xx\mid\cc)\right)}_{\gv_{\mathrm{KL}}^{\mathrm{G}}:\;\text{Gaussian moment component}}
  +
  \underbrace{\left(\boldsymbol{\eta}_\theta(\xx\mid\cc)-\boldsymbol{\eta}_r(\xx\mid\cc)\right)}_{\gv_{\mathrm{KL}}^{\mathrm{nG}}:\;\text{non-Gaussian residual component}}.
  \label{eq:grad_decomp_main}
\end{equation}
The Gaussian component depends only on means and covariances. Two distributions can share these moments yet have different shapes, leaving a nonzero residual in~\figref{fig:decomposition_geometry}. Replacing $p_i$ by $p_{i,t}$ gives the same decomposition for the noisy field used in DMD.

Under the integrability and boundary conditions in \appref{app:orthogonality},
\begin{equation}
  \mathbb E_p[\boldsymbol{\eta}_p(\xx)]=\mathbf{0},
  \qquad
  \mathbb E_p[\boldsymbol{\eta}_p(\xx)\xx^\top]=\mathbf{0}_{d\times d}.
  \label{eq:orthogonality_main}
\end{equation}
Each score residual is orthogonal to constant and linear functions under its own distribution; their difference need not be orthogonal under either distribution.
The moment-matched Gaussian is used to analyze the score; the generated samples are not assumed to follow a Gaussian distribution. The residual does not identify a single higher-order moment: it collects all score structure left after subtracting the Gaussian score for the corresponding distribution.

\paragraph{Implication for training.}
Exact DMD already contains both components. In practice, its learned score estimates and finite training need not align all statistics accurately. The decomposition identifies moments as quantities that can be constrained explicitly, while the residual explains why moment matching alone is insufficient.
We therefore keep the full DMD correction and add a loss on feature means and standard deviations. Features from the generator's EMA copy provide a shared representation without introducing an external encoder. This is an additional feature-space objective motivated by the decomposition, not a separate estimate of its sample-space Gaussian or residual component. The ablations in \secref{sec:exp_ablation} test whether this explicit constraint improves generation.

\FloatBarrier

\section{Intrinsic Distribution Matching}
\label{sec:combined}
Intrinsic Distribution Matching (\method) retains internal DMD supervision and adds the feature-moment constraint motivated in \secref{sec:decomposition}. The DMD loss $\cL_{\text{score}}$ uses the full internal score-difference estimate; $\cL_{\text{pop}}$ explicitly matches statistics of generator EMA features. We first define how the DMD correction supervises velocity and endpoint predictions, then describe the feature constraint, the empirical moment estimates, and the gradients used in joint training.
\subsection{DMD Loss}
\label{sec:micro}
At a fixed condition $\cc$, let $p_{r,t}$ and $p_{\theta,t}$ denote the reference and model marginals under the linear noising construction of \secref{sec:background}, with scores $\ss_{r,t},\ss_{\theta,t}$ and exact marginal velocities $\vv^*_{r,t},\vv^*_{\theta,t}$.
Applying Eq.~\ref{eq:score_velocity} to both at the same point $\xx_t$ gives, for $t\in(0,1)$:
\begin{equation}
  \dd_t^*(\xx_t)
  := \vv^*_{\theta,t}(\xx_t) - \vv^*_{r,t}(\xx_t)
  = \frac{t}{1{-}t}\left(\ss_{r,t}(\xx_t) - \ss_{\theta,t}(\xx_t)\right)\!,
  \label{eq:d_micro}
\end{equation}
so $\gv_{\mathrm{KL},t}:=\ss_{\theta,t}-\ss_{r,t}=-\frac{1-t}{t}\,\dd_t^*$ is the reverse-KL field on these noisy marginals in Eq.~\ref{eq:reverse_kl_field}.
\paragraph{Internal velocity branches.}

Following APEX~\citep{liu2026self}, we separate the reference and model velocity branches by negating the condition vector:
\begin{equation}
  \widehat\vv_{r,t}(\xx_t\mid\cc):=\mmF_\theta(\xx_t,t,\cc),\qquad
  \widehat\vv_{\theta,t}(\xx_t\mid\cc):=\mmF_\theta(\xx_t,t,-\cc),
  \label{eq:branch_instantiation}
\end{equation}
Here $-\cc$ selects the model branch for condition $\cc$, and $\widehat\dd_t(\xx_t\mid\cc):=\widehat\vv_{\theta,t}(\xx_t\mid\cc)-\widehat\vv_{r,t}(\xx_t\mid\cc)$ is their pointwise difference.
This interpretation requires both branches to approximate their marginal velocities.
\paragraph{Velocity and endpoint regression.}
The same DMD correction can supervise velocity predictions or their corresponding clean-sample endpoints. We compare these regression forms and their combinations in \secref{sec:exp_ablation} to select the training configuration.

For fixed input $\xx_t$ at time $t\in(0,1]$, the endpoint prediction is $\widehat\xx_0(\xx_t,t,\cc):=\xx_t-t\mmF_\theta(\xx_t,t,\cc)$.
At $t=1$ and $\xx_t=\zz$, this equals $G_\theta(\zz,\cc)$; predictions at intermediate times need not follow the one-step output law $p_\theta$.
Let $\sg$ denote stop-gradient. Set $\widehat\xx_0^{\sg}:=\sg(\widehat\xx_0(\xx_t,t,\cc))$ and re-noise it as $\widehat\xx_{0,t'}^{\sg}:=(1-t')\widehat\xx_0^{\sg}+t'\mepsilon$, where the re-noising time $t'\in(0,1)$ and $\mepsilon\sim p_z$ are sampled independently of each other and of the regression input.
Evaluate the branch difference $\widehat\dd_{t'}(\widehat\xx_{0,t'}^{\sg}\mid\cc)$ at this re-noised prediction. Its weighted, normalized value is detached to obtain $\widehat\dd_{t'}^{\mathrm{DMD}}$; without processing, the correction is $\sg\!\left(\widehat\dd_{t'}(\widehat\xx_{0,t'}^{\sg}\mid\cc)\right)$.
The DMD targets are
\begin{equation}
\begin{aligned}
\vv_t^{\mathrm{DMD}}
&:=\sg\!\left(\mmF_\theta(\xx_t,t,\cc)-\widehat\dd_{t'}^{\mathrm{DMD}}\right),\\
\xx_0^{\mathrm{DMD}}
&:=\sg\!\left(\xx_t-t\vv_t^{\mathrm{DMD}}\right).
\end{aligned}
\label{eq:dmd_targets}
\end{equation}
Subtracting this correction from the velocity target adds $t\widehat\dd_{t'}^{\mathrm{DMD}}$ to the detached clean-sample prediction. Thus the same correction defines both targets.
The combined squared regression loss, with both targets detached, averages over conditions, regression inputs and times, and re-noising draws:
\begin{equation}
\cL_{\text{score}}(\theta)
:=\mathbb E\!\left[
\underbrace{\|\mmF_\theta(\xx_t,t,\cc)-\vv_t^{\mathrm{DMD}}\|_2^2}_{\ell_v(t)}
+\underbrace{\|\widehat\xx_0(\xx_t,t,\cc)-\xx_0^{\mathrm{DMD}}\|_2^2}_{\ell_x(t)}
\right].
\label{eq:l_score}
\end{equation}
At fixed $(\xx_t,t)$, these squared losses satisfy $\ell_x(t)=t^2\ell_v(t)$, so equal weights give an effective factor $1+t^2$ on velocity regression in~\appref{app:micro_theory}. A separate discussion of conditional-mean regression and covariance appears in \appref{app:var_collapse}.
\begin{figure}[!t]
  \centering
  \includegraphics[width=\linewidth]{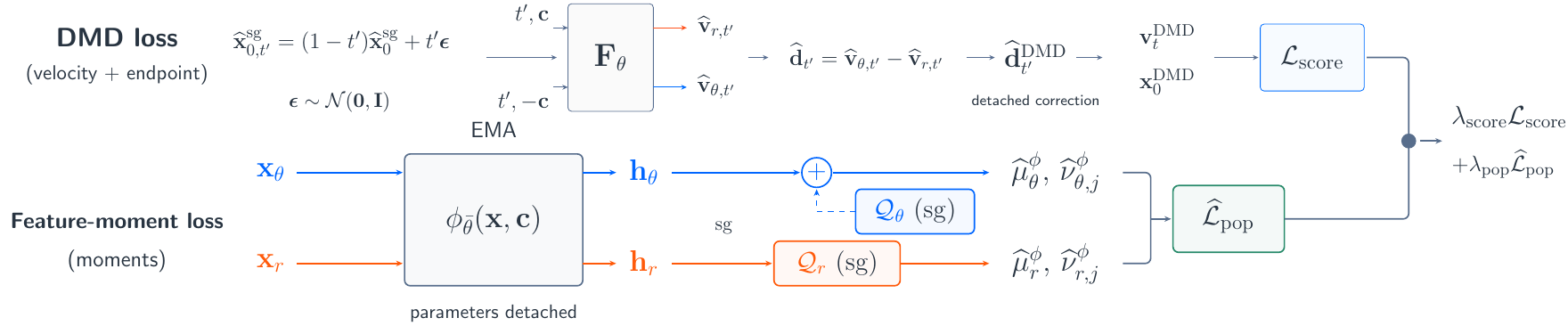}
  \captionsetup{belowskip=16pt}
  \caption{
\textbf{Intrinsic losses of \method.}
\textbf{Top, DMD loss:} re-noise the predicted clean sample and use the internal velocity difference to construct detached velocity and endpoint targets for $\cL_{\text{score}}$ in Eq.~\ref{eq:l_score}.
\textbf{Bottom, feature-moment loss:} match means and per-coordinate standard deviations of EMA features $\hh_i:=\phi_{\bar\theta}(\xx_i,\cc)$ using current model features and detached model/reference queues $\cQ_\theta,\cQ_r$ in Eq.~\ref{eq:moment_loss}.
Base objective omitted.
  }
  \label{fig:method}
\end{figure}

\subsection{Feature-Moment Loss}
\label{sec:macro}

Alongside the learned DMD correction, we compare generated and reference samples in the feature space of the generator's EMA copy. The EMA supplies the representation, while reference samples provide the target statistics for moment matching.
\paragraph{Intrinsic feature map.}
\label{sec:whose_features}

$\phi:=\phi_{\bar\theta}:\R^d\times\cC\to\R^{d_\phi}$ mean-pools the last-layer hidden activations of the generator's EMA copy into one feature vector per sample. Generated and reference samples use the noising construction of \secref{sec:background} at high noise levels, with noising arguments suppressed in $\phi$. Here $\bar\theta$ denotes EMA parameters and $d_\phi$ the feature dimension; each update fixes $\bar\theta$ while retaining input derivatives.
We compare this choice with frozen-encoder~\citep{yang2026fdloss}, teacher-feature~\citep{zhang2026teacher}, and live self-feature alternatives in \secref{sec:exp_ablation}.
\par
\paragraph{Matching feature moments.}
\label{sec:moment_general}

Let $\mu_i^\phi$ and $\nu_{i,j}^\phi$ denote the mean and per-coordinate variance of $\phi(\xx,\cc)$ under the joint law $(\xx,\cc)\sim p_i(\cdot\mid\cc)p_c$, also averaging over feature-extraction noise, for $i\in\{r,\theta\}$. These statistics pool across conditions rather than matching conditional distributions.
We match these pooled statistics with
\begin{equation}
  \cL_{\text{pop}} = \|\mu_\theta^\phi - \mu_r^\phi\|_2^2 + \sum_{j=1}^{d_\phi} \left(\sqrt{\nu_{\theta,j}^\phi} - \sqrt{\nu_{r,j}^\phi}\right)^2\!.
  \label{eq:moment_loss}
\end{equation}
This is the squared 2-Wasserstein distance between the diagonal Gaussians $\cN(\mu_i^\phi,\diag(\nu_i^\phi))$; it is non-negative and vanishes exactly when their feature means and per-coordinate standard deviations match. At the feature level, the mean term shifts the average toward the reference mean, while the standard-deviation term adjusts the spread around that average. These roles are explicit in the empirical gradient in \propref{prop:mt_grad}. Their effect on images is mediated by the fixed EMA feature Jacobian and the generator, so the objective does not prescribe a pixel-wise correction.
A nonlinear feature map can capture properties not determined by raw-sample means and covariances. Matching its moments therefore constrains statistics in a learned representation. The loss still uses only pooled feature means and per-coordinate standard deviations; it does not estimate the sample-space Gaussian component or match the full feature distribution.
\par
Queues $\cQ_\theta$ and $\cQ_r$ store up to $K=16384$ detached model and reference features, respectively. Model moments combine $B$ current-batch features with $\cQ_\theta$; reference moments use $\cQ_r$, giving $B+K$ and $K$ features with full queues. These moments define $\widehat{\cL}_{\text{pop}}$.
Queue entries come from earlier model and EMA states, so the moments approximate current population statistics. \propref{prop:mt_grad} gives the exact empirical-loss gradient at fixed queues in~\appref{app:macro_theory}.

\subsection{Training Objective}
\label{sec:objective}
We use the RCGM loss~\citep{sun2025anystep}, which supervises generation across time intervals through recursive consistent velocity-field estimation, as $\cL_{\text{base}}$. The main generator objective is
\begin{equation}
\cL_{\text{\method}}(\theta)
=\cL_{\text{base}}(\theta)
+\lambda_{\text{score}}\cL_{\text{score}}(\theta)
+\lambda_{\text{pop}}\widehat{\cL}_{\text{pop}} .
\label{eq:l_idm}
\end{equation}
Each iteration generates current samples and extracts their high-noise EMA features. The current features retain their computation graph and are combined with the detached queues to evaluate $\widehat{\cL}_{\text{pop}}$. Its gradient passes through the fixed EMA network to the current samples and generator; queued features receive no gradients. Across iterations, we refresh the queues with detached features and update the EMA as $\bar\theta\leftarrow0.99\bar\theta+0.01\theta$. The weights $\lambda_{\text{score}}$ and $\lambda_{\text{pop}}$ remain fixed in~\secref{sec:exp_setup}. Feature extraction and queues are used only during training.
Freezing the EMA parameters does not freeze the current features: their values change with the generated images. Detaching these features would also remove the moment-loss gradient to the generator. The detached queues provide historical statistics without retaining computation graphs from previous iterations.
Auxiliary flow matching $\cL_{\text{fake}}$ trains the internal model-velocity branch on detached generated samples and also updates the shared parameters in~\appref{app:internal_dmd_training}.
One-step inference generates $\xx_\theta=G_\theta(\zz,\cc)$ in one forward pass; the same checkpoint also supports few-step sampling.

\section{Experiments}
\label{sec:exp}
Our experiments assess the added value of $\cL_{\text{pop}}$ over internal DMD, the suitability of generator EMA features relative to an external encoder, and the choice of velocity and endpoint regression. System-level comparisons evaluate the complete training setting; controlled ablations examine these individual choices under the shared data, backbone, and sampling budget. Throughout this section, $\cL_{\text{pop}}$ refers to the empirical loss $\widehat{\cL}_{\text{pop}}$.

\begin{table*}[t!]
\centering
\caption{\textbf{System-level comparison of efficiency and quality.}
Speeds are on a single A100 (\textbf{BF16}).
Throughput is samples/s (batch=10); latency is seconds (batch=1).
GenEval measures compositional prompt following; FID/CLIP complement the efficiency comparison. LongText results appear in \protect\tabref{tab:ablation_summary_2nfe}.
\dag~indicates that distinct models are required for different NFEs.
\ddag~marks a noncanonical operating point or a secondary comparison source.
\S~indicates that the LoRA weights are merged into the base weights before timing.
\textbf{Params:} backbone size in billions; LoRA entries additionally give rank $r$ and trainable parameters in parentheses. Blue/red backbone counts indicate full/LoRA tuning, respectively.
}
\small
\vspace{-1em}
\label{tab:main_comparison}{
\resizebox{\linewidth}{!}{
\begin{tabular}{l|l|ccc|cccc}
\toprule
\multicolumn{2}{c|}{\multirow{2}{*}{\textbf{Methods}}} & \multirow{2}{*}{\textbf{NFEs}}                    & \textbf{Throughput}   & \textbf{Latency}         & \textbf{Params}          & \multirow{2}{*}{\textbf{FID~$\downarrow$}}                                                                      & \multirow{2}{*}{\textbf{CLIP~$\uparrow$}} & \multirow{2}{*}{\textbf{GenEval~$\uparrow$}}                                     \\
\multicolumn{2}{c|}{}                                  &                                                   & \textbf{(samples/s)}  & \textbf{(s)}             & \textbf{(B)}             &                                                                                                                 &                                           &                                                                                  \\
\midrule
\multicolumn{1}{c|}{\multirow{39}{*}{\rotatebox{90}{\textbf{Few-Step Generative Models}}}}
& SDXL-LCM~\cite{luo2023latent}                     & 2                     & 2.89                     & 0.40                     & \textcolor{color3}{\textbf{0.9}}                                                                                & 18.11                                     & 27.51                                        & 0.44                              \\
& PixArt-LCM~\cite{chen2024pixartdelta}             & 2                     & 3.52                     & 0.31                     & \textcolor{color3}{\textbf{0.6}}                                                                                & 10.33                                     & 27.24                                        & 0.42                              \\
& SD3.5-Turbo~\citep{esser2024scaling}\textsuperscript{\ddag} & 2                     & 1.61                     & 0.68                     & \textcolor{color3}{\textbf{8.0}}                                                                                & 51.47                                     & 25.59                                        & 0.53                              \\
& PCM~\cite{wang2024phased}\dag                     & 2                     & 2.62                     & 0.56                     & \textcolor{color3}{\textbf{0.9}}                                                                                & 14.70                                     & 27.66                                        & 0.55                              \\
& SDXL-DMD2~\cite{yin2024improved}\dag              & 2                     & 2.89                     & 0.40                     & \textcolor{color3}{\textbf{0.9}}                                                                                & \textit{7.61}                             & 28.87                                        & 0.58                              \\
& FLUX-schnell~\citep{FLUX}                         & 2                     & 0.92                     & 1.15                     & \textcolor{color3}{\textbf{12.0}}                                                                               & 7.75                                      & 28.25                                        & \textit{0.71}                     \\
& Sana-Sprint~\citep{chen2025sana}                  & 2                     & 6.46                     & 0.25                     & \textcolor{color3}{\textbf{0.6}}                                                                                & \underline{6.54}                          & \textit{28.40}                               & \underline{0.76}                  \\
& Sana-Sprint~\citep{chen2025sana}                  & 2                     & 5.68                     & 0.24                     & \textcolor{color3}{\textbf{1.6}}                                                                                & 6.50                                      & \underline{28.45}                            & 0.77                              \\
& Qwen-Image-Lightning~\citep{Qwen-Image-Lightning}\textsuperscript{\ddag} & 2                     & 3.15                     & 0.48                     & \textcolor{color5}{\textbf{20.0}} (\textcolor{color2}{r=64},\textcolor{color3}{\textbf{0.4}})\textsuperscript{\S}                     & 6.76                                      & 28.37                                        & 0.85                              \\
& TwinFlow (Qwen-Image)~\citep{cheng2025twinflow}   & 2                     & 3.15                     & 0.48                     & \textcolor{color5}{\textbf{20.0}} (\textcolor{color2}{r=64},\textcolor{color3}{\textbf{0.4}})\textsuperscript{\S}                     & --                                        & --                                           & 0.87                              \\
& APEX (Qwen-Image)~\citep{liu2026self}   & 2 & 3.21 & 0.49 & \textcolor{color5}{\textbf{20.0}} (\textcolor{color2}{r=32},\textcolor{color3}{\textbf{0.2}})\textsuperscript{\S} & 6.72                  & \textit{28.71}           & 0.87          \\
& APEX (Qwen-Image)~\citep{liu2026self}   & 2 & 3.17 & 0.47 & \textcolor{color5}{\textbf{20.0}} (\textcolor{color2}{r=64},\textcolor{color3}{\textbf{0.4}})\textsuperscript{\S} & 6.51                  & \textit{28.42}           & 0.89          \\
& APEX (Qwen-Image)~\citep{liu2026self}   & 2 & 3.30 & 0.45 & \textcolor{color3}{\textbf{20.0}}                                                             & 6.44                  & 28.51                    & \textbf{0.90} \\
\cmidrule{2-9}
& \bf \cellcolor{blue!10}\method (Z-Image) & \cellcolor{blue!10} 2 & \cellcolor{blue!10} 5.18 & \cellcolor{blue!10} 0.30 & \cellcolor{blue!10} \textcolor{color5}{\textbf{6.0}} (\textcolor{color2}{r=32},\textcolor{color3}{\textbf{0.2}})\textsuperscript{\S}              & \cellcolor{blue!10} 6.98   & \cellcolor{blue!10} 28.41   & \cellcolor{blue!10} 0.76 \\
& \bf \cellcolor{blue!10}\method (Z-Image) & \cellcolor{blue!10} 2 & \cellcolor{blue!10} 5.18 & \cellcolor{blue!10} 0.30 & \cellcolor{blue!10} \textcolor{color5}{\textbf{6.0}} (\textcolor{color2}{r=64},\textcolor{color3}{\textbf{0.4}})\textsuperscript{\S}              & \cellcolor{blue!10} 6.79   & \cellcolor{blue!10} 28.13   & \cellcolor{blue!10} 0.77 \\
& \bf \cellcolor{blue!10}\method (Z-Image)             & \cellcolor{blue!10} 2 & \cellcolor{blue!10} 5.18 & \cellcolor{blue!10} 0.30 & \cellcolor{blue!10} \textcolor{color3}{\textbf{6.0}}                                         & \cellcolor{blue!10} 6.71   & \cellcolor{blue!10} 28.22   & \cellcolor{blue!10} 0.78 \\
& \bf \cellcolor{blue!10}\method (Qwen-Image) & \cellcolor{blue!10} 2 & \cellcolor{blue!10} 3.28 & \cellcolor{blue!10} 0.44 & \cellcolor{blue!10} \textcolor{color5}{\textbf{20.0}} (\textcolor{color2}{r=32},\textcolor{color3}{\textbf{0.2}})\textsuperscript{\S} & \cellcolor{blue!10} 6.62   & \cellcolor{blue!10} 28.76   & \cellcolor{blue!10} 0.89 \\
& \bf \cellcolor{blue!10}\method (Qwen-Image) & \cellcolor{blue!10} 2 & \cellcolor{blue!10} 3.28 & \cellcolor{blue!10} 0.44 & \cellcolor{blue!10} \textcolor{color5}{\textbf{20.0}} (\textcolor{color2}{r=64},\textcolor{color3}{\textbf{0.4}})\textsuperscript{\S} & \cellcolor{blue!10} 6.41   & \cellcolor{blue!10} 28.47   & \cellcolor{blue!10} 0.90 \\
& \bf \cellcolor{blue!10}\method (Qwen-Image)                         & \cellcolor{blue!10} 2 & \cellcolor{blue!10} 3.28 & \cellcolor{blue!10} 0.44 & \cellcolor{blue!10} \textcolor{color3}{\textbf{20.0}}                                          & \cellcolor{blue!10} 6.34   & \cellcolor{blue!10} 28.56   & \cellcolor{blue!10} 0.91 \\
\cmidrule{2-9}
& SDXL-LCM~\cite{luo2023latent}                     & 1                     & 3.36                     & 0.32                     & \textcolor{color3}{\textbf{0.9}}                                                                                & 50.51                                     & 24.45                                        & 0.28                              \\
& PixArt-LCM~\cite{chen2024pixartdelta}             & 1                     & 4.26                     & 0.25                     & \textcolor{color3}{\textbf{0.6}}                                                                                & 73.35                                     & 23.99                                        & 0.41                              \\
& PixArt-DMD~\citep{chenpixartsigma,yin2024one}\dag & 1                     & 4.26                     & 0.25                     & \textcolor{color3}{\textbf{0.6}}                                                                                & 9.59                                      & 26.98                                        & 0.45                              \\
& SD3.5-Turbo~\citep{esser2024scaling}\textsuperscript{\ddag} & 1                     & 2.48                     & 0.45                     & \textcolor{color3}{\textbf{8.0}}                                                                                & 52.40                                     & 25.40                                        & 0.51                              \\
& PCM~\cite{wang2024phased}\dag                     & 1                     & 3.16                     & 0.40                     & \textcolor{color3}{\textbf{0.9}}                                                                                & 30.11                                     & 26.47                                        & 0.42                              \\
& SDXL-DMD2~\cite{yin2024improved}\dag              & 1                     & 3.36                     & 0.32                     & \textcolor{color3}{\textbf{0.9}}                                                                                & \underline{7.10}                          & 28.93                                        & 0.59                              \\
& FLUX-schnell~\citep{FLUX}                         & 1                     & 1.58                     & 0.68                     & \textcolor{color3}{\textbf{12.0}}                                                                               & \textit{7.26}                             & \underline{28.49}                            & \textit{0.69}                     \\
& Sana-Sprint~\citep{chen2025sana}                  & 1                     & 7.22                     & 0.21                     & \textcolor{color3}{\textbf{0.6}}                                                                                & 7.04                                      & 28.04                                        & 0.72                              \\
& Sana-Sprint~\citep{chen2025sana}                  & 1                     & 6.71                     & 0.21                     & \textcolor{color3}{\textbf{1.6}}                                                                                & 7.69                                      & \textit{28.27}                               & 0.76                              \\
& Qwen-Image-Lightning~\citep{Qwen-Image-Lightning}\textsuperscript{\ddag} & 1                     & 3.29                     & 0.40                     & \textcolor{color5}{\textbf{20.0}} (\textcolor{color2}{r=64},\textcolor{color3}{\textbf{0.4}})\textsuperscript{\S}                     & 7.06                                      & 28.35                                        & 0.85                              \\
& TwinFlow (Qwen-Image)~\citep{cheng2025twinflow}                & 1                     & 3.29                     & 0.40                     & \textcolor{color5}{\textbf{20.0}} (\textcolor{color2}{r=64},\textcolor{color3}{\textbf{0.4}})\textsuperscript{\S}                     & --                                        & --                                           & 0.86                              \\
& APEX (Qwen-Image)~\citep{liu2026self}   & 1 & 3.29 & 0.39 & \textcolor{color5}{\textbf{20.0}} (\textcolor{color2}{r=32},\textcolor{color3}{\textbf{0.2}})\textsuperscript{\S} & 7.22                  & \textit{28.62}           & 0.88          \\
& APEX (Qwen-Image)~\citep{liu2026self}   & 1 & 3.27 & 0.39 & \textcolor{color5}{\textbf{20.0}} (\textcolor{color2}{r=64},\textcolor{color3}{\textbf{0.4}})\textsuperscript{\S} & 7.14                  & \textit{28.45}           & \textbf{0.89} \\
& APEX (Qwen-Image)~\citep{liu2026self}   & 1 & 3.50 & 0.34 & \textcolor{color3}{\textbf{20.0}}                                                             & 6.87                  & 28.66                    & \textbf{0.89} \\
\cmidrule{2-9}
& \bf \cellcolor{blue!10}\method (Z-Image) & \cellcolor{blue!10} 1 & \cellcolor{blue!10} 5.41 & \cellcolor{blue!10} 0.25 & \cellcolor{blue!10} \textcolor{color5}{\textbf{6.0}} (\textcolor{color2}{r=32},\textcolor{color3}{\textbf{0.2}})\textsuperscript{\S}              & \cellcolor{blue!10} 7.49   & \cellcolor{blue!10} 28.33   & \cellcolor{blue!10} 0.75 \\
& \bf \cellcolor{blue!10}\method (Z-Image) & \cellcolor{blue!10} 1 & \cellcolor{blue!10} 5.41 & \cellcolor{blue!10} 0.25 & \cellcolor{blue!10} \textcolor{color5}{\textbf{6.0}} (\textcolor{color2}{r=64},\textcolor{color3}{\textbf{0.4}})\textsuperscript{\S}              & \cellcolor{blue!10} 7.40   & \cellcolor{blue!10} 28.16   & \cellcolor{blue!10} 0.76 \\
& \bf \cellcolor{blue!10}\method (Z-Image)                           & \cellcolor{blue!10} 1 & \cellcolor{blue!10} 5.41 & \cellcolor{blue!10} 0.25 & \cellcolor{blue!10} \textcolor{color3}{\textbf{6.0}}                                         & \cellcolor{blue!10} 7.13   & \cellcolor{blue!10} 28.37   & \cellcolor{blue!10} 0.77 \\
& \bf \cellcolor{blue!10}\method (Qwen-Image) & \cellcolor{blue!10} 1 & \cellcolor{blue!10} 3.44 & \cellcolor{blue!10} 0.35 & \cellcolor{blue!10} \textcolor{color5}{\textbf{20.0}} (\textcolor{color2}{r=32},\textcolor{color3}{\textbf{0.2}})\textsuperscript{\S} & \cellcolor{blue!10} 7.12   & \cellcolor{blue!10} 28.67   & \cellcolor{blue!10} 0.88 \\
& \bf \cellcolor{blue!10}\method (Qwen-Image) & \cellcolor{blue!10} 1 & \cellcolor{blue!10} 3.44 & \cellcolor{blue!10} 0.35 & \cellcolor{blue!10} \textcolor{color5}{\textbf{20.0}} (\textcolor{color2}{r=64},\textcolor{color3}{\textbf{0.4}})\textsuperscript{\S} & \cellcolor{blue!10} 7.04   & \cellcolor{blue!10} 28.50   & \cellcolor{blue!10} 0.89 \\
& \bf \cellcolor{blue!10}\method (Qwen-Image)                         & \cellcolor{blue!10} 1 & \cellcolor{blue!10} 3.44 & \cellcolor{blue!10} 0.35 & \cellcolor{blue!10} \textcolor{color3}{\textbf{20.0}}                                          & \cellcolor{blue!10} 6.77   & \cellcolor{blue!10} 28.71   & \cellcolor{blue!10} 0.90 \\
\bottomrule
\end{tabular}}
\vspace{-2em}
}
\end{table*}

\subsection{Experimental Setup}
\label{sec:exp_setup}
\paragraph{Backbones and tuning.}
We initialize \method from Z-Image-Turbo 6B~\citep{team2025zimage} and Qwen-Image 20B~\citep{wu2025qwen}, using full-parameter tuning and LoRA with ranks 32 and 64. We denote the resulting models by \method (Z-Image) and \method (Qwen-Image); the former uses the Turbo checkpoint rather than the undistilled Z-Image base model.
\paragraph{Training data.}
The reference data defining $p_r$ combine ShareGPT-4o~\citep{chen2025sharegpt}, BLIP-3o~\citep{chen2025blip3}, and samples synthesized with Qwen-Image-2512 and Z-Image-Turbo: 600K from FLUX-Reason-6M prompts~\citep{fang2025flux} and 5M from text-rendering prompts. Compared with the 200K poster-prompt samples used by APEX~\citep{liu2026self}, this larger text-focused collection provides more training examples for dense text rendering.

\paragraph{Training configuration.}
All models use BF16, FlashAttention4, and FSDP2 with global batch size 64.
Training runs for 15K steps on Z-Image-Turbo 6B and 10K steps for Qwen-Image 20B LoRA, using 32$\times$H100 80GB and 8$\times$A100 80GB GPUs.
The default weights are $\lambda_{\text{score}}=1$ and $\lambda_{\text{pop}}=0.5$, fixed throughout training.
Each trained checkpoint is evaluated at both 1 and 2~NFE without changing the training loss weights for either sampling budget.
\paragraph{Evaluation.}
We follow the same standard benchmark protocols used for the APEX comparisons, reporting each method at its stated NFE budget. Published reference results retain their training data and configuration; the internal \method ablations use the shared configuration above.
We evaluate compositional prompt following with GenEval~\citep{ghosh2023geneval}, FID and CLIP score on MJHQ-30K~\citep{li2024playgroundv2.5}, detailed prompt following on DPG-Bench~\citep{hu2024ella}, and world knowledge on WISE~\citep{niu2025wise}.
Z-Image comparisons and ablations additionally report OneIG~\citep{chang2026oneig} and LongText-Bench~\citep{geng2025x}; LongText-CN and LongText-EN measure Chinese and English long-text rendering, respectively.

\FloatBarrier

\subsection{Main Comparison}
\label{sec:exp_main}

\paragraph{Long-text rendering.}
At 2~NFE, \method (Z-Image) achieves 94.25\% on LongText-CN and 90.20\% on LongText-EN, the highest scores among the compared methods at this budget in~\tabref{tab:ablation_summary_2nfe}. These improve over Z-Image-Turbo++ by 2.63 and 0.32 percentage points and over APEX by 3.87 and 1.44 points, respectively. This directly supports the fine-grained text-rendering objective introduced above. The table places external comparisons alongside the controlled ablations, allowing the contribution of each loss to be examined.

\paragraph{General quality and efficiency.}
With full-parameter tuning, \method achieves GenEval scores of 0.78 on Z-Image and 0.91 on Qwen-Image at 2~NFE in~\tabref{tab:main_comparison}. On Qwen-Image under the same NFE and tuning setting, APEX scores 0.90; FID is 6.34 for \method and 6.44 for APEX, with latencies of 0.44 and 0.45 seconds. Thus, the quality improvement comes at comparable sampling latency. At 1~NFE, Qwen-Image \method reaches 0.90 GenEval at 0.35 seconds per sample, giving a faster sampling option with the same trained checkpoint.
\par
The same full-parameter checkpoints improve from 1 to 2~NFE on both backbones: Z-Image FID decreases from 7.13 to 6.71 and Qwen-Image FID from 6.77 to 6.34, while each gains 0.01 GenEval. Thus the learned objectives remain useful when the sampling budget allows an additional network evaluation.
With rank-64 LoRA, Qwen-Image \method trains 0.4B parameters and reaches 0.90 GenEval at 2~NFE, compared with 0.91 under full tuning. This setting retains most of the compositional performance while updating substantially fewer parameters. DPG-Bench and WISE provide complementary evaluations of prompt following and world knowledge. Figure~\ref{fig:nfe_comparison} presents qualitative comparisons across sampling budgets.

\tabref{tab:geneval}, \tabref{tab:wise}, and~\tabref{tab:dpg} provide the per-category results for GenEval, WISE, and DPG-Bench.

\begin{table*}[!t]\centering
\caption{\textbf{Quantitative evaluation on GenEval.} Per-category and overall scores at the NFE shown in each model label; unlabeled published baselines use their reported default sampling budget. Z-Image is the base model ($\sim$100 NFE), whereas Z-Image-Turbo is the 8-NFE distilled model.}
\resizebox{0.8\linewidth}{!}{
\begin{tabular}{l|cccccc|c}
\toprule
\multirow{2}{*}{\textbf{Model}}                          & \textbf{Single} & \textbf{Two} & \multirow{2}{*}{\textbf{Counting}} & \multirow{2}{*}{\textbf{Colors}} & \multirow{2}{*}{\textbf{Position}} & \textbf{Attribute} & \multirow{2}{*}{\textbf{Overall$\uparrow$}} \\
& \bf Object      & \bf Object   &                                    &                                  &                                    & \bf Binding        &                                             \\
\midrule
Show-o~\citep{xie2024show}                               & 0.95            & 0.52         & 0.49                               & 0.82                             & 0.11                               & 0.28               & 0.53                                        \\
Emu3-Gen~\citep{wang2024emu3}                            & 0.98            & 0.71         & 0.34                               & 0.81                             & 0.17                               & 0.21               & 0.54                                        \\
PixArt-$\alpha$~\citep{chen2024pixartalpha}              & 0.98            & 0.50         & 0.44                               & 0.80                             & 0.08                               & 0.07               & 0.48                                        \\
SD3 Medium~\citep{esser2024scaling}                      & 0.98            & 0.74         & 0.63                               & 0.67                             & 0.34                               & 0.36               & 0.62                                        \\
FLUX.1 [Dev]~\citep{flux2024}                            & 0.98            & 0.81         & 0.74                               & 0.79                             & 0.22                               & 0.45               & 0.66                                        \\
SD3.5 Large~\citep{esser2024scaling}\textsuperscript{\ddag} & 0.98         & 0.89         & 0.73                               & 0.83                             & 0.34                               & 0.47               & 0.71                                        \\
JanusFlow~\citep{ma2025janusflow}                        & 0.97            & 0.59         & 0.45                               & 0.83                             & 0.53                               & 0.42               & 0.63                                        \\
Lumina-Image 2.0~\citep{qin2025lumina}                   & -               & 0.87         & 0.67                               & -                                & -                                  & 0.62               & 0.73                                        \\
Janus-Pro-7B~\citep{chen2025janus}                       & 0.99            & 0.89         & 0.59                               & 0.90                             & 0.79                               & 0.66               & 0.80                                        \\
HiDream-I1-Full~\citep{cai2025hidream}                   & \bf 1.00        & 0.98         & 0.79                               & 0.91                             & 0.60                               & 0.72               & 0.83                                        \\
GPT Image 1 [High]~\citep{gptimage}                      & 0.99            & 0.92         & 0.85                               & 0.92                             & 0.75                               & 0.61               & 0.84                                        \\
Seedream 3.0~\citep{gao2025seedream}                     & 0.99            & \bf 0.96     & 0.91                               & \bf 0.93                         & 0.47                               & 0.80               & 0.84                                        \\
BAGEL~\citep{deng2025bagel}                              & 0.98            & 0.95         & 0.84                               & 0.95                             & 0.78                               & 0.77               & 0.88                                        \\
Qwen-Image~\citep{wu2025qwen}                            & 0.99            & 0.92         & 0.89                               & 0.88                             & 0.76                               & 0.77               & \underline{0.87}                            \\
Hyper-BAGEL~\citep{lu2025hyperbagel}                     & 0.97            & 0.86         & 0.75                               & 0.90                             & 0.67                               & 0.62               & 0.80                                        \\
Qwen-Image-Lightning~\citep{Qwen-Image-Lightning,liu2026self}\textsuperscript{\ddag} & 0.99 & 0.89 & 0.85                               & 0.87                             & 0.75                               & 0.76               & 0.85                                        \\
Z-Image~\citep{team2025zimage} (base; $\sim$100 NFE)      & 1.00            & 0.94         & 0.78                               & 0.93                             & 0.62                               & 0.77               & 0.84                                        \\
Z-Image-Turbo~\citep{team2025zimage,liu2026high} & 1.00            & 0.89         & 0.72                               & 0.85                             & 0.46                               & 0.59               & 0.75                                        \\
DMD2~\citep{yin2024improved,liu2026high} (2 NFE) & 1.00           & 0.90         & 0.68                               & 0.84                             & 0.49                               & 0.66               & 0.76                                        \\
TwinFlow~\citep{cheng2025twinflow,liu2026high} (2 NFE) & 1.00       & 0.87         & 0.68                               & 0.85                             & 0.40                               & 0.55               & 0.72                                        \\
Z-Image-Turbo++~\citep{liu2026high} (2 NFE)      & 1.00            & 0.88         & 0.69                               & 0.85                             & 0.49                               & 0.65               & 0.76                                        \\
TwinFlow (Qwen-Image)~\citep{cheng2025twinflow}  (1 NFE)              & 1.00            & 0.91         & 0.84                               & 0.90                             & 0.75                               & 0.74               & 0.86                                        \\
APEX (Qwen-Image) (LoRA\&\textcolor{color2}{r=32}) (1 NFE)    & 0.99            & 0.95         & 0.85                               & 0.90                             & 0.79                               & 0.78               & 0.88                                        \\
APEX (Qwen-Image) (LoRA\&\textcolor{color2}{r=64}) (1 NFE)    & 0.99            & 0.94         & 0.88                               & 0.90                             & \bf 0.85                           & \bf 0.78           & \bf 0.89                                    \\
APEX (Qwen-Image) (\textcolor{color3}{Full}) (1 NFE)           & 0.99            & 0.92         & 0.83                               & 0.91                             & 0.86                               & 0.81               & \textbf{0.89}                               \\
\midrule
\rowcolor{blue!10}
\bf \method (Z-Image) (LoRA\&\textcolor{color2}{r=32}) (2 NFE) & 1.00 & 0.90 & 0.69 & 0.87 & 0.47 & 0.62 & 0.76 \\
\rowcolor{blue!10}
\bf \method (Z-Image) (LoRA\&\textcolor{color2}{r=64}) (2 NFE) & 1.00 & 0.90 & 0.70 & 0.87 & 0.49 & 0.64 & 0.77 \\
\rowcolor{blue!10}
\bf \method (Z-Image) (\textcolor{color3}{Full}) (2 NFE)              & 1.00 & 0.91 & 0.71 & 0.88 & 0.51 & 0.66 & 0.78 \\
\rowcolor{blue!10}
\bf \method (Z-Image) (LoRA\&\textcolor{color2}{r=32}) (1 NFE) & 1.00 & 0.89 & 0.68 & 0.86 & 0.45 & 0.61 & 0.75 \\
\rowcolor{blue!10}
\bf \method (Z-Image) (LoRA\&\textcolor{color2}{r=64}) (1 NFE) & 1.00 & 0.90 & 0.69 & 0.87 & 0.47 & 0.63 & 0.76 \\
\rowcolor{blue!10}
\bf \method (Z-Image) (\textcolor{color3}{Full}) (1 NFE)                               & 1.00 & 0.90 & 0.70 & 0.87 & 0.49 & 0.64 & 0.77 \\
\rowcolor{blue!10}
\bf \method (Qwen-Image) (LoRA\&\textcolor{color2}{r=32}) (2 NFE) & 1.00 & 0.95 & 0.87 & 0.92 & 0.79 & 0.83 & 0.89 \\
\rowcolor{blue!10}
\bf \method (Qwen-Image) (LoRA\&\textcolor{color2}{r=64}) (2 NFE) & 1.00 & 0.95 & 0.88 & 0.92 & 0.81 & 0.84 & 0.90 \\
\rowcolor{blue!10}
\bf \method (Qwen-Image) (\textcolor{color3}{Full}) (2 NFE)                               & 1.00 & 0.96 & 0.89 & 0.93 & 0.83 & 0.86 & 0.91 \\
\rowcolor{blue!10}
\bf \method (Qwen-Image) (LoRA\&\textcolor{color2}{r=32}) (1 NFE) & 1.00 & 0.94 & 0.86 & 0.91 & 0.77 & 0.81 & 0.88 \\
\rowcolor{blue!10}
\bf \method (Qwen-Image) (LoRA\&\textcolor{color2}{r=64}) (1 NFE) & 1.00 & 0.95 & 0.87 & 0.92 & 0.79 & 0.83 & 0.89 \\
\rowcolor{blue!10}
\bf \method (Qwen-Image) (\textcolor{color3}{Full}) (1 NFE)                               & 1.00 & 0.95 & 0.88 & 0.92 & 0.81 & 0.84 & 0.90 \\
\bottomrule
\end{tabular}\label{tab:geneval}
}
\end{table*}

\begin{table*}[!t]
\centering
\caption{\textbf{Quantitative evaluation on WISE~\citep{niu2025wise}.} Per-domain and overall world-knowledge grounding scores.}
\resizebox{0.8\textwidth}{!}{
\begin{tabular}{l|cccccc|c}
\toprule
\textbf{Model}                                             & \bf Cultural & \bf Time & \bf Space & \bf Biology & \bf Physics & \bf Chemistry & \bf Overall$\uparrow$ \\
\midrule
SD v1.5~\citep{rombach2022high}                            & 0.34         & 0.35     & 0.32      & 0.28        & 0.29        & 0.21          & 0.32                  \\
SDXL~\citep{podell2023sdxl}                                & 0.43         & 0.48     & 0.47      & 0.44        & 0.45        & 0.27          & 0.43                  \\
SD3.5-Large~\citep{esser2024scaling}\textsuperscript{\ddag} & 0.44        & 0.50     & 0.58      & 0.44        & 0.52        & 0.31          & 0.46                  \\
PixArt-$\alpha$~\citep{chen2024pixartalpha}                & 0.45         & 0.50     & 0.48      & 0.49        & 0.56        & 0.34          & 0.47                  \\
Playground-v2.5~\citep{li2024playgroundv2.5}               & 0.49         & 0.58     & 0.55      & 0.43        & 0.48        & 0.33          & 0.49                  \\
FLUX.1 [Dev]~\citep{flux2024}                              & 0.48         & 0.58     & 0.62      & 0.42        & 0.51        & 0.35          & 0.50                  \\
Janus~\citep{wu2025janus}                                  & 0.16         & 0.26     & 0.35      & 0.28        & 0.30        & 0.14          & 0.23                  \\
VILA-u-7B-256~\citep{wu2024vila}                           & 0.26         & 0.33     & 0.37      & 0.35        & 0.39        & 0.23          & 0.31                  \\
Show-o~\citep{xie2024show}                                 & 0.28         & 0.40     & 0.48      & 0.30        & 0.46        & 0.30          & 0.35                  \\
Janus-Pro-7B~\citep{chen2025janus}                         & 0.30         & 0.37     & 0.49      & 0.36        & 0.42        & 0.26          & 0.35                  \\
Emu3-Gen~\citep{wang2024emu3}                              & 0.34         & 0.45     & 0.48      & 0.41        & 0.45        & 0.27          & 0.39                  \\
MetaQuery-XL~\citep{pan2025transfer}                       & 0.56         & 0.55     & 0.62      & 0.49        & 0.63        & 0.41          & 0.55                  \\
BAGEL~\citep{deng2025bagel}                                & 0.44         & 0.55     & 0.68      & 0.44        & 0.60        & 0.39          & 0.52                  \\
GPT-Image-1~\citep{gptimage}                               & 0.81         & 0.71     & 0.89      & 0.83        & 0.79        & 0.74          & \textbf{0.80}         \\
TwinFlow (Qwen-Image; 1 NFE)~\citep{cheng2025twinflow}     & --           & --       & --        & --          & --          & --            & 0.54                  \\
APEX (Qwen-Image) (LoRA\&\textcolor{color2}{r=32}) (1 NFE) & --          & --       & --        & --          & --          & --            & --                    \\
APEX (Qwen-Image) (LoRA\&\textcolor{color2}{r=64}) (1 NFE) & --          & --       & --        & --          & --          & --            & --                    \\
APEX (Qwen-Image) (\textcolor{color3}{Full}) (1 NFE)      & 0.53         & 0.54     & 0.66      & 0.48        & 0.61        & 0.41          & 0.54                  \\
\midrule
\rowcolor{blue!10}
\bf \method (Z-Image) (LoRA\&\textcolor{color2}{r=32}) (2 NFE) & 0.45        & 0.46       & 0.56        & 0.41          & 0.52          & 0.37            & 0.46                    \\
\rowcolor{blue!10}
\bf \method (Z-Image) (LoRA\&\textcolor{color2}{r=64}) (2 NFE) & 0.46        & 0.47       & 0.58        & 0.42          & 0.54          & 0.37            & 0.47                    \\
\rowcolor{blue!10}
\bf \method (Z-Image) (\textcolor{color3}{Full}) (2 NFE)    & 0.47           & 0.48       & 0.59        & 0.43          & 0.55          & 0.38            & 0.48                    \\
\rowcolor{blue!10}
\bf \method (Z-Image) (LoRA\&\textcolor{color2}{r=32}) (1 NFE) & 0.43        & 0.44       & 0.55        & 0.40          & 0.51          & 0.36            & 0.45                    \\
\rowcolor{blue!10}
\bf \method (Z-Image) (LoRA\&\textcolor{color2}{r=64}) (1 NFE) & 0.44        & 0.45       & 0.56        & 0.41          & 0.52          & 0.36            & 0.46                    \\
\rowcolor{blue!10}
\bf \method (Z-Image) (\textcolor{color3}{Full}) (1 NFE)    & 0.46           & 0.47       & 0.58        & 0.42          & 0.54          & 0.37            & 0.47                    \\
\rowcolor{blue!10}
\bf \method (Qwen-Image) (LoRA\&\textcolor{color2}{r=32}) (2 NFE) & 0.53     & 0.54       & 0.65        & 0.48          & 0.60          & 0.41            & 0.54                      \\
\rowcolor{blue!10}
\bf \method (Qwen-Image) (LoRA\&\textcolor{color2}{r=64}) (2 NFE) & 0.54     & 0.55       & 0.67        & 0.49          & 0.62          & 0.41            & 0.55                      \\
\rowcolor{blue!10}
\bf \method (Qwen-Image) (\textcolor{color3}{Full}) (2 NFE) & 0.55           & 0.56       & 0.68        & 0.50          & 0.63          & 0.42            & 0.56                      \\
\rowcolor{blue!10}
\bf \method (Qwen-Image) (LoRA\&\textcolor{color2}{r=32}) (1 NFE) & 0.51     & 0.52       & 0.64        & 0.47          & 0.59          & 0.40            & 0.52                      \\
\rowcolor{blue!10}
\bf \method (Qwen-Image) (LoRA\&\textcolor{color2}{r=64}) (1 NFE) & 0.53     & 0.54       & 0.65        & 0.48          & 0.60          & 0.41            & 0.54                      \\
\rowcolor{blue!10}
\bf \method (Qwen-Image) (\textcolor{color3}{Full}) (1 NFE) & 0.54           & 0.55       & 0.67        & 0.49          & 0.62          & 0.41            & 0.55                      \\
\bottomrule
\end{tabular}\label{tab:wise}
}
\end{table*}

\FloatBarrier

\begin{table*}[!t]
\centering
\caption{\textbf{Quantitative evaluation on DPG-Bench.} NFE notation follows \protect\tabref{tab:geneval}.}
\resizebox{0.8\linewidth}{!}{
\begin{tabular}{l|ccccc|c}
\toprule
\textbf{Model}                                             & \bf Global     & \bf Entity     & \bf Attribute  & \bf Relation   & \bf Other      & \bf Overall$\uparrow$ \\
\midrule
SD v1.5~\citep{rombach2022high}                            & 74.63          & 74.23          & 75.39          & 73.49          & 67.81          & 63.18                 \\
PixArt-$\alpha$~\citep{chen2024pixartalpha}                & 74.97          & 79.32          & 78.60          & 82.57          & 76.96          & 71.11                 \\
Lumina-Next~\citep{zhuo2024lumina}                         & 82.82          & 88.65          & 86.44          & 80.53          & 81.82          & 74.63                 \\
SDXL~\citep{podell2023sdxl}                                & 83.27          & 82.43          & 80.91          & 86.76          & 80.41          & 74.65                 \\
Hunyuan-DiT~\citep{li2024hunyuan-dit}                      & 84.59          & 80.59          & 88.01          & 74.36          & 86.41          & 78.87                 \\
Janus~\citep{wu2025janus}                                  & 82.33          & 87.38          & 87.70          & 85.46          & 86.41          & 79.68                 \\
PixArt-$\Sigma$~\citep{chen2024pixartsigma}                & 86.89          & 82.89          & 88.94          & 86.59          & 87.68          & 80.54                 \\
Emu3-Gen~\citep{wang2024emu3}                              & 85.21          & 86.68          & 86.84          & 90.22          & 83.15          & 80.60                 \\
Janus-Pro-1B~\citep{chen2025janus}                         & 87.58          & 88.63          & 88.17          & 88.98          & 88.30          & 82.63                 \\
DALL-E 3~\citep{Dalle-3}                                   & 90.97          & 89.61          & 88.39          & 90.58          & 89.83          & 83.50                 \\
FLUX.1 [Dev]~\citep{flux2024}                              & 74.35          & 90.00          & 88.96          & 90.87          & 88.33          & 83.84                 \\
SD3 Medium~\citep{esser2024scaling}                       & 87.90          & 91.01          & 88.83          & 80.70          & 88.68          & 84.08                 \\
SD3.5-Turbo~\citep{esser2024scaling}\textsuperscript{\ddag} & 79.03        & 80.12          & 86.13          & 84.73          & 91.86          & 78.29                 \\
SD3.5-Large~\citep{esser2024scaling}\textsuperscript{\ddag} & 83.21        & 84.27          & 88.99          & 87.35          & 93.28          & 80.35                 \\
FLUX.1-schnell~\cite{FLUX}                                 & 84.94          & 86.62          & 90.82          & 88.35          & 93.45          & 82.00                 \\
Janus-Pro-7B~\citep{chen2025janus}                         & 86.90          & 88.90          & 89.40          & 89.32          & 89.48          & 84.19                 \\
HiDream-I1-Full~\citep{cai2025hidream}                     & 76.44          & 90.22          & 89.48          & 93.74          & 91.83          & 85.89                 \\
Lumina-Image 2.0~\citep{qin2025lumina}                     & -              & 91.97          & 90.20          & \textbf{94.85} & -              & 87.20                 \\
Seedream 3.0~\citep{gao2025seedream}                       & \textbf{94.31} & \textbf{92.65} & 91.36          & 92.78          & 88.24          & 88.27                 \\
GPT Image 1 [High]~\citep{gptimage}                        & 88.89          & 88.94          & 89.84          & 92.63          & 90.96          & 85.15                 \\
Qwen-Image~\citep{wu2025qwen}                              & 91.32          & 91.56          & \textbf{92.02} & 94.31          & \textbf{92.73} & 88.32                 \\
Z-Image~\citep{team2025zimage} (base; $\sim$100 NFE)       & 93.39          & 91.22          & 93.16          & 92.22          & 91.52          & 88.14                 \\
Z-Image-Turbo~\citep{team2025zimage,liu2026high} & 91.29          & 89.59          & 90.14          & 92.16          & 88.68          & 84.86                 \\
DMD2~\citep{yin2024improved,liu2026high} (2 NFE) & 83.59          & 91.10          & 88.88          & 93.81          & 85.60          & 85.40                 \\
TwinFlow~\citep{cheng2025twinflow,liu2026high} (2 NFE) & 81.46    & 91.83          & 89.02          & 94.16          & 87.60          & 85.98                 \\
Z-Image-Turbo++~\citep{liu2026high} (2 NFE)       & 83.89          & 91.84          & 89.30          & 94.58          & 89.60          & 85.86                 \\
TwinFlow (Qwen-Image)~\citep{cheng2025twinflow}  (1 NFE)    & --             & --             & --             & --             & --             & 86.52                 \\
APEX (Qwen-Image) (LoRA\&\textcolor{color2}{r=32}) (1 NFE)  & 93.12          & 90.95          & 91.38          & 90.65          & 91.73          & 86.17                 \\
APEX (Qwen-Image) (LoRA\&\textcolor{color2}{r=64}) (1 NFE)  & 92.46          & 91.14          & 90.71          & 91.30          & 91.98          & 85.77                 \\
APEX (Qwen-Image) (\textcolor{color3}{Full}) (1 NFE)        & 93.25          & 89.76          & 90.65          & 91.17          & 90.75          & 84.59                 \\
\midrule
\rowcolor{blue!10}
\bf \method (Z-Image) (LoRA\&\textcolor{color2}{r=32}) (2 NFE) & 80.04          & 92.78          & 89.47          & 94.43          & 89.46          & 86.03                 \\
\rowcolor{blue!10}
\bf \method (Z-Image) (LoRA\&\textcolor{color2}{r=64}) (2 NFE) & 80.46          & 92.94          & 89.59          & 94.52          & 89.74          & 86.44                 \\
\rowcolor{blue!10}
\bf \method (Z-Image) (\textcolor{color3}{Full}) (2 NFE) & 80.85          & 93.08          & 89.70          & 94.60          & 90.00          & 86.82                 \\
\rowcolor{blue!10}
\bf \method (Z-Image) (LoRA\&\textcolor{color2}{r=32}) (1 NFE) & 79.56          & 92.60          & 89.34          & 94.32          & 89.14          & 85.56                 \\
\rowcolor{blue!10}
\bf \method (Z-Image) (LoRA\&\textcolor{color2}{r=64}) (1 NFE) & 79.96          & 92.75          & 89.45          & 94.41          & 89.40          & 85.95                 \\
\rowcolor{blue!10}
\bf \method (Z-Image) (\textcolor{color3}{Full}) (1 NFE)      & 80.33          & 92.89          & 89.55          & 94.49          & 89.65          & 86.31                 \\
\rowcolor{blue!10}
\bf \method (Qwen-Image) (LoRA\&\textcolor{color2}{r=32}) (2 NFE) & 93.29       & 90.06          & 91.02          & 91.38          & 92.02          & 86.44                 \\
\rowcolor{blue!10}
\bf \method (Qwen-Image) (LoRA\&\textcolor{color2}{r=64}) (2 NFE) & 93.68       & 90.20          & 91.13          & 91.46          & 92.28          & 86.82                 \\
\rowcolor{blue!10}
\bf \method (Qwen-Image) (\textcolor{color3}{Full}) (2 NFE)   & 94.03          & 90.33          & 91.23          & 91.53          & 92.52          & 87.16                 \\
\rowcolor{blue!10}
\bf \method (Qwen-Image) (LoRA\&\textcolor{color2}{r=32}) (1 NFE) & 92.74       & 89.86          & 90.86          & 91.26          & 91.66          & 85.91                 \\
\rowcolor{blue!10}
\bf \method (Qwen-Image) (LoRA\&\textcolor{color2}{r=64}) (1 NFE) & 93.12       & 90.00          & 90.97          & 91.34          & 91.92          & 86.28                 \\
\rowcolor{blue!10}
\bf \method (Qwen-Image) (\textcolor{color3}{Full}) (1 NFE)   & 93.46          & 90.12          & 91.07          & 91.41          & 92.14          & 86.61                 \\
\bottomrule
\end{tabular}\label{tab:dpg}
}
\end{table*}

\subsection{Key Ablations}
\label{sec:exp_ablation}
All controlled ablations use \method (Z-Image) with full-parameter tuning at 2~NFE and the shared training configuration above. The default combines equal-weight DMD velocity and endpoint regression with $\cL_{\text{pop}}$ using generator EMA features.

\begin{table}[t!]
\centering
\caption{\textbf{Overall ablation results across benchmarks (2~NFE).} Scores are percentages. Row~\circnum{1} is the full configuration and the shared control; rows~\circnum{2}--\circnum{11} vary the losses, DMD regression weights, or feature map. The $v$-only and $x$-only controls retain $\cL_{\text{pop}}$. APEX~\citep{liu2026self} is the published Z-Image reference from its comparison with TwinFlow; rows~\circnum{12}--\circnum{16} are from \citet{liu2026high}, with row~\circnum{16} evaluated at 8~NFE.}
\label{tab:ablation_summary_2nfe}
\resizebox{0.9\linewidth}{!}{
\begin{tabular}{clcccccc}
\toprule
Idx & Model & NFE & OneIG & GenEval & DPG-Bench & LongText-CN & LongText-EN \\
\midrule
\firsttablegroup{8}{Distribution Matching Loss Ablation}
\circnum{2}  & \phantom{00}w/o $\cL_{\text{score}}$ (DMD loss) & 2 & 52.58 & 76.90 & 86.15 & 93.33 & 89.62 \\
\circnum{3}  & \phantom{00}w/o $\cL_{\text{pop}}$ (feature-moment loss) & 2 & 52.87 & 77.46 & 86.54 & 93.87 & 89.79 \\
\circnum{4}  & \phantom{00}w/o $\cL_{\text{score}}$ and $\cL_{\text{pop}}$ & 2 & 52.25 & 76.24 & 85.68 & 92.70 & 88.88 \\
\addlinespace[1pt]
\tablegroup{8}{DMD Regression Ablation}
\circnum{5}  & \phantom{00}$v$-loss only  & 2 & 52.43 & 76.64 & 85.93 & 93.03 & 89.17 \\
\circnum{6}  & \phantom{00}$x$-loss only  & 2 & 52.78 & 77.26 & 86.42 & 93.71 & 89.74 \\
\circnum{7}  & \phantom{00}2$\times$$x$-loss+$v$-loss & 2 & 53.01 & 77.71 & 86.75 & 94.17 & 90.13 \\
\circnum{8}  & \phantom{00}$x$-loss+2$\times$$v$-loss & 2 & 52.95 & 77.59 & 86.66 & 94.05 & 90.04 \\
\addlinespace[1pt]
\tablegroup{8}{Feature-Map Ablation}
\circnum{9}  & \phantom{00}DINOv2            & 2 & 53.00 & 77.75 & 86.71 & 94.13 & 90.09 \\
\circnum{10}  & \phantom{00}Teacher features    & 2 & 52.82 & 77.36 & 86.49 & 93.79 & 89.81 \\
\circnum{11}  & \phantom{00}Live self-features & 2 & 52.91 & 77.51 & 86.60 & 93.95 & 89.95 \\
\addlinespace[1pt]
\tablegroup{8}{Comparative}
\circnum{1}  & IDM & 2 & 53.06 & 77.81 & 86.82 & 94.25 & 90.20 \\
\circnum{15} & Z-Image-Turbo     & 2 & 50.94 & 76.53 & 85.78 & 78.48 & 72.89 \\
\circnum{13} & DMD2              & 2 & 50.70 & 76.12 & 85.40 & 85.30 & 80.99 \\
\circnum{14} & TwinFlow & 2 & 51.38 & 72.41 & 85.98 & 78.65 & 71.99 \\
\circnum{A}  & APEX & 2 & 52.60 & 77.23 & 88.64 & 90.38 & 88.76 \\
\circnum{12} & Z-Image-Turbo++   & 2 & 52.50 & 75.70 & 85.86 & 91.62 & 89.88 \\
\addlinespace[1pt]
\tablegroup{8}{Higher-NFE baseline}
\circnum{16} & Z-Image-Turbo & 8 & 52.84 & 75.01 & 84.86 & 92.56 & 91.74 \\
\bottomrule
\end{tabular}
}
\vspace{-1em}
\end{table}

\paragraph{Added value of feature-moment matching.}
Adding $\cL_{\text{pop}}$ to the shared DMD control raises GenEval from 77.46 to 77.81, LongText-CN from 93.87 to 94.25, and LongText-EN from 89.79 to 90.20, with gains on all five aggregate metrics. Removing $\cL_{\text{score}}$ instead gives 76.90 GenEval; removing both losses gives 76.24. The other two aggregate metrics also improve: OneIG rises from 52.87 to 53.06 and DPG-Bench from 86.54 to 86.82. The GenEval breakdown further identifies the gains from $\cL_{\text{pop}}$: color-attribute binding rises from 65.25 to 66.25, position from 50.75 to 51.25, and counting from 70.94 to 71.25, with unchanged single- and two-object scores in~\tabref{tab:ablation_geneval_2nfe}. Removing DMD gives a larger GenEval decrease (0.91 points) than removing moment matching (0.35 points). Moment matching improves this DMD control while the score correction remains central.

\paragraph{Generator EMA features.}
Moment matching with DINOv2 or generator EMA features improves all five metrics over the internal DMD control. From the same control without $\cL_{\text{pop}}$, DINOv2 raises GenEval from 77.46 to 77.75 (+0.29 points), while EMA reaches 77.81 (+0.35). Their LongText-CN scores are close: 94.13 and 94.25. Live self-features and teacher features reach 77.51 and 77.36 GenEval. EMA features add moment supervision beyond internal DMD, with performance close to DINOv2 and no external feature encoder.

\paragraph{Velocity and endpoint regression.}
To select how the DMD correction supervises predictions, we compare $v$ regression, $x$ regression, and their combinations while retaining $\cL_{\text{pop}}$.
\par
Both single-regression controls therefore include the added moment constraint, so this block tests how to apply DMD within the combined objective. Since $\ell_x(t)=t^2\ell_v(t)$ in Eq.~\ref{eq:l_score}, these choices change time weighting and update magnitude. Equal-weight $v{+}x$ reaches 77.81 GenEval, versus 76.64 for $v$ only and 77.26 for $x$ only; $2x{+}v$ and $x{+}2v$ reach 77.71 and 77.59. Equal weights lead on the five aggregate metrics and are our empirical default, although $x{+}2v$ gives a higher position score, 52.00 versus 51.25. The mixed losses apply alternative weightings to the same detached correction; these results do not establish independent distributional information in the two regression terms.

\paragraph{Relation to APEX and the internal controls.}
\method builds on APEX's internal velocity-difference construction~\citep{liu2026self}, adding endpoint regression and EMA feature-moment matching in~\figref{fig:idm_ablation_ladder}. The published APEX reference uses a smaller training collection, so the comparison reflects the complete training settings. \method scores 90.20 on LongText-EN versus APEX's 88.76, while APEX leads on DPG-Bench, 88.64 versus 86.82. Within the \method training setting, the loss-removal and feature-map controls separate these objective choices from differences in training data.

\begin{figure}[!t]
\centering
\includegraphics[width=\linewidth]{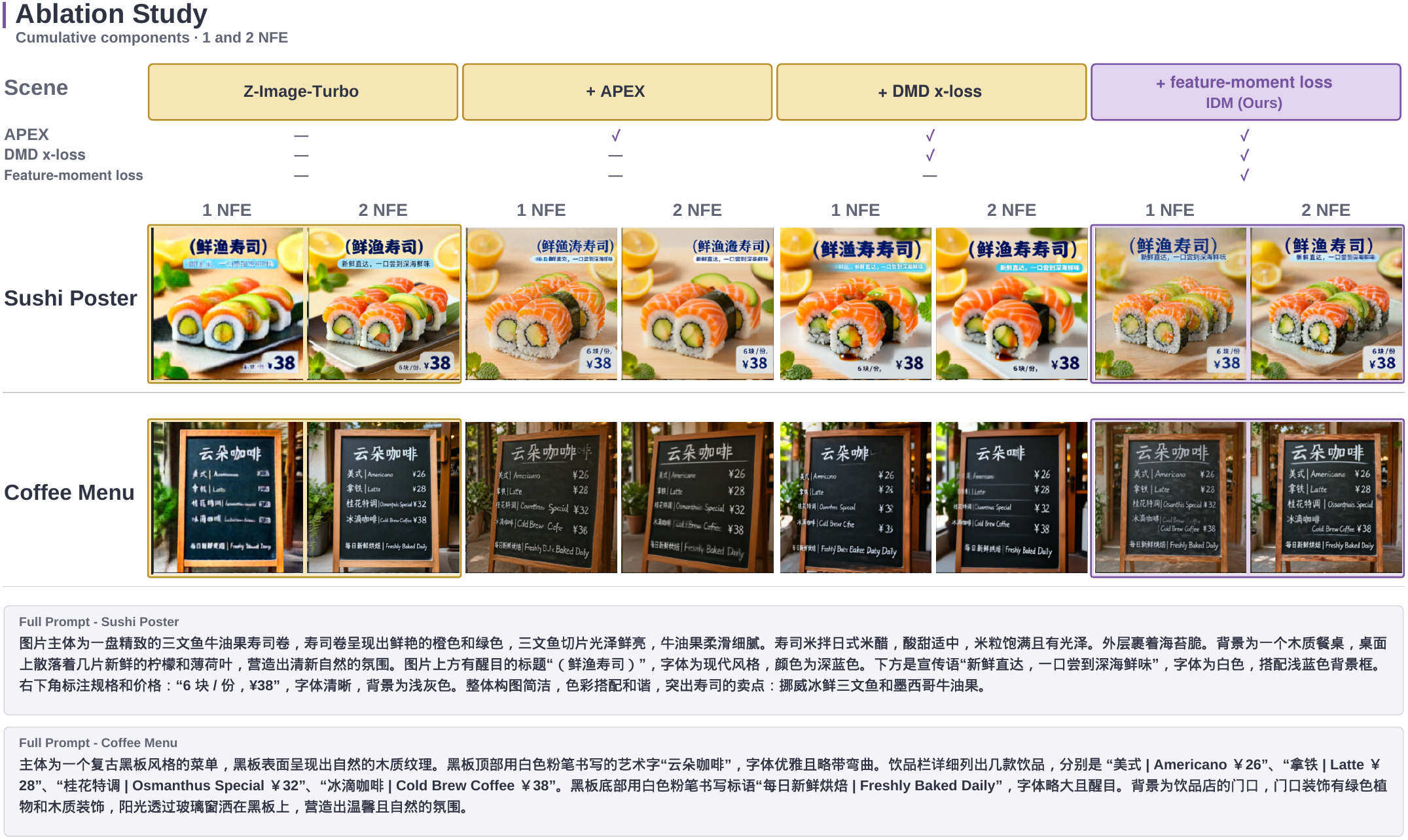}
\caption{\textbf{Reference methods and \method loss configurations.} The two prompts are evaluated at 1 and 2~NFE. The $v{+}x$ control omits $\cL_{\text{pop}}$, while full \method includes EMA feature-moment matching. APEX is an external reference trained on a smaller collection; the \method loss ablations share the same training setting.}
\label{fig:idm_ablation_ladder}
\vspace{-2em}
\end{figure}

\paragraph{Qualitative comparison.}
In the examples of \figref{fig:idm_ablation_ladder}, full \method shows more consistent preservation of dense text across the displayed sampling budgets, particularly in the multi-line menu. The menu combines a prominent title with several rows of small characters, making it a demanding test of text rendering beyond a few large words. Its entries pair Chinese and English drink names, including Americano and Latte, with prices. Each row combines both scripts with numbers and currency symbols, all within the compact layout of a chalkboard menu. The $v{+}x$ control without $\cL_{\text{pop}}$ produces more saturated colors, most visibly in the orange and yellow regions of the sushi poster. The LongText results quantify text fidelity across the benchmark.

\FloatBarrier

\tabref{tab:ablation_geneval_2nfe}, \tabref{tab:ablation_dpg_2nfe}, and~\tabref{tab:ablation_oneig_2nfe} expand the benchmark scores in \tabref{tab:ablation_summary_2nfe} into their individual categories.

\begin{table}[!htbp]
\centering
\caption{\textbf{Detailed GenEval ablation results.} Scores are percentages; Overall is the mean of the six category scores. Configurations, row groups, and sources follow \protect\tabref{tab:ablation_summary_2nfe}.}
\label{tab:ablation_geneval_2nfe}
\resizebox{0.9\linewidth}{!}{
\begin{tabular}{clcccccccc}
\toprule
Idx & Model & NFE & Overall & Two obj. & Color attr. & Position & Counting & Colors & Single obj. \\
\midrule
\firsttablegroup{10}{Distribution Matching Loss Ablation}
\circnum{2}  & \phantom{00}w/o $\cL_{\text{score}}$ (DMD loss) & 2 & 76.90 & 90.40 & 64.50 & 49.25 & 70.31 & 87.23 & 99.69 \\
\circnum{3}  & \phantom{00}w/o $\cL_{\text{pop}}$ (feature-moment loss) & 2 & 77.46 & 90.91 & 65.25 & 50.75 & 70.94 & 87.23 & 99.69 \\
\circnum{4}  & \phantom{00}w/o $\cL_{\text{score}}$ and $\cL_{\text{pop}}$ & 2 & 76.24 & 90.15 & 63.75 & 49.25 & 68.44 & 86.17 & 99.69 \\
\addlinespace[1pt]
\tablegroup{10}{DMD Regression Ablation}
\circnum{5}  & \phantom{00}$v$-loss only  & 2 & 76.64 & 90.40 & 64.00 & 50.00 & 69.06 & 86.70 & 99.69 \\
\circnum{6}  & \phantom{00}$x$-loss only  & 2 & 77.26 & 90.66 & 65.50 & 50.50 & 70.00 & 87.23 & 99.69 \\
\circnum{7}  & \phantom{00}2$\times$$x$-loss+$v$-loss & 2 & 77.71 & 91.16 & 65.75 & 51.25 & 70.63 & 87.77 & 99.69 \\
\circnum{8}  & \phantom{00}$x$-loss+2$\times$$v$-loss & 2 & 77.59 & 91.16 & 66.00 & 52.00 & 70.00 & 86.70 & 99.69 \\
\addlinespace[1pt]
\tablegroup{10}{Feature-Map Ablation}
\circnum{9}  & \phantom{00}DINOv2            & 2 & 77.75 & 91.16 & 65.75 & 51.75 & 70.94 & 87.23 & 99.69 \\
\circnum{10}  & \phantom{00}Teacher features    & 2 & 77.36 & 90.66 & 65.50 & 50.75 & 70.31 & 87.23 & 99.69 \\
\circnum{11}  & \phantom{00}Live self-features & 2 & 77.51 & 90.91 & 66.00 & 51.25 & 70.00 & 87.23 & 99.69 \\
\addlinespace[1pt]
\tablegroup{10}{Comparative}
\circnum{1}  & IDM & 2 & 77.81 & 90.91 & 66.25 & 51.25 & 71.25 & 87.50 & 99.69 \\
\circnum{15} & Z-Image-Turbo     & 2 & 76.53 & 89.39 & 57.75 & 50.50 & 74.38 & 87.77 & 99.38 \\
\circnum{13} & DMD2              & 2 & 76.12 & 90.40 & 65.50 & 49.00 & 67.50 & 84.31 & 100.00 \\
\circnum{14} & TwinFlow & 2 & 72.41 & 87.12 & 54.75 & 40.00 & 67.81 & 85.11 & 99.69 \\
\circnum{A}  & APEX & 2 & 77.23 & 90.66 & 65.50 & 50.25 & 70.31 & 86.97 & 99.69 \\
\circnum{12} & Z-Image-Turbo++   & 2 & 75.70 & 87.63 & 64.50 & 48.75 & 69.06 & 84.57 & 99.69 \\
\addlinespace[1pt]
\tablegroup{10}{Higher-NFE baseline}
\circnum{16} & Z-Image-Turbo & 8 & 75.01 & 88.89 & 58.75 & 45.75 & 71.88 & 85.11 & 99.69 \\
\bottomrule
\end{tabular}
}
\end{table}

\begin{table}[!htbp]
\centering
\caption{\textbf{Detailed DPG-Bench ablation results.} Scores are percentages. Overall follows the standard prompt-level aggregation over the benchmark question hierarchy, rather than the mean of the five category columns. Configurations, row groups, and sources follow \protect\tabref{tab:ablation_summary_2nfe}.}
\label{tab:ablation_dpg_2nfe}
\resizebox{0.9\linewidth}{!}{
\begin{tabular}{clccccccc}
\toprule
Idx & Model & NFE & Overall & Attribute & Global & Relation & Other & Entity \\
\midrule
\firsttablegroup{9}{Distribution Matching Loss Ablation}
\circnum{2}  & \phantom{00}w/o $\cL_{\text{score}}$ (DMD loss) & 2 & 86.15 & 89.33 & 80.23 & 94.32 & 89.54 & 92.58 \\
\circnum{3}  & \phantom{00}w/o $\cL_{\text{pop}}$ (feature-moment loss) & 2 & 86.54 & 89.54 & 79.94 & 94.48 & 89.41 & 92.88 \\
\circnum{4}  & \phantom{00}w/o $\cL_{\text{score}}$ and $\cL_{\text{pop}}$ & 2 & 85.68 & 89.07 & 80.01 & 94.12 & 89.37 & 92.23 \\
\addlinespace[1pt]
\tablegroup{9}{DMD Regression Ablation}
\circnum{5}  & \phantom{00}$v$-loss only  & 2 & 85.93 & 89.21 & 79.45 & 94.22 & 89.05 & 92.42 \\
\circnum{6}  & \phantom{00}$x$-loss only  & 2 & 86.42 & 89.48 & 79.56 & 94.43 & 89.15 & 92.78 \\
\circnum{7}  & \phantom{00}2$\times$$x$-loss+$v$-loss & 2 & 86.75 & 89.68 & 81.62 & 94.57 & 90.44 & 93.04 \\
\circnum{8}  & \phantom{00}$x$-loss+2$\times$$v$-loss & 2 & 86.66 & 89.64 & 80.79 & 94.55 & 89.92 & 92.99 \\
\addlinespace[1pt]
\tablegroup{9}{Feature-Map Ablation}
\circnum{9}  & \phantom{00}DINOv2            & 2 & 86.71 & 89.66 & 82.26 & 94.56 & 90.85 & 92.87 \\
\circnum{10}  & \phantom{00}Teacher features    & 2 & 86.49 & 89.52 & 80.36 & 94.46 & 89.68 & 92.83 \\
\circnum{11}  & \phantom{00}Live self-features & 2 & 86.60 & 89.58 & 80.97 & 94.51 & 90.08 & 92.91 \\
\addlinespace[1pt]
\tablegroup{9}{Comparative}
\circnum{1}  & IDM & 2 & 86.82 & 89.70 & 80.85 & 94.60 & 90.00 & 93.08 \\
\circnum{15} & Z-Image-Turbo     & 2 & 85.78 & 88.78 & 84.19 & 94.16 & 89.60 & 91.73 \\
\circnum{13} & DMD2              & 2 & 85.40 & 88.88 & 83.59 & 93.81 & 85.60 & 91.10 \\
\circnum{14} & TwinFlow & 2 & 85.98 & 89.02 & 81.46 & 94.16 & 87.60 & 91.83 \\
\circnum{A}  & APEX & 2 & 88.64 & 90.20 & 88.35 & 95.10 & 90.80 & 92.88 \\
\circnum{12} & Z-Image-Turbo++   & 2 & 85.86 & 89.30 & 83.89 & 94.58 & 89.60 & 91.84 \\
\addlinespace[1pt]
\tablegroup{9}{Higher-NFE baseline}
\circnum{16} & Z-Image-Turbo & 8 & 84.86 & 90.14 & 91.29 & 92.16 & 88.68 & 89.59 \\
\bottomrule
\end{tabular}
}
\end{table}

\begin{table}[!htbp]
\centering
\caption{\textbf{Detailed OneIG ablation results.} Scores are percentages; Overall is the mean of the five category scores. Configurations, row groups, and sources follow \protect\tabref{tab:ablation_summary_2nfe}.}
\label{tab:ablation_oneig_2nfe}
\resizebox{0.9\linewidth}{!}{
\begin{tabular}{clccccccc}
\toprule
Idx & Model & NFE & Overall & Alignment & Text & Diversity & Style & Reasoning \\
\midrule
\firsttablegroup{9}{Distribution Matching Loss Ablation}
\circnum{2}  & \phantom{00}w/o $\cL_{\text{score}}$ (DMD loss) & 2 & 52.58 & 86.13 & 97.09 & 12.74 & 38.04 & 28.90 \\
\circnum{3}  & \phantom{00}w/o $\cL_{\text{pop}}$ (feature-moment loss) & 2 & 52.87 & 86.31 & 97.73 & 12.81 & 38.16 & 29.34 \\
\circnum{4}  & \phantom{00}w/o $\cL_{\text{score}}$ and $\cL_{\text{pop}}$ & 2 & 52.25 & 85.94 & 96.37 & 12.67 & 37.90 & 28.37 \\
\addlinespace[1pt]
\tablegroup{9}{DMD Regression Ablation}
\circnum{5}  & \phantom{00}$v$-loss only  & 2 & 52.43 & 86.04 & 96.76 & 12.71 & 37.97 & 28.67 \\
\circnum{6}  & \phantom{00}$x$-loss only  & 2 & 52.78 & 86.25 & 97.53 & 12.79 & 38.12 & 29.21 \\
\circnum{7}  & \phantom{00}2$\times$$x$-loss+$v$-loss & 2 & 53.01 & 86.39 & 98.06 & 12.84 & 38.22 & 29.54 \\
\circnum{8}  & \phantom{00}$x$-loss+2$\times$$v$-loss & 2 & 52.95 & 86.36 & 97.94 & 12.83 & 38.20 & 29.42 \\
\addlinespace[1pt]
\tablegroup{9}{Feature-Map Ablation}
\circnum{9}  & \phantom{00}DINOv2            & 2 & 53.00 & 86.58 & 97.72 & 12.88 & 38.36 & 29.46 \\
\circnum{10}  & \phantom{00}Teacher features    & 2 & 52.82 & 86.28 & 97.62 & 12.80 & 38.14 & 29.26 \\
\circnum{11}  & \phantom{00}Live self-features & 2 & 52.91 & 86.33 & 97.82 & 12.82 & 38.18 & 29.40 \\
\addlinespace[1pt]
\tablegroup{9}{Comparative}
\circnum{1}  & IDM & 2 & 53.06 & 86.42 & 98.15 & 12.85 & 38.24 & 29.65 \\
\circnum{15} & Z-Image-Turbo     & 2 & 50.94 & 83.92 & 92.85 & 13.85 & 36.25 & 27.86 \\
\circnum{13} & DMD2              & 2 & 50.70 & 85.21 & 90.82 & 12.75 & 36.77 & 27.93 \\
\circnum{14} & TwinFlow & 2 & 51.38 & 85.26 & 90.61 & 15.49 & 38.03 & 27.52 \\
\circnum{A}  & APEX & 2 & 52.60 & 86.10 & 97.05 & 12.70 & 38.00 & 29.15 \\
\circnum{12} & Z-Image-Turbo++   & 2 & 52.50 & 85.65 & 97.09 & 11.75 & 37.87 & 30.12 \\
\addlinespace[1pt]
\tablegroup{9}{Higher-NFE baseline}
\circnum{16} & Z-Image-Turbo & 8 & 52.84 & 84.03 & 99.32 & 13.85 & 36.81 & 30.19 \\
\bottomrule
\end{tabular}
}
\end{table}

\FloatBarrier

\begin{figure}[!t]
\centering
\includegraphics[width=\linewidth,height=0.82\textheight,keepaspectratio]{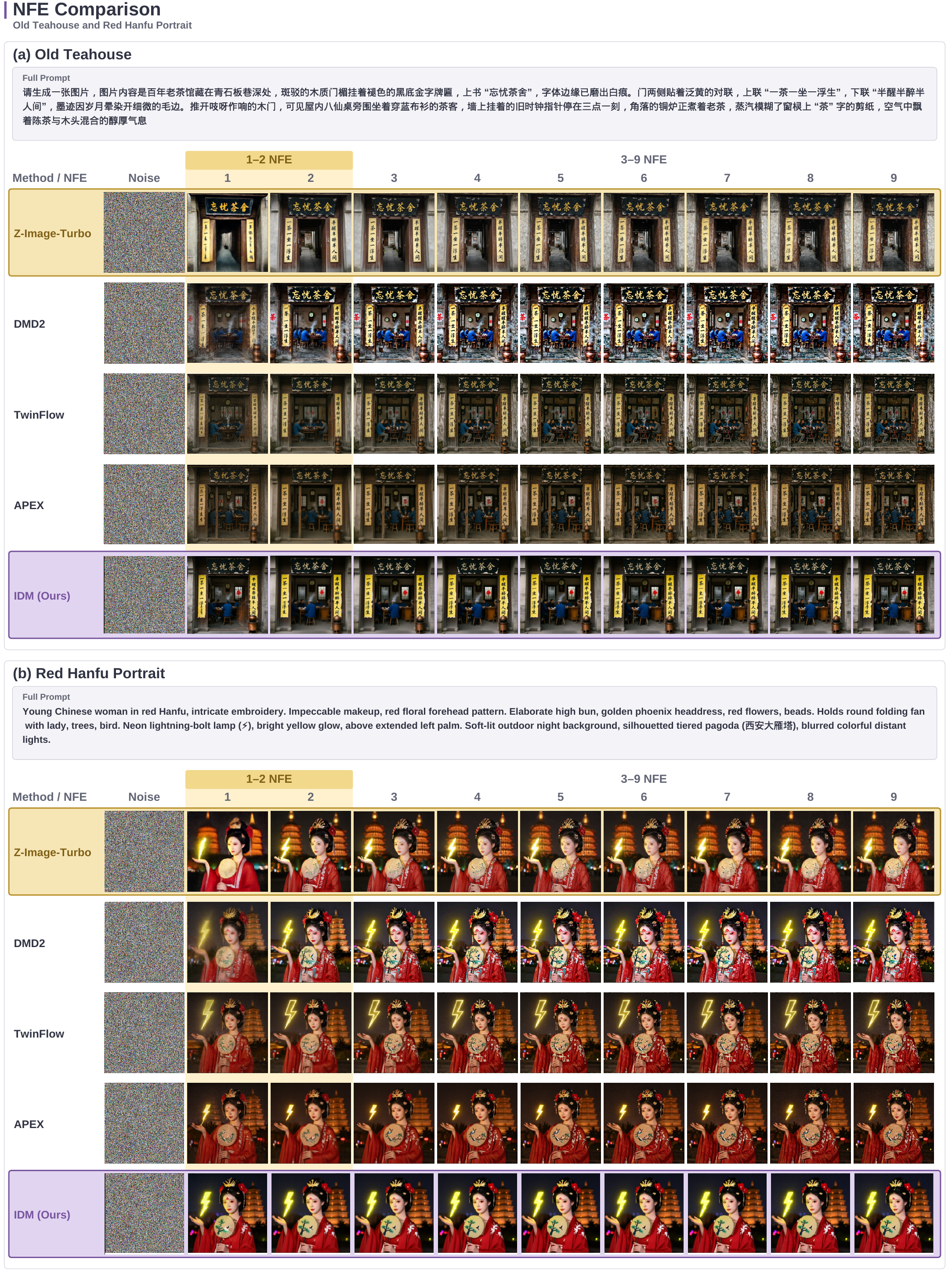}
\caption{\textbf{Sampling-budget comparison on Z-Image.} Final outputs from Z-Image-Turbo, DMD2, TwinFlow, APEX, and \method at the indicated NFE budgets, grouped by two prompts. Each column represents a sampling budget, not an intermediate state of one trajectory.}
\label{fig:nfe_comparison}
\end{figure}

\subsection{Qualitative Results}
\label{sec:qualitative}

\figref{fig:nfe_comparison} compares sampling budgets. \figref{fig:qualitative_showcase_1} shows text rendering and portraits; \figref{fig:qualitative_showcase_2} provides additional samples across styles and content.

\clearpage

\begin{figure}[!t]
\centering
\begin{minipage}[t]{0.48\linewidth}
\centering
\includegraphics[width=\linewidth]{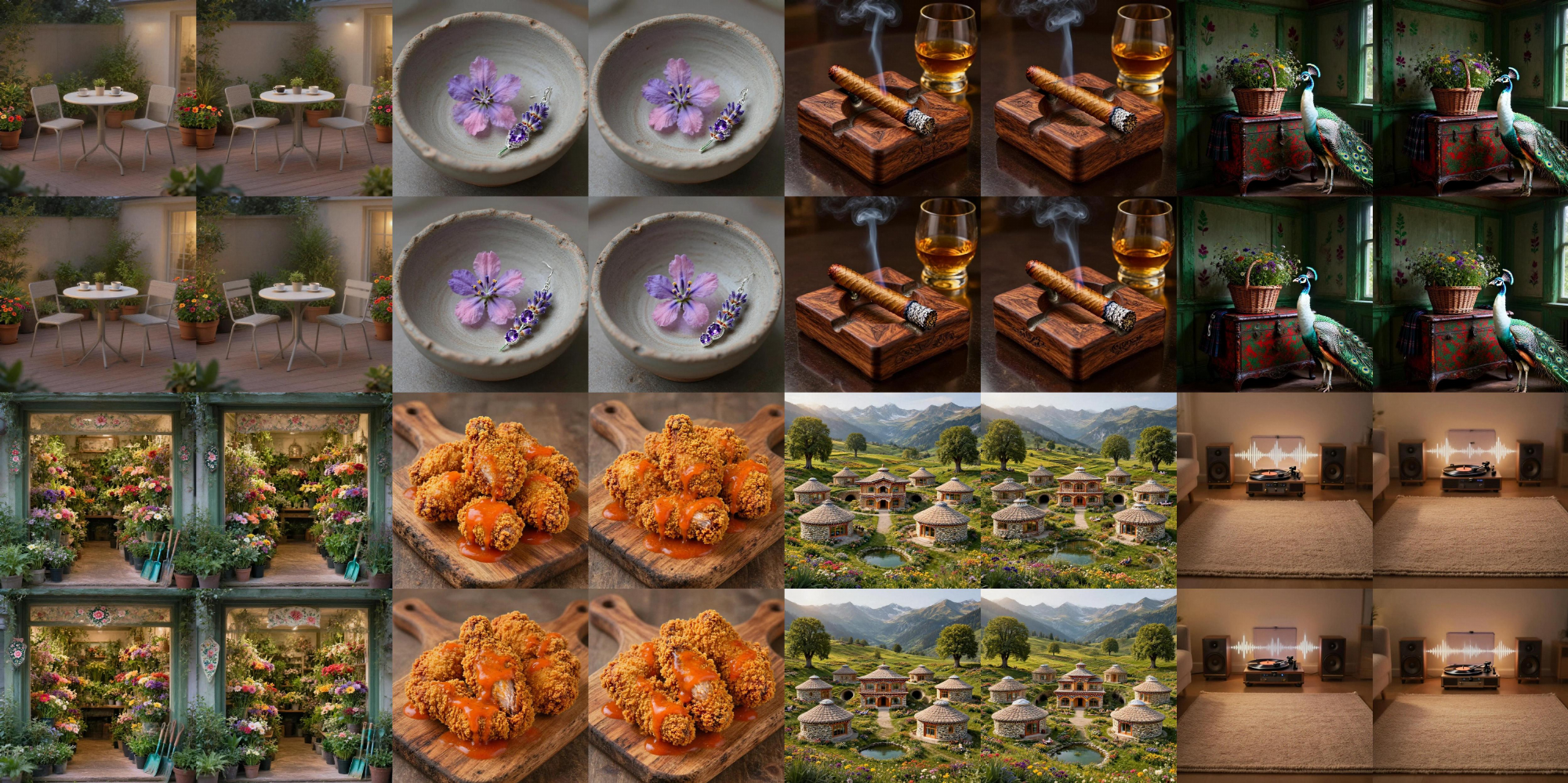}
\end{minipage}\hfill
\begin{minipage}[t]{0.48\linewidth}
\centering
\includegraphics[width=\linewidth]{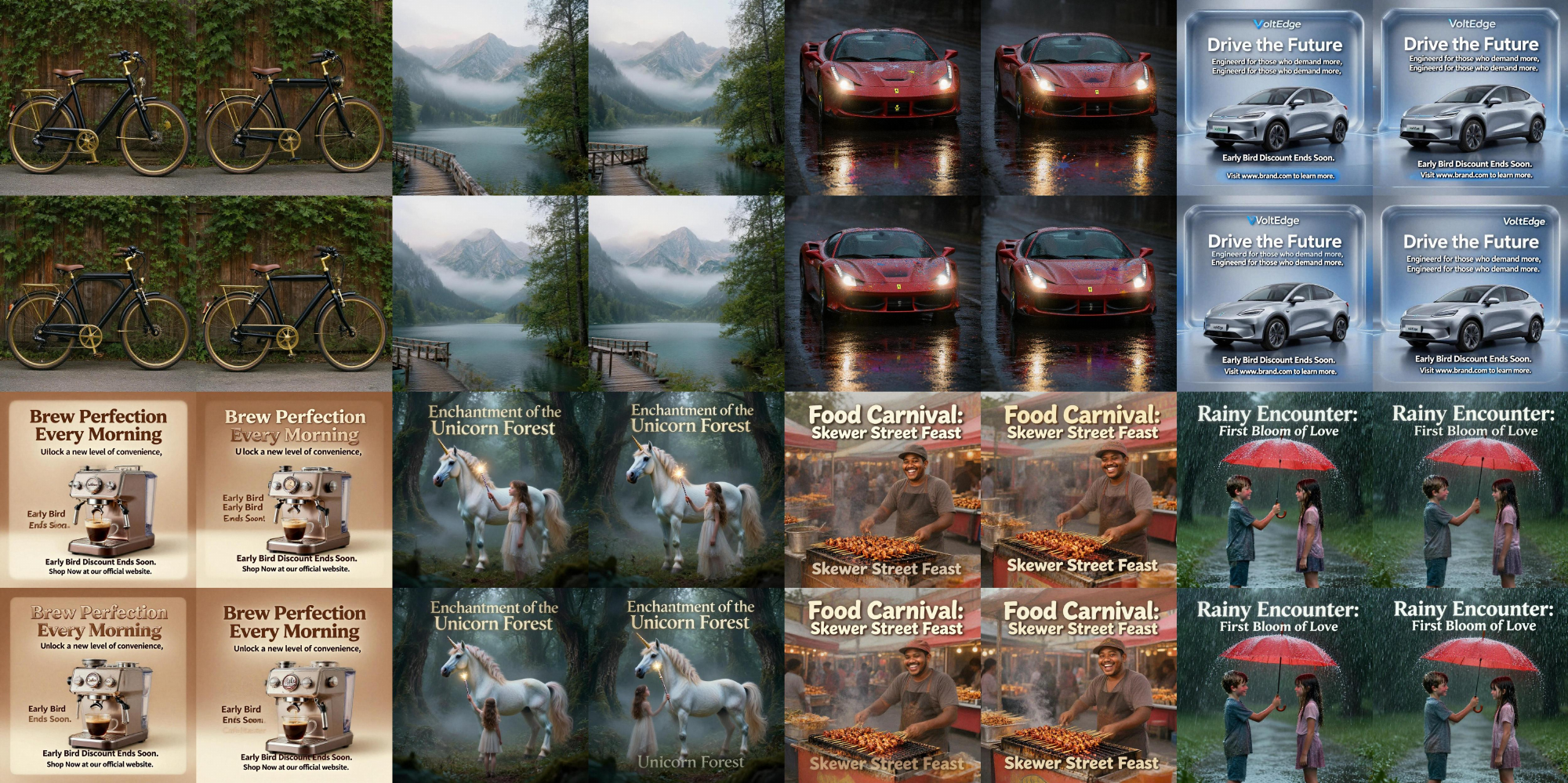}
\end{minipage}
\begin{minipage}[t]{0.48\linewidth}
\centering
\includegraphics[width=\linewidth]{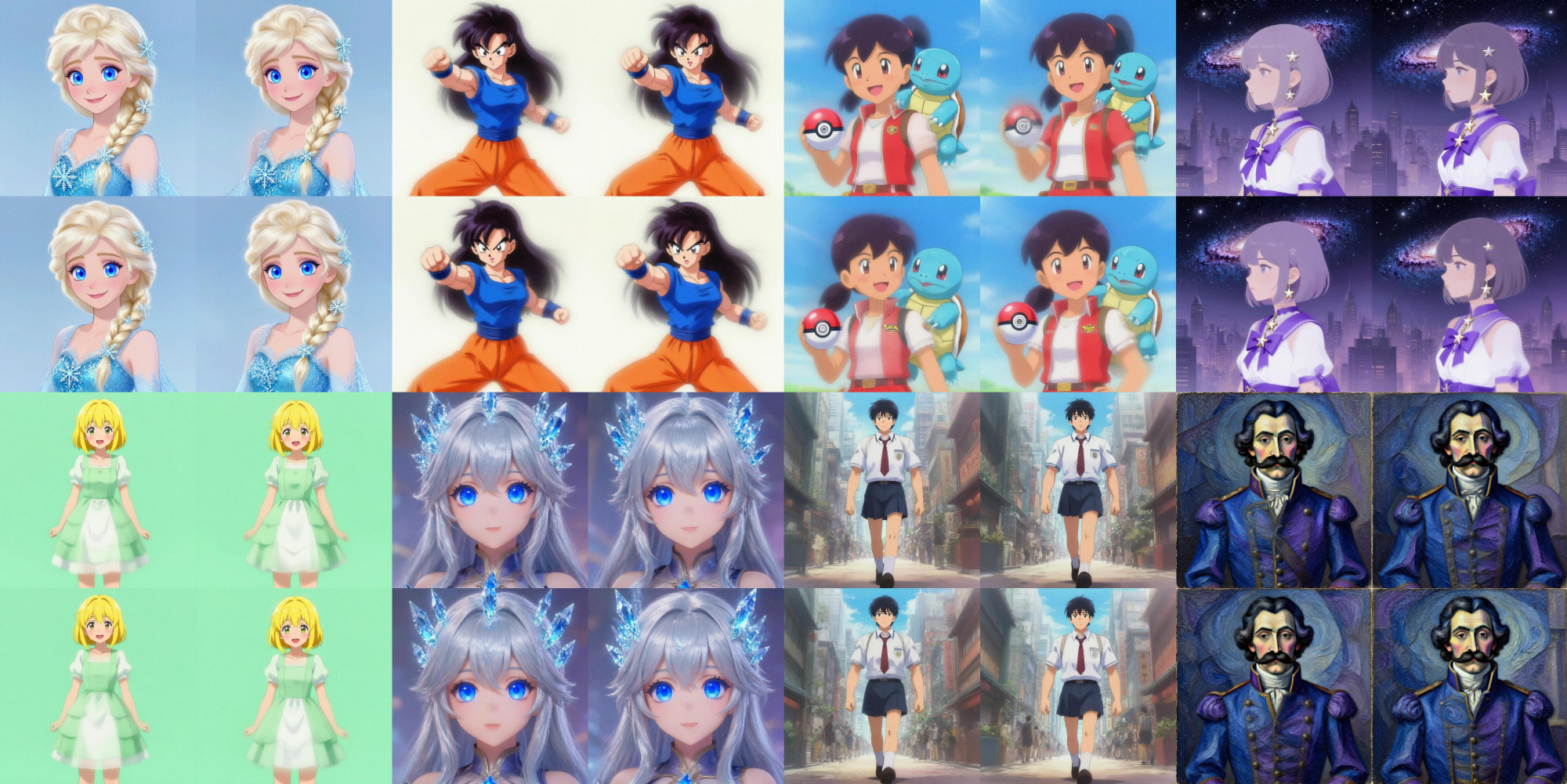}
\end{minipage}\hfill
\begin{minipage}[t]{0.48\linewidth}
\centering
\includegraphics[width=\linewidth]{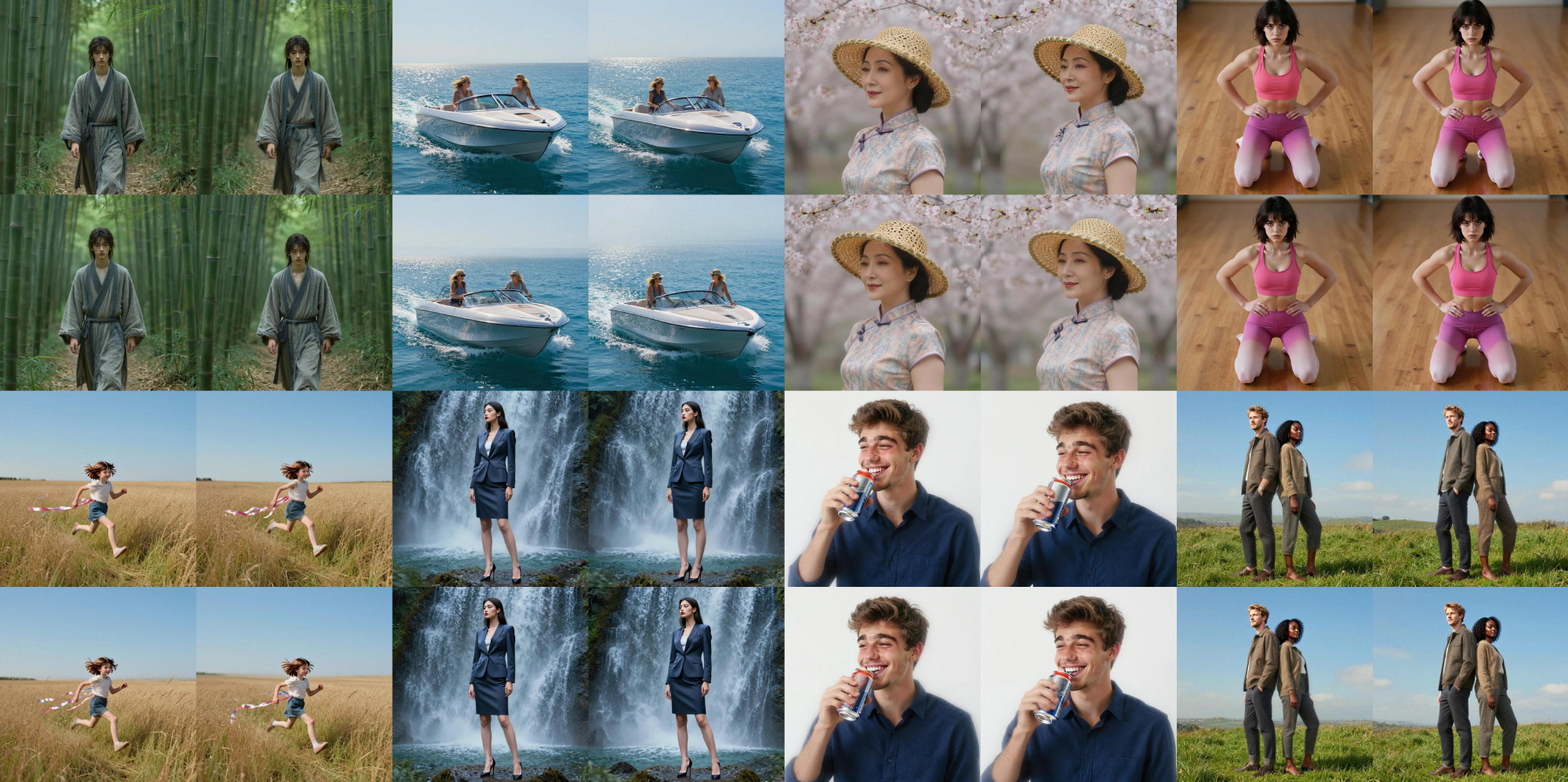}
\end{minipage}
\begin{minipage}[t]{0.48\linewidth}
\centering
\includegraphics[width=\linewidth]{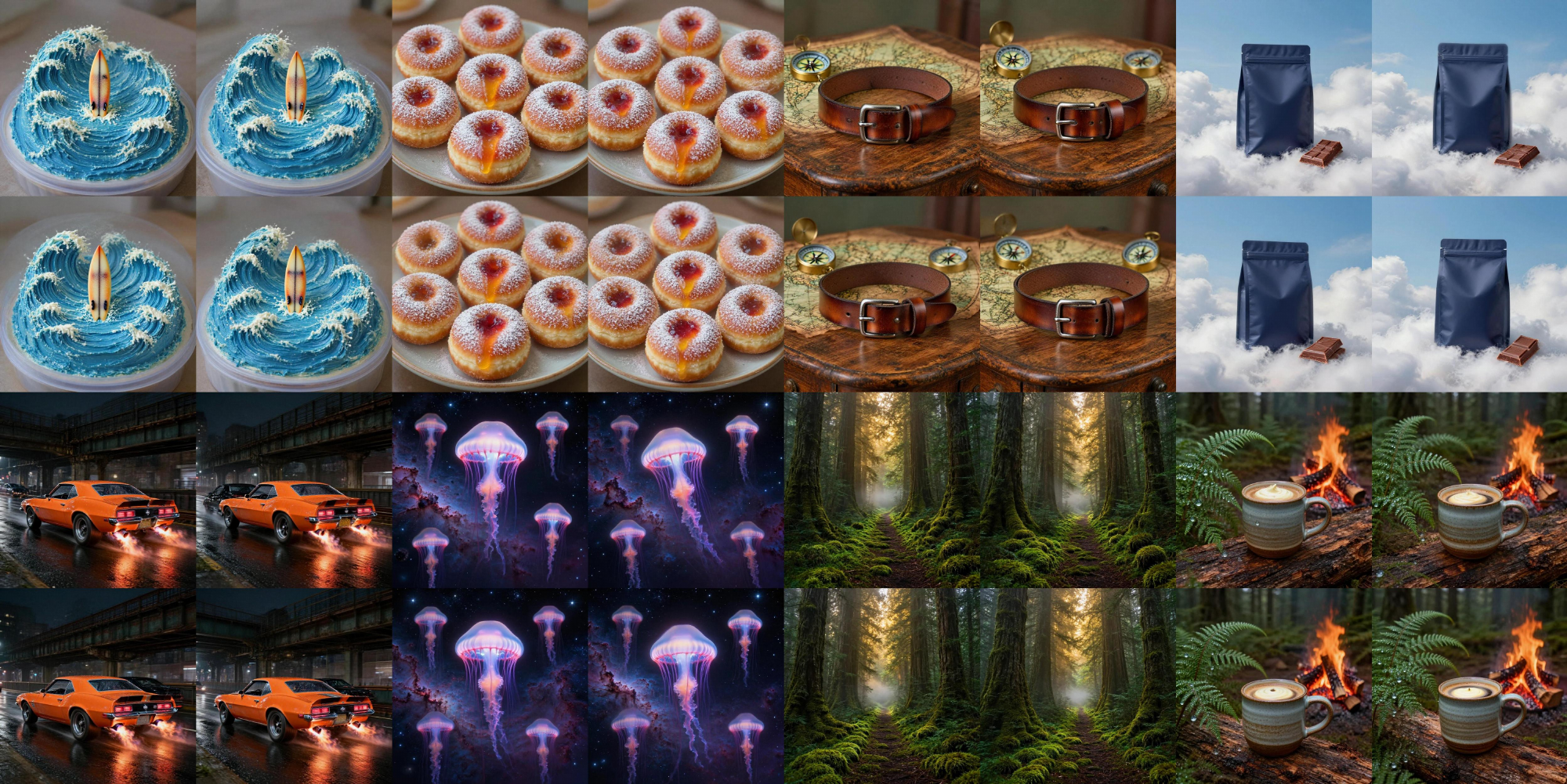}
\end{minipage}\hfill
\begin{minipage}[t]{0.48\linewidth}
\centering
\includegraphics[width=\linewidth]{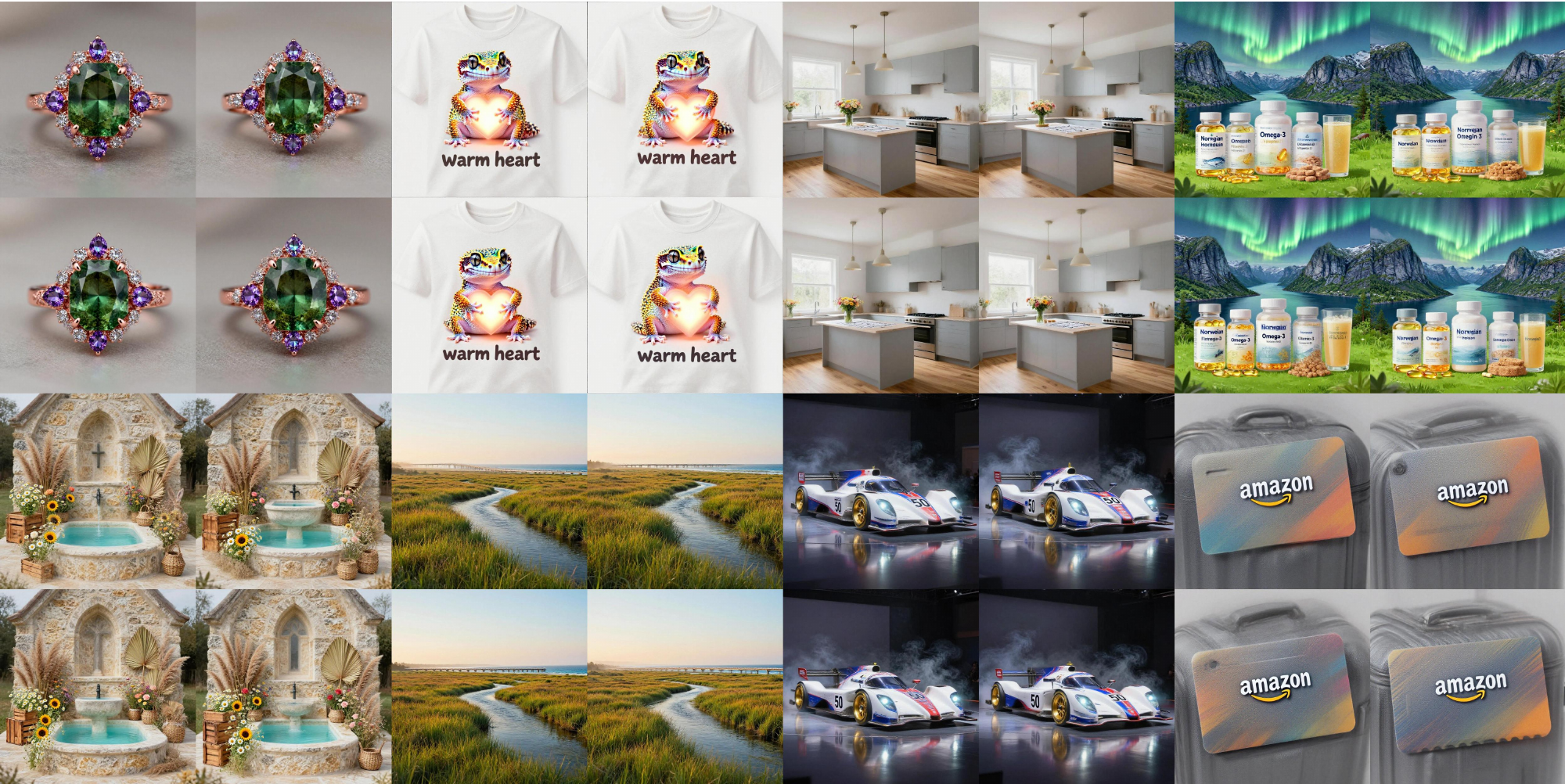}
\end{minipage}
\begin{minipage}[t]{0.48\linewidth}
\centering
\includegraphics[width=\linewidth]{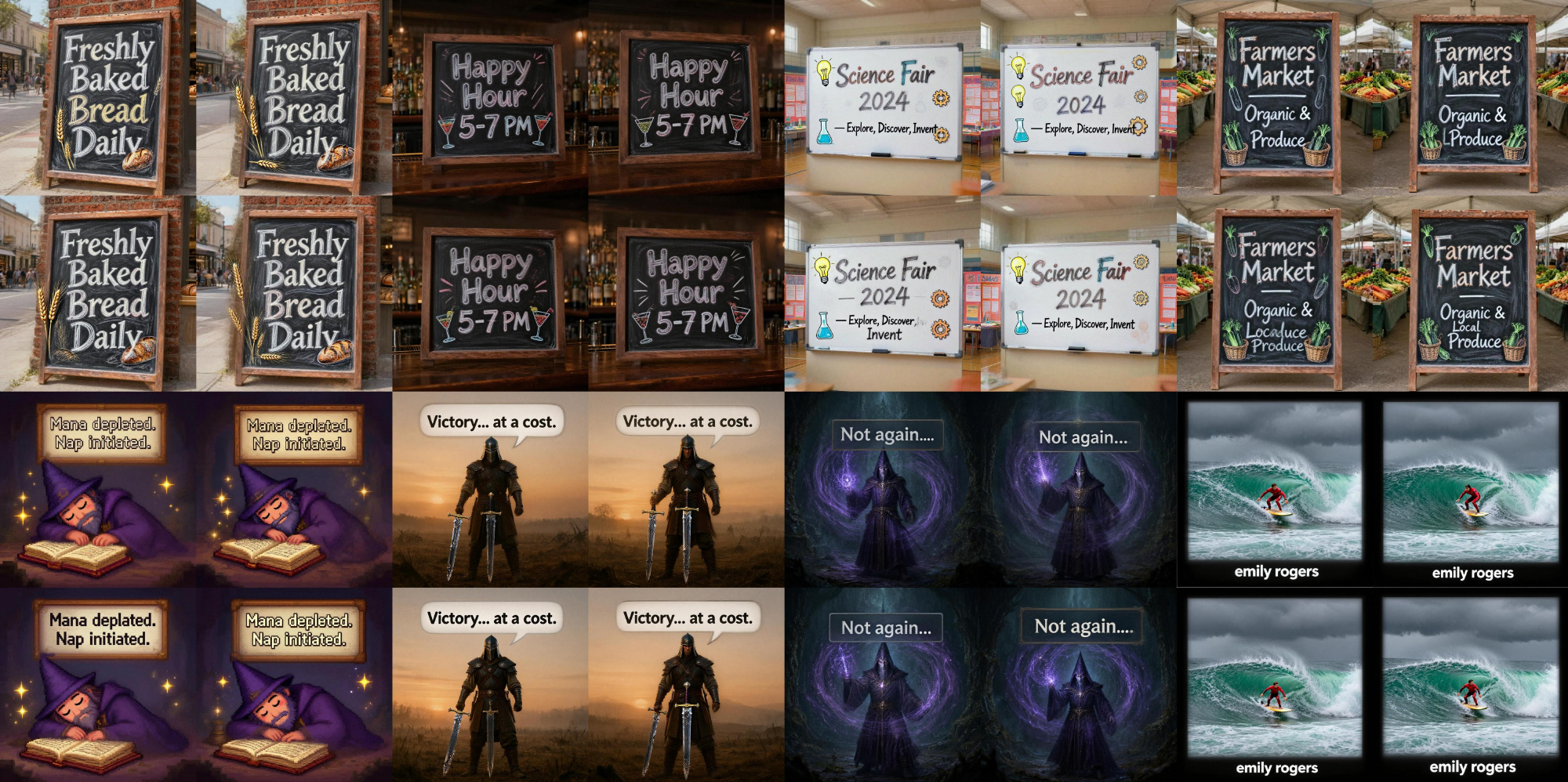}
\end{minipage}\hfill
\begin{minipage}[t]{0.48\linewidth}
\centering
\includegraphics[width=\linewidth]{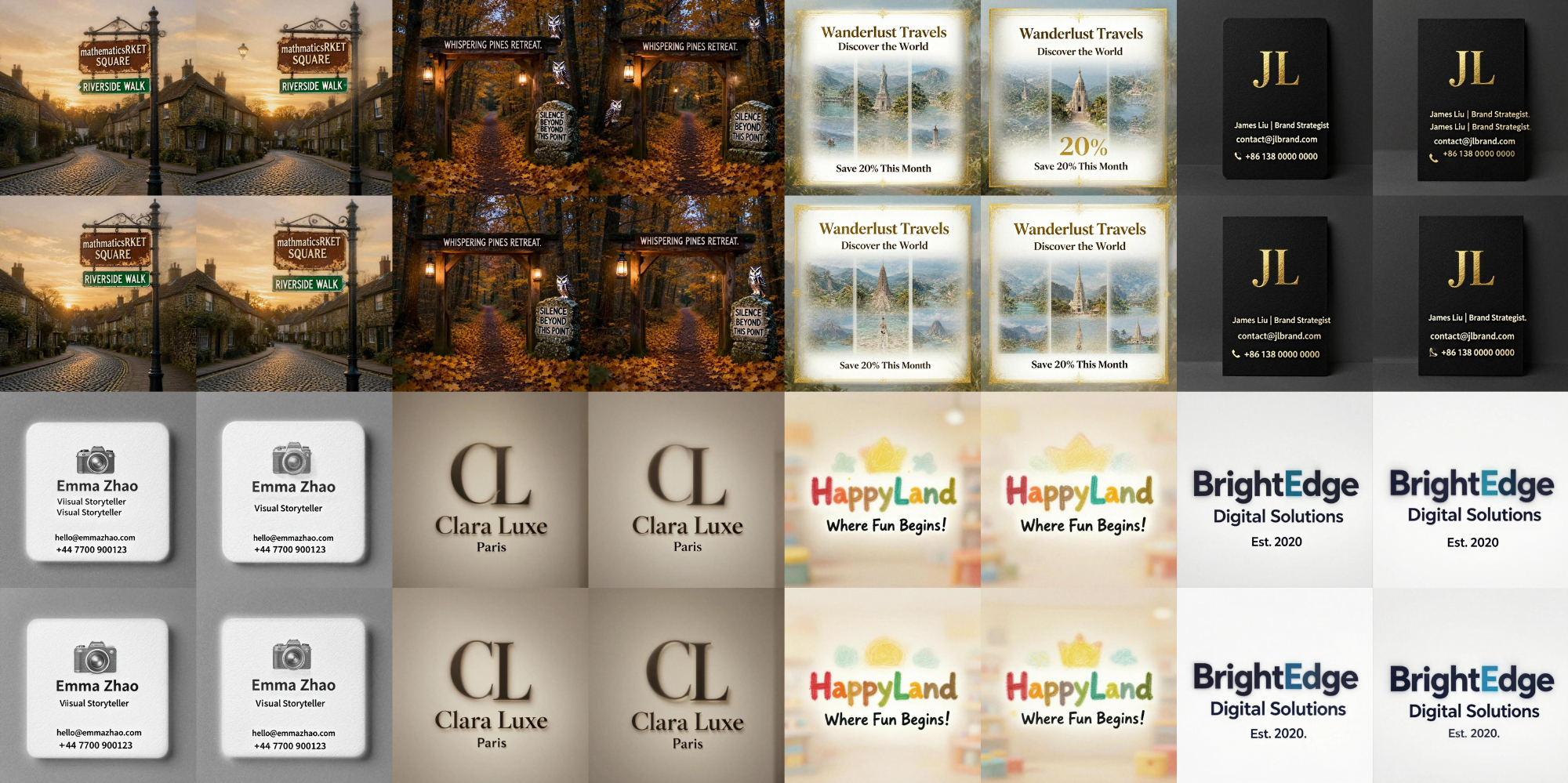}
\end{minipage}
\caption{\textbf{\method (Z-Image), 2~NFE.} Additional samples across styles and content. Every panel uses the same model and sampling budget; the grid is a gallery of outputs rather than a method or NFE comparison.}
\label{fig:qualitative_showcase_2}
\end{figure}

\FloatBarrier

\section{Conclusion}
\label{sec:conclusion}

We presented \textbf{Intrinsic Distribution Matching (\method)}, which builds on internal DMD and adds an explicit constraint on generator EMA feature statistics. The reverse-KL field decomposition motivates retaining full score-difference supervision while additionally matching feature means and standard deviations. Controlled ablations show gains over the internal DMD baseline from EMA feature moments, with performance close to DINOv2. On Z-Image-Turbo 6B, \method reaches 94.25\% and 90.20\% on Chinese and English LongText-Bench, the highest scores among the compared two-NFE methods. Experiments on backbones up to 20B parameters support using the generator's predictions and representations together for few-step text-to-image generation.

%% file: resources/appendix.tex
\setcounter{figure}{0}
\renewcommand{\thefigure}{A\arabic{figure}}
\renewcommand{\theHfigure}{appendix.\arabic{figure}}
\setcounter{table}{0}
\renewcommand{\thetable}{A\arabic{table}}
\renewcommand{\theHtable}{appendix.\arabic{table}}

\newpage
{
    \hypersetup{linkcolor=black}
    \parskip=0em
    \renewcommand{\contentsname}{Appendix Contents}
    \tableofcontents
}
\newpage

\input{resources/appendix_related_work.tex}
\clearpage

\section{Theoretical Proofs}
\label{app:proofs}

\paragraph{Conventions.} The distribution-gradient and score identities are stated at fixed condition $\cc\sim p_c$, with conditioning omitted when unambiguous; training averages the resulting conditional gradients over $\cc$. In contrast, feature moments pool samples from the joint law $p_i(\xx\mid\cc)p_c(\cc)$. A subscript $t$ denotes a noisy marginal, a star an exact marginal quantity, and a hat a learned or empirical estimate. Thus $\cL_{\text{pop}}$ denotes the population moment loss and $\widehat{\cL}_{\text{pop}}$ its empirical counterpart. For full queues of equal size $K$, the model estimator uses $N_\phi=B+K$ features, where $B$ is the global current-batch size; the reference estimator uses the $K$ queued reference features.

\subsection{Unified Gradient Structure}
\label{app:unified_grad}

\subsubsection{First variation form}
\label{app:unified_grad_first_variation}

We give regularity assumptions and prove the general identity behind \eqref{eq:universal_grad}.
We use the standard first-variation calculation for functionals of a density obeying a continuity equation, in the sense of gradient flows of probability measures~\citep{ambrosio2007gradient}. The pushforward step converts the sample-space integral into an expectation over the noise prior and clarifies its dependence on generator parameters.

\begin{definition}[Distribution Gradient]
\label{def:dist_grad}
For a distributional discrepancy $\cD(p_\theta, p_r)$ admitting a differentiable first-variation potential $\psi(\xx) := \delta\cD(p_\theta,p_r)/\delta p_\theta(\xx)$, the \emph{sample-space distribution gradient} is
\begin{equation}
  \gv_{\cD}(\xx) := \nabla_\xx \psi(\xx).
\end{equation}
\end{definition}

\begin{assumption}[Regularity for the first variation]
  \label{ass:regularity}
  Fix $\cc$ and let $G_\theta(\cdot,\cc)$ push $p_z$ to $p_\theta(\cdot\mid\cc)$. The reference law $p_r$ is fixed, and the discrepancy depends on $\theta$ only through $p_\theta$.
  For each direction $\Delta\theta$, define
  $\theta_\epsilon=\theta+\epsilon\Delta\theta$ and
  $p_{\theta_\epsilon}:=(G_{\theta_\epsilon}(\cdot,\cc))_\#p_z$.
  \begin{enumerate}[label=(\alph*),leftmargin=*]
    \item $G_\theta$ is differentiable in $\theta$,
    \item $p_{\theta_\epsilon}$ admits a smooth density near $\epsilon=0$, and differentiation can be exchanged with the relevant integrals.
    \item For any first-variation potential $\psi$ used below, the boundary term in integration by parts vanishes; for example, $\psi(\xx)p_\theta(\xx)\uu(\xx)$ decays sufficiently fast as $\|\xx\|\to\infty$.
  \end{enumerate}
\end{assumption}

\paragraph{On condition (b): when does the model law admit a density?}
\label{app:density}
Condition (b) is an assumption rather than a consequence of the generator construction.
For a locally Lipschitz deterministic map $G_\theta(\cdot,\cc):\R^d\to\R^d$, the area formula implies that a Jacobian $\partial G_\theta/\partial\zz$ singular on a set of positive $p_z$-measure produces a singular component in the image law. The full model law then has no Lebesgue density, so its clean score $\ss_\theta=\nabla_\xx\log p_\theta$ is not defined as assumed.
This is not a pathological case. A generator that collapses a direction of the seed space, for instance one that has learned to ignore part of $\zz$, produces exactly this degeneracy.
It is the same manifold-support obstruction identified by \citet{arjovsky2017wasserstein} for implicit generators, which there motivates replacing $f$-divergences by a weaker metric; we instead retain the divergence and carry the density assumption explicitly.

The clean-distribution results in \secref{sec:unified} therefore assume smooth, strictly positive densities on a common domain, together with the integrability and boundary conditions required by each result. The factorization \eqref{eq:universal_grad}, reverse-KL field \eqref{eq:reverse_kl_field}, and decomposition \eqref{eq:grad_decomp_main} are exact within that setting.
This restriction is distinct from the noisy-marginal setting: adding independent nondegenerate Gaussian noise gives a smooth positive density for every $t\in(0,1)$, even when the clean law is singular. The score--velocity identity in \appref{app:score_velocity} therefore does not require a clean density; its finite-second-moment assumption suffices for the stated velocity regression. A noisy reverse-KL objective additionally requires the relevant divergence and gradient integrals to exist.
The practical losses in \secref{sec:combined} remain computable without evaluating clean scores. Singular clean laws invalidate the clean-field interpretation, but do not by themselves invalidate the noisy score fields or feature-moment objective.

\begin{proof}[Proof of \eqref{eq:universal_grad}]
Fix an arbitrary direction $\Delta\theta$ and write $\theta_\epsilon=\theta+\epsilon\Delta\theta$. For $\xx=G_\theta(\zz,\cc)$, the generated sample has the first-order expansion:
\begin{equation}
  \label{eq:sample_expansion}
  \xx_{\theta_\epsilon}
  = G_{\theta+\epsilon\Delta\theta}(\zz,\cc)
  = \xx + \epsilon \uu_\zz + o(\epsilon),
\qquad
  \uu_\zz := J_\theta G_\theta(\zz,\cc)\Delta\theta .
\end{equation}
The corresponding velocity field is the conditional mean displacement of samples currently located at $\xx$:
\begin{equation}
\uu(\xx)
:=
  \mathbb{E}\!\left[\uu_\zz \mid G_\theta(\zz,\cc)=\xx\right].
  \label{eq:eulerian_velocity}
\end{equation}

We first identify the density perturbation induced by this sample displacement. For any smooth compactly supported test function $\varphi$, the pushforward definition and \eqref{eq:sample_expansion} give
\begin{align}
  \frac{d}{d\epsilon}
  \int \varphi(\xx)p_{\theta_\epsilon}(\xx)\,\dm\xx\bigg|_{\epsilon=0}
  &=
  \frac{d}{d\epsilon}
  \mathbb{E}_{\zz}
  \left[
    \varphi(G_{\theta+\epsilon\Delta\theta}(\zz,\cc))
  \right]\bigg|_{\epsilon=0} \\&=
  \frac{d}{d\epsilon}
  \mathbb{E}_{\zz}
  \left[
    \varphi(\xx + \epsilon \uu_\zz)
  \right]\bigg|_{\epsilon=0} \\
  &=
  \mathbb{E}_{\zz}
  \left[
    \nabla_\xx\varphi(G_\theta(\zz,\cc))^\top\uu_\zz
  \right] \\
  &=
  \int
  \nabla_\xx\varphi(\xx)^\top p_\theta(\xx)\uu(\xx)\,\dm\xx .
  \label{eq:weak_continuity}
\end{align}
The last equality is exactly the conditional-expectation definition of $\uu(\xx)$. Since \eqref{eq:weak_continuity} holds for all such $\varphi$, the density path satisfies the continuity equation in the weak sense:
\begin{equation}
  \partial_\epsilon p_{\theta_\epsilon}(\xx)\big|_{\epsilon=0}
=
  -\nabla_\xx\cdot(p_\theta(\xx)\uu(\xx)).
  \label{eq:continuity}
\end{equation}

Now consider a distributional discrepancy $\cD(p_\theta, p_r)$ with first variation
\begin{equation}
  \psi(\xx) := \frac{\delta \cD(p_\theta, p_r)}{\delta p_\theta}(\xx),
\end{equation}
meaning that, along the density path $p_{\theta_\epsilon}$,
\begin{align}
  \frac{d}{d\epsilon}\cD(p_{\theta_\epsilon}, p_r)\bigg|_{\epsilon=0}
&=
  \int \psi(\xx)\,\partial_\epsilon p_{\theta_\epsilon}(\xx)\big|_{\epsilon=0}\,\dm\xx \\
  &=
  -\int \psi(\xx)\,\nabla_\xx\cdot(p_\theta(\xx)\uu(\xx))\,\dm\xx \\
&=
  \int \nabla_\xx\psi(\xx)\cdot p_\theta(\xx)\,\uu(\xx)\,\dm\xx ,
  \label{eq:first_variation_ip}
\end{align}
where the last equality is integration by parts, using the vanishing boundary term in \assumpref{ass:regularity}(c). By \eqref{eq:eulerian_velocity}, any integrable vector field
$\ww(\xx)$ satisfies the following pushforward-expectation identity (converting an expectation under $p_\theta$ to one under the noise prior):
\begin{equation}
  \int p_\theta(\xx)\,\ww(\xx)^\top\uu(\xx)\,\dm\xx
  =
  \mathbb{E}_{\zz}
  \left[
    \ww(G_\theta(\zz,\cc))^\top\uu_\zz
  \right].
  \label{eq:latent_pullback}
\end{equation}
Taking $\ww=\nabla_\xx\psi$ in \eqref{eq:latent_pullback} gives
\begin{equation}
  \int p_\theta(\xx)\,\nabla_\xx\psi(\xx)\cdot\uu(\xx)\,\dm\xx
  =
  \mathbb{E}_{\zz}
\left[
\nabla_\xx\psi(G_\theta(\zz,\cc))^\top
\uu_\zz
\right]
  =
  \mathbb{E}_{\zz}
\left[
\nabla_\xx\psi(G_\theta(\zz,\cc))^\top
J_\theta G_\theta(\zz,\cc)\Delta\theta
\right].
  \label{eq:latent_first_variation}
\end{equation}
Combining \eqref{eq:first_variation_ip} and
\eqref{eq:latent_first_variation}, the directional derivative in the direction
$\Delta\theta$ is
\begin{equation}
  \frac{d}{d\epsilon}\cD(p_{\theta_\epsilon}, p_r)\bigg|_{\epsilon=0}
  =
  \left\langle
  \mathbb{E}_{\zz}
  \left[
    \bigl[J_\theta G_\theta(\zz,\cc)\bigr]^\top
    \nabla_\xx\psi(G_\theta(\zz,\cc))
  \right],
  \Delta\theta
  \right\rangle .
\end{equation}
Since $\Delta\theta$ is arbitrary,
\begin{equation}
  \nabla_\theta \cD(p_\theta, p_r)
=
  \mathbb{E}_{\zz}
\left[
\bigl[J_\theta G_\theta(\zz,\cc)\bigr]^\top
    \underbrace{\nabla_\xx\psi(G_\theta(\zz,\cc))}_{\gv_{\cD}(G_\theta(\zz,\cc))}
\right].
  \label{eq:general_unified_grad}
\end{equation}
Thus the sample-space distribution gradient in \eqref{eq:universal_grad} is
$\gv_{\cD}(\xx)=\nabla_\xx\psi(\xx)$ whenever the discrepancy admits the first variation $\psi=\delta\cD(p_\theta, p_r)/\delta p_\theta$.
\end{proof}

\subsubsection{$f$-divergence specialization}

Here $\rho(\xx):=p_\theta(\xx)/p_r(\xx)$ denotes the density ratio.

The following template is the $f$-divergence distribution-matching result of \citet{xu2025one}, whose variational form goes back to $f$-GAN~\citep{nowozin2016f}.
We restate it in the present notation, and add in \appref{app:unified_grad_special_cases} the distinction between an identified $f$-divergence first variation and a detached optimal-discriminator update field, which the two coincide on only for particular generators.

\begin{theorem}[Unified Gradient Template for $f$-divergences]
\label{thm:unified}
Let $G_\theta(\zz,\cc)=\xx$ be the generator at fixed $\cc$ and $h:(0,\infty)\to\R$ be a twice continuously differentiable convex divergence generator with
$h(1)=0$, and define
\begin{equation}
  D_h(p_\theta\|p_r)
  =
  \int p_r(\xx)h(\rho(\xx))\,\dm\xx,
  \qquad
  \rho(\xx):=\frac{p_\theta(\xx)}{p_r(\xx)}.
\end{equation}
Assume \assumpref{ass:regularity} applies to
$\cL[p_\theta]=D_h(p_\theta\|p_r)$, and assume $p_\theta$ and $p_r$ are smooth and strictly positive on an open set containing $\operatorname{supp}(p_\theta)$. Then the distribution gradient is
\begin{equation}
  \gv_h(\xx)
  =
  h''(\rho(\xx))\,\rho(\xx)\,(\ss_\theta(\xx)-\ss_r(\xx)),
\end{equation}
and
\begin{equation}
  \nabla_\theta D_h(p_\theta\|p_r)
  =
  \EEb{\zz \sim p_z}
  {
    \bigl[J_\theta G_\theta(\zz,\cc)\bigr]^\top
    \left[
      w(\xx)(\ss_\theta(\xx)-\ss_r(\xx))
    \right]_{\xx=G_\theta(\zz,\cc)}
  },
  \label{eq:unified_thm}
\end{equation}
where $w(\xx)=h''(\rho(\xx))\rho(\xx)$.
\end{theorem}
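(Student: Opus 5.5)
The plan is to reduce the statement to the general identity \eqref{eq:general_unified_grad} proved above. That identity says that, under \assumpref{ass:regularity}, $\nabla_\theta\cD=\mathbb E_\zz[(J_\theta G_\theta)^\top\nabla_\xx\psi(G_\theta(\zz,\cc))]$ for any discrepancy with first variation $\psi$. It therefore suffices to do two things. First, identify the first variation $\psi_h:=\delta D_h/\delta p_\theta$. Second, compute $\nabla_\xx\psi_h$ and show that it equals $h''(\rho)\rho(\ss_\theta-\ss_r)$. Formula \eqref{eq:unified_thm} then follows by substituting $\gv_h=\nabla_\xx\psi_h$ and setting $w=h''(\rho)\rho$.

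For the first variation, I would differentiate $\epsilon\mapsto D_h(p_{\theta_\epsilon}\|p_r)=\int p_r\,h(p_{\theta_\epsilon}/p_r)\,\dm\xx$ under the integral sign, which is allowed by \assumpref{ass:regularity}(b). Because $p_r$ does not depend on $\theta$, the chain rule gives
\begin{equation}
\frac{d}{d\epsilon}D_h\Big|_{\epsilon=0}=\int p_r\,h'(\rho)\,\frac{\partial_\epsilon p_{\theta_\epsilon}}{p_r}\,\dm\xx=\int h'(\rho(\xx))\,\partial_\epsilon p_{\theta_\epsilon}(\xx)\big|_{\epsilon=0}\,\dm\xx .
\end{equation}
Hence $\psi_h=h'(\rho)$, up to an additive constant that the gradient removes. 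As a sanity check, the reverse-KL case $h(r)=r\log r$ gives $\psi=\log\rho+1$, which matches the main text.

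For the gradient, I would use the fact that $p_\theta$ and $p_r$ are smooth and strictly positive on an open set containing the support. This makes $\log\rho=\log p_\theta-\log p_r$ differentiable there, so $\nabla_\xx\rho=\rho\,\nabla_\xx\log\rho=\rho(\ss_\theta-\ss_r)$. Since $h\in C^2$, the chain rule then gives $\nabla_\xx h'(\rho)=h''(\rho)\nabla_\xx\rho=h''(\rho)\rho(\ss_\theta-\ss_r)$, which is the claimed $\gv_h$. Substituting this into \eqref{eq:general_unified_grad} finishes the proof.

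I expect the main obstacle to be regularity rather than algebra. Specifically, I need to justify exchanging $\frac{d}{d\epsilon}$ with the integral of $p_r h(\rho)$. I also need the boundary term $h'(\rho)p_\theta\uu$ to vanish in the integration by parts behind \eqref{eq:first_variation_ip}. Both are covered by the hypothesis that \assumpref{ass:regularity} applies to $\cL[p_\theta]=D_h$, with (c) instantiated at $\psi=h'(\rho)$. I would state this explicitly and note that positivity on a neighborhood of the support keeps $\rho$ inside $(0,\infty)$, where $h'$ and $h''$ are defined. Finally, the gradient computation only needs to hold $p_\theta$-almost everywhere, since $\gv_h$ is evaluated only at generated samples $G_\theta(\zz,\cc)$.
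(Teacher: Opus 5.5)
Your proposal is correct and follows essentially the same route as the paper: identify the first variation $\psi_h=h'(\rho)$ (up to an additive constant), compute $\nabla_\xx\psi_h=h''(\rho)\,\rho\,(\ss_\theta-\ss_r)$ via the chain rule and $\nabla_\xx\rho=\rho(\ss_\theta-\ss_r)$, and substitute into the general template \eqref{eq:general_unified_grad}. The differences are cosmetic: you differentiate along the parametric path $p_{\theta_\epsilon}$, which is exactly the pairing the template's proof uses, rather than a generic zero-mass perturbation $p_\theta+\epsilon\,\delta p$, and you obtain $\nabla_\xx\rho$ through $\nabla_\xx\log\rho$ instead of the quotient rule.
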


\begin{proof}
We first compute the first variation with respect to the model density $p_\theta$, while keeping $p_r$ fixed. Let $p_\epsilon=p_\theta+\epsilon\,\delta p$ be an admissible smooth perturbation with $\int \delta p(\xx)\,\dm\xx=0$, and define $\rho_\epsilon(\xx):=p_\epsilon(\xx)/p_r(\xx)$. Since $p_r$ is fixed, we have
\begin{equation}
  \rho_\epsilon(\xx)
  =
  \rho(\xx)
  +
  \epsilon\,\frac{\delta p(\xx)}{p_r(\xx)}.
\end{equation}
Therefore the derivative is
\begin{align}
  \frac{d}{d\epsilon}D_h(p_\epsilon\|p_r)\bigg|_{\epsilon=0}
  &=
  \int
  p_r(\xx)\,
  \left[
    \frac{d}{d\epsilon} h\left(\rho_\epsilon(\xx)\right)\bigg|_{\epsilon=0}\,
  \right]
  \,\dm\xx \\
  &=
  \int
  p_r(\xx)\,
  \left[
    \frac{d}{d\epsilon} h\left(\rho(\xx) + \epsilon\,\frac{\delta p(\xx)}{p_r(\xx)}\right)\bigg|_{\epsilon=0}\,
  \right]
  \,\dm\xx \\
  &=
  \int
  p_r(\xx)\,
  h'(\rho(\xx))\,
  \frac{\delta p(\xx)}{p_r(\xx)}
  \,\dm\xx \\
  &=
  \int h'(\rho(\xx))\,\delta p(\xx)\,\dm\xx\,.
  \label{eq:first_variation_potential}
\end{align}
By definition, the first-variation potential is any function $\psi_h$ whose pairing with every admissible density perturbation $\delta p$ gives the directional derivative:
\begin{equation}
  \frac{d}{d\epsilon}D_h(p_\theta+\epsilon\,\delta p\|p_r)\bigg|_{\epsilon=0}
  =
  \int \psi_h(\xx)\,\delta p(\xx)\,\dm\xx .
  \label{eq:first_variation_potential_definition}
\end{equation}
Comparing this defining identity \eqref{eq:first_variation_potential_definition} with \eqref{eq:first_variation_potential}, we can identify
\begin{equation}
  \psi_h(\xx)
  :=
  \frac{\delta D_h}{\delta p_\theta}(\xx)
  =
  h'(\rho(\xx)).
\end{equation}
This identification is unique up to an additive constant on the support of admissible perturbations, because $\int \delta p(\xx)\,\dm\xx=0$ and
\begin{equation}
  \int \bigl(\psi_h(\xx)+c_0\bigr)\delta p(\xx)\,\dm\xx
  =
  \int \psi_h(\xx)\delta p(\xx)\,\dm\xx
\end{equation}
for any scalar constant $c_0$ (distinct from the conditioning vector $\cc$).

We now compute the sample-space field. By the chain rule,
\begin{equation}
  \label{eq:chain_rule}
  \nabla_\xx \psi_h(\xx)
  =
  h''(\rho(\xx))\nabla_\xx \rho(\xx).
\end{equation}
It remains to express $\nabla_\xx \rho$ in terms of the two scores. Since $\rho=p_\theta/p_r$ and both densities are positive,
\begin{equation}
  \nabla_\xx \rho(\xx)
  =
  \frac{\nabla_\xx p_\theta(\xx)}{p_r(\xx)}
  -
  \frac{p_\theta(\xx)\nabla_\xx p_r(\xx)}{p_r(\xx)^2}.
\end{equation}
Using $\nabla_\xx p_\theta=p_\theta\ss_\theta$ and $\nabla_\xx p_r=p_r\ss_r$, this becomes
\begin{align}
  \nabla_\xx \rho(\xx)
  &=
  \frac{p_\theta(\xx)\ss_\theta(\xx)}{p_r(\xx)}
  -
  \frac{p_\theta(\xx)p_r(\xx)\ss_r(\xx)}{p_r(\xx)^2} \nonumber \\
  &=
  \rho(\xx)\bigl(\ss_\theta(\xx)-\ss_r(\xx)\bigr).
\end{align}
Substituting this identity into \eqref{eq:chain_rule} gives
\begin{equation}
  \nabla_\xx \psi_h(\xx)
  =
  h''(\rho(\xx))\,\rho(\xx)(\ss_\theta(\xx)-\ss_r(\xx)).
\end{equation}
Therefore the distribution gradient is
\begin{equation}
  \gv_h(\xx)
  =
  \nabla_\xx\psi_h(\xx)
  =
  h''(\rho(\xx))\,\rho(\xx)(\ss_\theta(\xx)-\ss_r(\xx)).
\end{equation}

Finally, applying the general first-variation template \eqref{eq:general_unified_grad} to $\cL[p_\theta]=D_h(p_\theta\|p_r)$ yields
\begin{align}
  \nabla_\theta D_h(p_\theta \| p_r)
  &=
  \EEb{\zz\sim p_z}{
    \bigl[J_\theta G_\theta(\zz,\cc)\bigr]^\top
    \gv_h(G_\theta(\zz,\cc))
  } \nonumber \\
  &=
  \EEb{\zz\sim p_z}{
    \bigl[J_\theta G_\theta(\zz,\cc)\bigr]^\top
    \left[
      h''(\rho(\xx))\,\rho(\xx)
      (\ss_\theta(\xx)-\ss_r(\xx))
    \right]_{\xx=G_\theta(\zz,\cc)}
  }.
\end{align}
With $w(\xx)=h''(\rho(\xx))\rho(\xx)$, this is exactly
\eqref{eq:unified_thm}.
\end{proof}

\subsubsection{Important special cases}
\label{app:unified_grad_special_cases}
The density-ratio-weighted form $w(\xx)(\ss_\theta-\ss_r)$ with $w=h''(\rho)\rho$ is the one-step $f$-divergence template of \citet{xu2025one}, later extended to broader divergence families by \citet{wang2026uni}; reading the optimal-discriminator field of GANs as a weighted score difference follows \citet{yi2023monoflow}.

\paragraph{Reverse KL / DMD.}
For $h(\rho)=\rho\log \rho$, we have $h''(\rho)=1/\rho$, hence $w=\rho h''(\rho)=1$.
The distribution gradient is therefore
\begin{equation}
  \gv_{\mathrm{KL}}(\xx)=\ss_\theta(\xx)-\ss_r(\xx).
\end{equation}
This is the ideal reverse-KL field; the score-difference component of practical DMD estimates its noisy-marginal counterpart with approximate reference and online fake score models.
It is not the complete DMD objective: the original method also uses an LPIPS regression term on pre-generated teacher pairs, and its finite-training online fake-score estimate can be lagged or biased~\citep{yin2024one}.
The corresponding sample-space descent direction is $-\gv_{\mathrm{KL}}=\ss_r-\ss_\theta$.

\paragraph{Scaled Jensen--Shannon / saturating GAN.}
Using twice the conventional equal-mixture JSD generator (the scale is chosen to match the saturating GAN field),
\begin{equation}
  h_{\mathrm{JSD}}(\rho)
  =
  \rho\log \rho-(1+\rho)\log\frac{1+\rho}{2},
\end{equation}
we have
\begin{equation}
  h_{\mathrm{JSD}}''(\rho)=\frac{1}{\rho(1+\rho)},\qquad
  w_{\mathrm{JSD}}(\rho)=\frac{1}{1+\rho}=D^*(\xx),
\end{equation}
where $D^*(\xx)=p_r/(p_r+p_\theta)$ is the optimal real-probability discriminator~\citep{goodfellow2014generative} (here the superscript $^*$ denotes optimality, distinct from its use for exact marginal velocity fields elsewhere).

\paragraph{Detached optimal discriminator generator updates.}
Assume smooth positive densities and an exact optimal discriminator.
We hold the discriminator parameters fixed while retaining derivatives with respect to its input.
Under $D^*(\xx) = p_r/(p_r+p_\theta)$, we have $1{-}D^* = p_\theta/(p_r+p_\theta)$.
For the saturating objective $\ell_{\mathrm{sat}}(\xx) = \log(1{-}D^*(\xx)) = \log p_\theta - \log(p_r+p_\theta)$:
\begin{align}
  \nabla_\xx \ell_{\mathrm{sat}}(\xx)
  &= \ss_\theta(\xx) - \frac{p_r\,\ss_r(\xx) + p_\theta\,\ss_\theta(\xx)}{p_r+p_\theta} \nonumber \\
  &= \ss_\theta - D^*\ss_r - (1{-}D^*)\ss_\theta
  = D^*(\xx)(\ss_\theta(\xx)-\ss_r(\xx))\,.
\end{align}
For the non-saturating objective $\ell_{\mathrm{ns}}(\xx) = -\log D^*(\xx) = \log(p_r+p_\theta) - \log p_r$:
\begin{align}
  \nabla_\xx \ell_{\mathrm{ns}}(\xx)
  &= \frac{p_r\,\ss_r(\xx) + p_\theta\,\ss_\theta(\xx)}{p_r+p_\theta} - \ss_r(\xx) \nonumber \\
  &= D^*\ss_r + (1{-}D^*)\ss_\theta - \ss_r
  = (1{-}D^*(\xx))(\ss_\theta(\xx)-\ss_r(\xx))\,.
\end{align}
The saturating field agrees with the distribution gradient of the scaled JSD above; the non-saturating field is instead a detached-optimal-discriminator generator update.

\paragraph{Special cases.}
\begin{center}
\small
\begin{tabular}{lccc}
\toprule
\textbf{Objective / divergence} & $h(\rho)$ & $h''(\rho)$ & $w = h''(\rho) \cdot \rho$ \\
\midrule
Reverse KL (DMD) & $\rho \log \rho$ & $1/\rho$ & $1$ \\
Scaled JSD (2$\times$JSD) & $\rho\log \rho - (1{+}\rho)\log\frac{1{+}\rho}{2}$ & $\frac{1}{\rho(1{+}\rho)}$ & $\frac{1}{1{+}\rho} = D^*$ \\
NS-GAN detached update & ---$^\dagger$ & --- & $1 - D^*$ \\
Saturating GAN & scaled JSD above & as above & $D^*$ \\
\bottomrule
\end{tabular}
\end{center}
Here $^\dagger$ denotes only the detached optimal-$D^*$ generator update field, not an identified $f$-divergence first variation; the saturating-GAN row is the scaled $2{\times}$JSD field in the preceding row.

\subsection{Score Decomposition of the Reverse-KL Field}
\label{app:decomposition}

\subsubsection{Score Decomposition Identity}
\label{app:decomp_identity}

\begin{definition}[Moment-Matched Gaussian]
\label{def:moment_gaussian}
For any distribution $p$ over $\R^d$ with finite first and second moments and positive-definite covariance, define its \emph{moment-matched Gaussian} as
\begin{equation}
  q_p := \cN(\mu_p, \Sigma_p)\,,
  \qquad
  \mu_p := \mathbb{E}_p[\xx]\,,
  \quad
  \Sigma_p := \operatorname{Cov}_p(\xx)\,.
  \label{eq:moment_gaussian}
\end{equation}
\end{definition}

\begin{definition}[Non-Gaussian Score Residual]
\label{def:non_gaussian_residual}
For any smooth distribution $p$ satisfying \defref{def:moment_gaussian}, with score $\ss_p(\xx) = \nabla_\xx \log p(\xx)$ and moment-matched Gaussian $q_p$, define the non-Gaussian score residual as
\begin{equation}
  \boldsymbol{\eta}_p(\xx) := \ss_p(\xx) - \ss_{q_p}(\xx)\,.
  \label{eq:non_gaussian_residual}
\end{equation}
where $\ss_{q_p}(\xx) = -\Sigma_p^{-1}(\xx - \mu_p)$.
\end{definition}

Therefore, based on \defref{def:moment_gaussian} and \defref{def:non_gaussian_residual}, we can decompose the score function of any distribution $p$ as
\begin{equation}
  \label{eq:score_decomposition_identity}
  \ss_p(\xx) = \ss_{q_p}(\xx) + \boldsymbol{\eta}_p(\xx)\,.
\end{equation}
This is the \emph{score decomposition} in Eq.~\ref{eq:score_decomp_main} of the main text.

\subsubsection{Orthogonality of the Residual}
\label{app:orthogonality}

We use the following scalar-test-function form of Stein's identity \citep[Def.~2.1 and Lem.~2.2]{liu2016kernelized}. If $\varphi:\R^d\to\R$ belongs to the Stein class of $p$, then
\begin{equation}
  \mathbb{E}_p[\ss_p(\xx)\varphi(\xx)+\nabla_\xx\varphi(\xx)]=\mathbf{0}.
\end{equation}
The two identities in \lemref{lem:score_stein_identities} correspond to $\varphi(\xx)=1$ and $\varphi(\xx)=x_l$. We include a coordinate proof for clarity.

\begin{lemma}[Score Stein Identities]
\label{lem:score_stein_identities}
Let $p$ be a strictly positive smooth density on $\R^d$ with score
$\ss_p(\xx)=\nabla_\xx\log p(\xx)$. Assume
\begin{equation}
  \mathbb E_p[\|\ss_p(\xx)\|]<\infty,
  \qquad
  \mathbb E_p[\|\xx\|\,\|\ss_p(\xx)\|]<\infty.
  \label{eq:stein_integrability}
\end{equation}
For $B_R=\{\xx:\|\xx\|\le R\}$, let $n_k$ be the $k$-th coordinate of the outward unit normal on $\partial B_R$.
Assume also that the following boundary condition holds for all coordinates $(k,l)$:
\begin{equation}
  \lim_{R\to\infty}\int_{\partial B_R}p(\xx)n_k\,\dm S=0,
  \qquad
  \lim_{R\to\infty}\int_{\partial B_R}p(\xx)x_l n_k\,\dm S=0.
  \label{eq:stein_boundary_flux}
\end{equation}
Then
\begin{equation}
  \mathbb{E}_p[\ss_p(\xx)]=\mathbf{0},
  \qquad
  \mathbb{E}_p[\ss_p(\xx)\xx^\top]=-\mI_d.
\end{equation}
\end{lemma}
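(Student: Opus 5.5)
The plan is to prove both identities by the divergence theorem on the balls $B_R$, using the fact that $p\,\ss_p=\nabla_\xx p$ because $p$ is strictly positive and smooth. I then pass to the limit $R\to\infty$. The integrability assumptions \eqref{eq:stein_integrability} control the volume integrals, and the flux conditions \eqref{eq:stein_boundary_flux} kill the boundary terms.

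For the first identity, fix a coordinate $k$ and write
\begin{equation*}
\int_{B_R} p(\xx)\,s_{p,k}(\xx)\,\dm\xx=\int_{B_R}\partial_k p(\xx)\,\dm\xx=\int_{\partial B_R}p(\xx)\,n_k\,\dm S,
\end{equation*}
which is the divergence theorem applied to the vector field $p\,\mathbf e_k$. Since $\mathbb E_p[\|\ss_p\|]<\infty$, dominated convergence shows that the left side tends to $\mathbb E_p[s_{p,k}]$. The right side tends to $0$ by the first part of \eqref{eq:stein_boundary_flux}. Together these give $\mathbb E_p[\ss_p]=\mathbf 0$.

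For the second identity, fix $(k,l)$ and use the product rule $\partial_k(p\,x_l)=x_l\,\partial_k p+\delta_{kl}\,p$. Integrating over $B_R$ gives
\begin{equation*}
\int_{B_R}x_l\,p\,s_{p,k}\,\dm\xx=\int_{\partial B_R}p\,x_l\,n_k\,\dm S-\delta_{kl}\int_{B_R}p\,\dm\xx .
\end{equation*}
As $R\to\infty$:
\begin{itemize}
\item the left side converges to $\mathbb E_p[s_{p,k}x_l]$ by dominated convergence, using $\mathbb E_p[\|\xx\|\|\ss_p\|]<\infty$;
\item the boundary term vanishes by the second part of \eqref{eq:stein_boundary_flux};
\item the last integral tends to $1$ because $p$ is a probability density.
\end{itemize}
Hence $\mathbb E_p[\ss_p\xx^\top]_{kl}=-\delta_{kl}$, that is, $\mathbb E_p[\ss_p\xx^\top]=-\mI_d$.

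The main point of care is the limit. The identity only holds exactly on each finite ball, so the argument needs the absolute integrability of the integrands before taking $R\to\infty$. Otherwise the truncated integrals need not converge to the expectations. The stated assumptions are precisely what is needed here; everything else is routine calculus on smooth integrands over compact balls. As a consequence, I also note that combining these identities with $\mathbb E_p[\ss_{q_p}]=\mathbf 0$ and $\mathbb E_p[\ss_{q_p}\xx^\top]=-\Sigma_p^{-1}\Sigma_p=-\mI_d$ yields the orthogonality relations \eqref{eq:orthogonality_main} for $\boldsymbol{\eta}_p$.
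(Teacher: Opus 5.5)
Your proposal is correct and follows the paper's proof essentially step for step. Both arguments apply the divergence theorem on $B_R$ to $\partial_k p$ and to $\partial_k(p\,x_l)$, use the flux conditions to remove the boundary terms, and use the integrability assumptions to pass to the limit $R\to\infty$. The only difference is cosmetic: you expand the product rule on the finite ball and take limits term by term, with explicit dominated and monotone convergence. The paper instead first passes to $\int\partial_k(p\,x_l)\,\dm\xx=0$ and then expands.
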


\begin{proof}
Since $p(\xx)\ss_p(\xx)=\nabla_\xx p(\xx)$, the $k$-th coordinate of
$\mathbb{E}_p[\ss_p(\xx)]$ is
\begin{equation}
  \mathbb{E}_p[[\ss_p(\xx)]_k]
  =
  \int \frac{\partial p(\xx)}{\partial x_k}\,\dm\xx.
\end{equation}
By the divergence theorem on $B_R$,
\begin{equation}
  \int_{B_R}\frac{\partial p(\xx)}{\partial x_k}\,\dm\xx
  =
  \int_{\partial B_R}p(\xx)n_k\,\dm S.
\end{equation}
By \eqref{eq:stein_integrability}, the volume integrand is absolutely integrable. Taking $R\to\infty$ and using the first limit in \eqref{eq:stein_boundary_flux} gives
\begin{equation}
  \int \frac{\partial p(\xx)}{\partial x_k}\,\dm\xx=0.
\end{equation}
Since this holds for every $k$,
\begin{equation}
  \mathbb{E}_p[\ss_p(\xx)]=\mathbf{0}.
  \label{eq:stein_mean}
\end{equation}

For the second identity, fix coordinates $(k,l)$. Again by the divergence theorem,
\begin{equation}
  \int_{B_R}
  \frac{\partial}{\partial x_k}\!\left[p(\xx)x_l\right]
  \,\dm\xx
  =
  \int_{\partial B_R}p(\xx)x_l n_k\,\dm S.
\end{equation}
The second condition in \eqref{eq:stein_integrability} makes the expanded derivative integrable. Taking $R\to\infty$ and using the second limit in \eqref{eq:stein_boundary_flux} gives
\begin{equation}
  \int
  \frac{\partial}{\partial x_k}\!\left[p(\xx)x_l\right]
  \,\dm\xx
  =
  0.
\end{equation}
Expanding the derivative gives
\begin{equation}
  \frac{\partial}{\partial x_k}\!\left[p(\xx)x_l\right]
  =
  \frac{\partial p(\xx)}{\partial x_k}x_l
  +
  p(\xx)\delta_{kl}
  =
  p(\xx)[\ss_p(\xx)]_k x_l
  +
  p(\xx)\delta_{kl},
\end{equation}
where $\delta_{kl}$ is the Kronecker delta. Integrating and using $\int p(\xx)\,\dm\xx=1$ yields
\begin{equation}
  \mathbb{E}_p\!\left[[\ss_p(\xx)]_k x_l\right]
  +
  \delta_{kl}
  =
  0.
\end{equation}
Thus $\mathbb{E}_p[[\ss_p(\xx)]_k x_l]=-\delta_{kl}$ for all $(k,l)$, or in matrix form,
\begin{equation}
  \mathbb{E}_p[\ss_p(\xx)\xx^\top] = -\mI_d.
  \label{eq:stein_cov}
\end{equation}
\end{proof}

\begin{proposition}[Orthogonality of the Non-Gaussian Score Residual]
\label{prop:residual_orthogonality}
Let $p$ be a strictly positive smooth density on $\R^d$ with finite second moment and positive-definite covariance $\Sigma_p$. Assume the integrability condition \eqref{eq:stein_integrability} and boundary condition \eqref{eq:stein_boundary_flux}. Finite Fisher information together with the finite second moment is sufficient for \eqref{eq:stein_integrability} by Cauchy--Schwarz. Then the non-Gaussian score residual $\boldsymbol{\eta}_p=\ss_p-\ss_{q_p}$ satisfies
\begin{enumerate}[label=(\roman*), nosep]
  \item $\mathbb{E}_p[\boldsymbol{\eta}_p(\xx)] = \mathbf{0}$.
  \item $\mathbb{E}_p[\boldsymbol{\eta}_p(\xx)\xx^\top] = \mathbf{0}_{d\times d}$.
\end{enumerate}
\end{proposition}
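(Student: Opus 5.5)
The plan is to reduce both claims to \lemref{lem:score_stein_identities} by linearity of expectation. By \defref{def:non_gaussian_residual}, $\boldsymbol{\eta}_p(\xx)=\ss_p(\xx)+\Sigma_p^{-1}(\xx-\mu_p)$. It therefore suffices to compute the expectations of the Gaussian part $\Sigma_p^{-1}(\xx-\mu_p)$ and of $\ss_p$ separately under $p$, and then subtract. Integrability is no issue for the Gaussian part, because $p$ has finite second moment. The score part is integrable by \eqref{eq:stein_integrability}. The remark in the statement justifies this when $p$ has finite Fisher information: Cauchy--Schwarz gives $\mathbb E_p[\|\xx\|\,\|\ss_p\|]\le(\mathbb E_p\|\xx\|^2)^{1/2}(\mathbb E_p\|\ss_p\|^2)^{1/2}$, and similarly $\mathbb E_p\|\ss_p\|\le(\mathbb E_p\|\ss_p\|^2)^{1/2}$.

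For (i), I take expectations termwise. \lemref{lem:score_stein_identities} gives $\mathbb E_p[\ss_p]=\mathbf 0$, and $\mathbb E_p[\Sigma_p^{-1}(\xx-\mu_p)]=\Sigma_p^{-1}(\mu_p-\mu_p)=\mathbf 0$ by the definition of $\mu_p$. Adding the two gives $\mathbb E_p[\boldsymbol{\eta}_p]=\mathbf 0$.

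For (ii), I use the second Stein identity, $\mathbb E_p[\ss_p\xx^\top]=-\mI_d$. For the Gaussian part I write $\xx^\top=(\xx-\mu_p)^\top+\mu_p^\top$, so that
\begin{equation*}
\mathbb E_p\!\left[\Sigma_p^{-1}(\xx-\mu_p)\xx^\top\right]=\Sigma_p^{-1}\,\mathbb E_p\!\left[(\xx-\mu_p)(\xx-\mu_p)^\top\right]+\Sigma_p^{-1}\,\mathbb E_p[\xx-\mu_p]\,\mu_p^\top=\Sigma_p^{-1}\Sigma_p+\mathbf 0=\mI_d .
\end{equation*}
Summing the two contributions gives $-\mI_d+\mI_d=\mathbf 0_{d\times d}$.

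The only step needing care is checking that the hypotheses of \lemref{lem:score_stein_identities} actually hold, in particular the boundary-flux condition \eqref{eq:stein_boundary_flux}. The proposition assumes it directly, so I invoke it rather than derive it. The remaining computations are routine linear algebra, and positive-definiteness of $\Sigma_p$ is what makes $\Sigma_p^{-1}$ well defined. Finally, I would note that (i) and (ii) together say $\boldsymbol{\eta}_p$ is $L^2(p)$-orthogonal to constant and linear functions, which is the interpretation used in the main text.
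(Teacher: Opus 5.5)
Your proposal is correct and matches the paper's proof step for step. The paper also invokes \lemref{lem:score_stein_identities} to get $\mathbb E_p[\ss_p]=\mathbf 0$ and $\mathbb E_p[\ss_p\xx^\top]=-\mI_d$, then uses the split $\xx=(\xx-\mu_p)+\mu_p$ to show the moment-matched Gaussian score satisfies the same two identities under $p$, and subtracts; writing $\boldsymbol{\eta}_p=\ss_p+\Sigma_p^{-1}(\xx-\mu_p)$ and adding instead is only a sign convention, and your integrability remarks are a harmless addition.
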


\begin{proof}
\textbf{Step 1: Score Stein identities.}
By \lemref{lem:score_stein_identities},
\begin{equation}
  \mathbb{E}_p[\ss_p(\xx)]=\mathbf{0},
  \qquad
  \mathbb{E}_p[\ss_p(\xx)\xx^\top]=-\mI_d.
\end{equation}

\textbf{Step 2: The moment-matched Gaussian has the same two identities under
$p$.}
By definition, the score of $q_p=\cN(\mu_p,\Sigma_p)$ is
\begin{equation}
  \ss_{q_p}(\xx)=-\Sigma_p^{-1}(\xx-\mu_p).
\end{equation}
Taking expectation under $p$ gives
\begin{equation}
  \mathbb{E}_p[\ss_{q_p}(\xx)]
  =
  -\Sigma_p^{-1}\bigl(\mathbb{E}_p[\xx]-\mu_p\bigr)
  =
  \mathbf{0}.
\end{equation}
For the first-moment identity,
\begin{align}
  \mathbb{E}_p[\ss_{q_p}(\xx)\xx^\top]
  &=
  -\Sigma_p^{-1}\mathbb{E}_p[(\xx-\mu_p)\xx^\top] \nonumber \\
  &=
  -\Sigma_p^{-1}
  \mathbb{E}_p[(\xx-\mu_p)((\xx-\mu_p)+\mu_p)^\top] \nonumber \\
  &=
  -\Sigma_p^{-1}
  \left(
    \mathbb{E}_p[(\xx-\mu_p)(\xx-\mu_p)^\top]
    +
    \mathbb{E}_p[\xx-\mu_p]\mu_p^\top
  \right) \nonumber \\
  &=
  -\Sigma_p^{-1}\Sigma_p
  = -\mI_d\,.
\end{align}

\textbf{Step 3: Subtract the Gaussian identities from the true-score
identities.}
Since $\boldsymbol{\eta}_p=\ss_p-\ss_{q_p}$, the identities in
\lemref{lem:score_stein_identities} and Step~2 imply
\begin{equation}
  \mathbb{E}_p[\boldsymbol{\eta}_p(\xx)]
  =
  \mathbb{E}_p[\ss_p(\xx)]
  -
  \mathbb{E}_p[\ss_{q_p}(\xx)]
  =
  \mathbf{0}.
\end{equation}
Similarly,
\begin{equation}
  \mathbb{E}_p[\boldsymbol{\eta}_p(\xx)\xx^\top]
  =
  \mathbb{E}_p[\ss_p(\xx)\xx^\top]
  -
  \mathbb{E}_p[\ss_{q_p}(\xx)\xx^\top]
  =
  -\mI_d-(-\mI_d)
  =
  \mathbf{0}_{d\times d}.
\end{equation}
Finally, combining (i) and (ii) yields the equivalent centered form:
\begin{equation}
  \mathbb{E}_p[\boldsymbol{\eta}_p(\xx)(\xx-\mu_p)^\top]
  =
  \mathbb{E}_p[\boldsymbol{\eta}_p(\xx)\xx^\top]
  -
  \mathbb{E}_p[\boldsymbol{\eta}_p(\xx)]\mu_p^\top
  =
  \mathbf{0}_{d\times d}.
\end{equation}
\end{proof}

\pagebreak[3]
\subsubsection{Reverse-KL Field Decomposition}
\label{app:grad_decomposition}

\begin{theorem}[Reverse-KL Field Decomposition]
\label{thm:grad_decomp}
Assume $p_\theta$ and $p_r$ are strictly positive smooth densities satisfying \defref{def:moment_gaussian}, and set $q_i:=q_{p_i}$ and $\boldsymbol{\eta}_i:=\boldsymbol{\eta}_{p_i}$ for $i\in\{r,\theta\}$. For the reverse-KL distribution gradient $\gv_{\mathrm{KL}}(\xx) = \ss_\theta(\xx) - \ss_r(\xx)$,
\begin{equation}
  \gv_{\mathrm{KL}}(\xx)
  =
  \underbrace{\left(\ss_{q_\theta}(\xx) - \ss_{q_r}(\xx)\right)}_{\gv_{\mathrm{KL}}^{\mathrm{G}}(\xx):\;\text{Gaussian moment component}}
  +
  \underbrace{\left(\boldsymbol{\eta}_\theta(\xx) - \boldsymbol{\eta}_r(\xx)\right)}_{\gv_{\mathrm{KL}}^{\mathrm{nG}}(\xx):\;\text{non-Gaussian residual component}}
  \label{eq:grad_decomp}
\end{equation}
where $\gv_{\mathrm{KL}}^{\mathrm{G}}(\xx) = -\Sigma_\theta^{-1}(\xx - \mu_\theta) + \Sigma_r^{-1}(\xx - \mu_r)$ depends only on the first two moments, and $\gv_{\mathrm{KL}}^{\mathrm{nG}}(\xx) = \boldsymbol{\eta}_\theta(\xx) - \boldsymbol{\eta}_r(\xx)$ captures the remaining score structure.
This exact decomposition is stated for the unweighted reverse-KL field. For a general $f$-divergence field $w(\xx)(\ss_\theta-\ss_r)$, the algebraic decomposition is multiplied by $w(\xx)$, but the residual orthogonality identities need not be preserved after weighting.
\end{theorem}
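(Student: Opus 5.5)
The plan is a direct algebraic argument: apply the score decomposition \eqref{eq:score_decomposition_identity} to each law separately and subtract. First I will confirm the objects are well defined. By \defref{def:moment_gaussian}, each $p_i$, $i\in\{r,\theta\}$, has finite first and second moments and positive-definite covariance $\Sigma_i$. Hence $q_i=\cN(\mu_i,\Sigma_i)$ exists, $\Sigma_i^{-1}$ exists, and $\ss_{q_i}(\xx)=-\Sigma_i^{-1}(\xx-\mu_i)$. Because $p_i$ is smooth and strictly positive, $\ss_i=\nabla_\xx\log p_i$ is defined everywhere. \defref{def:non_gaussian_residual} then gives $\boldsymbol{\eta}_i:=\ss_i-\ss_{q_i}$ pointwise.

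Next I will write $\ss_\theta=\ss_{q_\theta}+\boldsymbol{\eta}_\theta$ and $\ss_r=\ss_{q_r}+\boldsymbol{\eta}_r$ and substitute both into $\gv_{\mathrm{KL}}=\ss_\theta-\ss_r$. Regrouping the terms gives \eqref{eq:grad_decomp} at every $\xx$. No integrability is needed here, because this is a pointwise identity. Inserting the explicit Gaussian scores gives
$\gv_{\mathrm{KL}}^{\mathrm{G}}(\xx)=-\Sigma_\theta^{-1}(\xx-\mu_\theta)+\Sigma_r^{-1}(\xx-\mu_r)$.
This expression is an affine function of $\xx$ whose coefficients involve only $(\mu_\theta,\Sigma_\theta,\mu_r,\Sigma_r)$, which is the sense in which it "depends only on the first two moments." The residual component is simply the difference of the two residuals by definition.

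For the closing remarks I will multiply the identity by a scalar weight $w(\xx)$ to obtain $w(\ss_\theta-\ss_r)=w\,\gv^{\mathrm{G}}_{\mathrm{KL}}+w\,\gv^{\mathrm{nG}}_{\mathrm{KL}}$. Linearity makes this immediate. To show that orthogonality can fail, I will recall that \propref{prop:residual_orthogonality} gives $\mathbb E_p[\boldsymbol{\eta}_p]=\mathbf 0$ only because $p\,\boldsymbol{\eta}_p$ is a Stein-type expression. Multiplying by a nonconstant $w$ breaks this. A one-dimensional counterexample suffices: take $p$ non-Gaussian (for instance a Gaussian mixture) and $w$ nonconstant, so that $\mathbb E_p[w\boldsymbol{\eta}_p]=\operatorname{Cov}_p(w,\boldsymbol{\eta}_p)\neq0$ generically.

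I expect no step to be a real obstacle. The only care required is stating precisely which hypotheses are used, namely positivity for the scores and a positive-definite covariance for the Gaussian scores. I should also be clear that the weighted non-orthogonality is a "need not" claim, so exhibiting or sketching a single counterexample is enough.
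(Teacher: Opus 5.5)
Your proposal is correct and matches the paper's proof: substitute $\ss_i=\ss_{q_i}+\boldsymbol{\eta}_i$ for $i\in\{r,\theta\}$ into $\gv_{\mathrm{KL}}=\ss_\theta-\ss_r$ and regroup, which gives a pointwise identity needing no integrability. The paper gives no argument for the closing weighted-field remark, so your counterexample sketch goes beyond it; to make that sketch rigorous, fix a concrete weight of the form $w=h''(\rho)\rho$ (e.g.\ the JSD weight $1/(1+\rho)$) and a concrete non-Gaussian pair, instead of appealing to genericity.
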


\begin{proof}
Apply the score decomposition in \eqref{eq:score_decomposition_identity}
to both $\ss_\theta$ and $\ss_r$:
\begin{align}
  \ss_\theta(\xx) - \ss_r(\xx)
  &= \left[\ss_{q_\theta}(\xx) + \boldsymbol{\eta}_\theta(\xx)\right] - \left[\ss_{q_r}(\xx) + \boldsymbol{\eta}_r(\xx)\right] \nonumber \\
  &= \underbrace{\left(\ss_{q_\theta}(\xx) - \ss_{q_r}(\xx)\right)}_{\gv_{\mathrm{KL}}^{\mathrm{G}}} + \underbrace{\left(\boldsymbol{\eta}_\theta(\xx) - \boldsymbol{\eta}_r(\xx)\right)}_{\gv_{\mathrm{KL}}^{\mathrm{nG}}}\,.
\end{align}
\end{proof}

\subsubsection{Diagonal Form and Connection to Estimators}
\label{app:decomp_connection}

In the diagonal case $\Sigma_i = \diag(\sigma_{i,1}^2, \ldots, \sigma_{i,d}^2)$, where $\sigma_{i,j}>0$ is the standard deviation of coordinate $j$ under $p_i$ for $i\in\{r,\theta\}$,
\begin{equation}
  g^{\mathrm{G}}_{\mathrm{KL},j}(\xx) := [\gv_{\mathrm{KL}}^{\mathrm{G}}(\xx)]_j = [-\Sigma_\theta^{-1}(\xx - \mu_\theta) + \Sigma_r^{-1}(\xx - \mu_r)]_j = -\frac{x_j - \mu_{\theta,j}}{\sigma_{\theta,j}^2} + \frac{x_j - \mu_{r,j}}{\sigma_{r,j}^2}\,.
  \label{eq:gG_diagonal}
\end{equation}
Requiring $g^{\mathrm{G}}_{\mathrm{KL},j}(\xx) = 0$ for all $\xx$ forces the coefficient of $x_j$ to vanish:
\begin{equation}
  \frac{1}{\sigma_{r,j}^2} - \frac{1}{\sigma_{\theta,j}^2} = 0
  \;\;\Longrightarrow\;\;
  \sigma_{\theta,j} = \sigma_{r,j}\,.
\end{equation}
Substituting back into the constant term gives $\mu_{\theta,j} = \mu_{r,j}$.
In the same coordinate space, this is the fixed-point condition for diagonal Gaussian moment matching; Eq.~\ref{eq:moment_loss} imposes the analogous condition on pooled feature moments.
In $\phi$-space, the moment-matching gradient in~\propref{prop:mt_grad} defines a pooled feature-space regularizer motivated by $\gv_{\mathrm{KL}}^{\mathrm{G}}$.
At fixed feature-map parameters and queues, let $\hh_k=\phi_{\bar\theta}(\xx_k,\cc_k)$ be a current model feature. Its empirical sample-gradient contribution is
\begin{equation}
  \frac{\partial\widehat{\cL}_{\text{pop}}}{\partial\xx_k}
  = \bigl[J_\xx\phi_{\bar\theta}(\xx_k,\cc_k)\bigr]^\top
    \frac{\partial\widehat{\cL}_{\text{pop}}}{\partial\hh_k},
\end{equation}
where $J_\xx\phi_{\bar\theta}:=\partial\phi_{\bar\theta}/\partial\xx$, and the feature derivative is given in \propref{prop:mt_grad}. Thus the feature-moment loss is a pooled feature-space moment regularizer motivated by $\gv_{\mathrm{KL}}^{\mathrm{G}}$, not an estimate of it.

The internal velocity difference in the DMD loss in~\secref{sec:micro} yields an estimate of the full noisy reverse-KL field through Eq.~\ref{eq:d_micro}, including both moment and residual components.
The feature-moment loss adds an explicit constraint on pooled feature statistics alongside this pointwise signal.
Whether this signal has lower empirical gradient variance is an experimental question, not a consequence of the decomposition.

\subsubsection{Variational characterization via $L^2$ projection}
\label{app:l2_projection}

The score decomposition \eqref{eq:score_decomp_main} admits a variational interpretation that explains why the moment-matched Gaussian is not an arbitrary choice but rather the unique optimal affine approximation.

Assume that $p$ satisfies the regularity conditions of \propref{prop:residual_orthogonality} and additionally has finite Fisher information:
\begin{equation}
  \mathbb{E}_p[\|\ss_p(\xx)\|^2] = \mathbb{E}_p[\|\nabla_\xx \log p(\xx)\|^2] < \infty.
  \label{eq:finite_fisher}
\end{equation}
This ensures that the $L^2(p)$ optimization problem below is well-posed.

Consider the problem of approximating the score function $\ss_p$ by an affine function in $L^2(p)$:
\begin{equation}
\min_{\mA\in\mathbb{R}^{d\times d}, \bb\in\mathbb{R}^d}\;
  \mathbb{E}_p[\|\ss_p(\xx) - (\mA\xx+\bb)\|^2].
\end{equation}
The first-order optimality conditions (normal equations) for this least-squares problem are:
\begin{align}
  \mathbb{E}_p[\ss_p(\xx) - (\mA\xx+\bb)] &= \mathbf{0}, \label{eq:l2_normal_1} \\
  \mathbb{E}_p[(\ss_p(\xx) - (\mA\xx+\bb))\xx^\top] &= \mathbf{0}. \label{eq:l2_normal_2}
\end{align}
These are exactly the orthogonality conditions \eqref{eq:orthogonality_main}.

To solve for $\mA$ and $\bb$, expand the first condition:
\begin{equation}
  \mathbb{E}_p[\ss_p(\xx)] = \mA\mathbb{E}_p[\xx] + \bb.
\end{equation}
By Stein's identity in~\lemref{lem:score_stein_identities}, $\mathbb{E}_p[\ss_p(\xx)] = \mathbf{0}$ and $\mathbb{E}_p[\xx] = \mu_p$, so
\begin{equation}
  \bb = -\mA\mu_p.
  \label{eq:l2_b_solution}
\end{equation}
Substituting into the second condition and expanding:
\begin{align}
  \mathbb{E}_p[\ss_p(\xx)\xx^\top] &= \mA\mathbb{E}_p[\xx\xx^\top] + \bb\mathbb{E}_p[\xx]^\top \\
  &= \mA\mathbb{E}_p[\xx\xx^\top] - \mA\mu_p\mu_p^\top \\
  &= \mA(\mathbb{E}_p[\xx\xx^\top] - \mu_p\mu_p^\top) \\
  &= \mA\Sigma_p.
\end{align}
Again by Stein's identity, $\mathbb{E}_p[\ss_p(\xx)\xx^\top] = -\mI_d$, thus
\begin{equation}
  \mA = -\Sigma_p^{-1}.
  \label{eq:l2_A_solution}
\end{equation}
Combining \eqref{eq:l2_b_solution} and \eqref{eq:l2_A_solution} recovers
\begin{equation}
  \ss_{q_p}(\xx) = \mA\xx + \bb = -\Sigma_p^{-1}(\xx-\mu_p),
\end{equation}
which is exactly the score of the moment-matched Gaussian $q_p = \cN(\mu_p, \Sigma_p)$.

Therefore, $\ss_{q_p}$ is the orthogonal projection of $\ss_p$ onto the space of affine functions in $L^2(p)$, and the residual $\boldsymbol{\eta}_p := \ss_p - \ss_{q_p}$ is orthogonal to all affine functions. This yields a Pythagorean energy decomposition:
\begin{equation}
  \mathbb{E}_p[\|\ss_p\|^2] = \mathbb{E}_p[\|\ss_{q_p}\|^2] +
  \mathbb{E}_p[\|\boldsymbol{\eta}_p\|^2].
  \label{eq:pythagorean}
\end{equation}

\paragraph{Implications for the field decomposition.}
This projection property holds within each distribution separately: $\boldsymbol{\eta}_\theta$ is the $L^2(p_\theta)$-orthogonal residual, and $\boldsymbol{\eta}_r$ is the $L^2(p_r)$-orthogonal residual. The Pythagorean decomposition \eqref{eq:pythagorean} applies to each score function individually under its own measure.

However, the difference field $\gv_{\mathrm{KL}} = (\ss_\theta - \ss_r)$ does not automatically inherit this orthogonality structure. While each residual has mean zero under its own measure, the cross-terms between $\gv_{\mathrm{KL}}^{\mathrm{G}} = (\ss_{q_\theta} - \ss_{q_r})$ and $\gv_{\mathrm{KL}}^{\mathrm{nG}} = (\boldsymbol{\eta}_\theta - \boldsymbol{\eta}_r)$ need not vanish under either $p_\theta$ or $p_r$. When these cross-moments exist, neither vanishing cross-term is guaranteed:
\begin{equation}
  \mathbb{E}_{p_\theta}[\gv_{\mathrm{KL}}^{\mathrm{G}} \cdot \gv_{\mathrm{KL}}^{\mathrm{nG}}] \;\text{may be nonzero}, \quad
  \mathbb{E}_{p_r}[\gv_{\mathrm{KL}}^{\mathrm{G}} \cdot \gv_{\mathrm{KL}}^{\mathrm{nG}}] \;\text{may be nonzero}.
\end{equation}
The decomposition is exact algebraically but not energetically orthogonal in the combined field. The two losses therefore constrain complementary \emph{statistical structures} rather than forming an exact orthogonal decomposition of gradient variance.

\subsection{Score--Velocity Identity}
\label{app:score_velocity}

This section proves the general score--velocity identity in Eq.~\ref{eq:score_velocity} for the linear Gaussian path.
The identity is a specialization of the Gaussian-path score--velocity conversion arising in stochastic-interpolant formulations and summarized pedagogically by Holderrieth and Erives~\citep{holderrieth2025introduction} to the case $\alpha_t=1{-}t,\;\beta_t=t$.

\begin{proposition}[Score--Velocity Identity for the Linear Gaussian Path]
\label{prop:score_population_velocity}
Let $\xx \sim p$ have finite second moment, let $\mepsilon\sim p_z=\cN(\mathbf{0},\mI_d)$ be independent, and define $\xx_t = (1{-}t)\xx+t\mepsilon$ for $t\in(0,1)$.
Let $p_t$ denote the marginal density of $\xx_t$ and let $\ss_t(\xx_t) := \nabla_{\xx_t}\log p_t(\xx_t)$.
For fixed $t$, the marginal velocity field is the $L^2(p_t)$-unique minimizer, up to $p_t$-null sets, over measurable vector fields:
\begin{equation}
  \vv_t^*\in\argmin_{\vv:\R^d\to\R^d}
  \mathbb{E}\!\left[\|\vv(\xx_t)-(\mepsilon-\xx)\|_2^2\right],
  \qquad
  \vv_t^*(\xx_t)=\mathbb{E}[\mepsilon-\xx\mid\xx_t].
\end{equation}
Then, we have the following identity:
\begin{equation}
    \ss_t(\xx_t)
    =
    -\frac{\xx_t+(1{-}t)\vv_t^*(\xx_t)}{t}.
\end{equation}
\end{proposition}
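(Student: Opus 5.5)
The argument has two parts. First we identify the minimizer as a conditional expectation, which is standard $L^2$ projection. Then we obtain Tweedie's formula for the Gaussian convolution and convert it to a velocity.

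\textbf{Step 1 (minimizer).} Since $\xx$ has finite second moment and $\mepsilon$ is Gaussian, the target $\mepsilon-\xx$ lies in $L^2$. For any measurable $\vv$ with $\vv(\xx_t)\in L^2$, we decompose
\[
\mathbb{E}\|\vv(\xx_t)-(\mepsilon-\xx)\|^2=\mathbb{E}\|\vv(\xx_t)-\mathbb{E}[\mepsilon-\xx\mid\xx_t]\|^2+\mathbb{E}\|\mathbb{E}[\mepsilon-\xx\mid\xx_t]-(\mepsilon-\xx)\|^2 .
\]
The cross term vanishes by the tower property. Hence the minimizer is $\vv_t^*(\xx_t)=\mathbb{E}[\mepsilon-\xx\mid\xx_t]$, and it is unique up to $p_t$-null sets.

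\textbf{Step 2 (Tweedie).} Conditional on $\xx$, the variable $\xx_t$ has density $\cN(\xx_t;(1-t)\xx,t^2\mI_d)$. Its marginal density is therefore
\[
p_t(\yy)=\int \cN(\yy;(1-t)\xx,t^2\mI_d)\,p(\dm\xx).
\]
For $t\in(0,1)$ this is smooth and strictly positive, with no density assumption on $p$ needed. We differentiate under the integral, which is justified by dominated convergence because the Gaussian kernel and its gradient are bounded by integrable envelopes locally uniformly in $\yy$. This gives
\[
\nabla p_t(\yy)=\int -\frac{\yy-(1-t)\xx}{t^2}\,\cN(\yy;(1-t)\xx,t^2\mI_d)\,p(\dm\xx).
\]
Dividing by $p_t$ and recognizing the posterior of $\xx$ given $\xx_t=\yy$ yields
\[
\ss_t(\yy)=-\frac{\yy-(1-t)\mathbb{E}[\xx\mid\xx_t=\yy]}{t^2}.
\]
Because $\yy-(1-t)\xx=t\mepsilon$, this is equivalent to $\ss_t=-\mathbb{E}[\mepsilon\mid\xx_t]/t$.

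\textbf{Step 3 (convert to velocity).} Take conditional expectations of the identity $\xx_t=(1-t)\xx+t\mepsilon$. Write $a=\mathbb{E}[\xx\mid\xx_t]$ and $b=\mathbb{E}[\mepsilon\mid\xx_t]$. Then $\xx_t=(1-t)a+tb$ and $\vv_t^*=b-a$. Solving this pair gives $b=\xx_t+(1-t)\vv_t^*$. Substituting into $\ss_t=-b/t$ gives the claimed formula
\[
\ss_t=-\frac{\xx_t+(1-t)\vv_t^*}{t}.
\]

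The main obstacle is the rigorous justification in Step 2. We need to interchange the gradient with the integral against a general, possibly singular, law $p$. We also need to identify the ratio of integrals as the regular conditional expectation, so that it agrees $p_t$-a.e. with the $L^2$ minimizer from Step 1.

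To handle the interchange, we bound $\|\yy-(1-t)\xx\|\,\cN(\cdot)$ uniformly on compact neighbourhoods of $\yy$ by a constant. This constant is integrable against the probability measure $p$.

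To handle the identification, we verify the defining property directly. For bounded test functions $f$, we compute $\mathbb{E}[f(\xx_t)\,\xx]$ via Fubini using the explicit joint density of $(\xx,\xx_t)$. This shows the posterior-mean formula holds almost everywhere.
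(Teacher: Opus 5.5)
Your proposal is correct and follows essentially the same route as the paper: a Tweedie-type posterior-mean identity for the Gaussian convolution gives $\mathbb{E}[\mepsilon\mid\xx_t]=-t\,\ss_t(\xx_t)$, and the linear relations among $\mathbb{E}[\xx\mid\xx_t]$, $\mathbb{E}[\mepsilon\mid\xx_t]$ and $\vv_t^*$ then yield the identity. Your $L^2$-projection argument for the minimizer and your dominated-convergence justification for a possibly singular $p$ add details that the paper's proof takes for granted.
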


\begin{proof}
\textbf{Step 1: Rewrite as an additive Gaussian observation model.}
The OT interpolant is $\xx_t=(1{-}t)\xx+t\mepsilon$. Define $\xx':=(1{-}t)\xx$. Then $\xx_t=\xx'+t\mepsilon$ and $\xx_t\mid\xx'\sim\cN(\xx',t^2\mI_d)$.

\textbf{Step 2: Apply the Gaussian posterior-mean identity.}
For a Gaussian perturbation model $\yy=\xx'+\sigma\mepsilon$, direct differentiation of the Gaussian convolution gives $\mathbb{E}[\xx'\mid\yy]=\yy+\sigma^2\nabla_\yy\log p_Y(\yy)$, where $p_Y$ is the observation density.
In our case, $\yy = \xx_t$, $\sigma = t$, so:
\begin{equation}
  \mathbb{E}[\xx' \mid \xx_t] = \xx_t + t^2 \ss_t(\xx_t)\,.
\end{equation}

\textbf{Step 3: Recover $\mathbb{E}[\xx \mid \xx_t]$.}
Since $\xx' = (1{-}t)\xx$, we have $\mathbb{E}[\xx \mid \xx_t] = \mathbb{E}[\xx' \mid \xx_t]/(1{-}t)$:
\begin{equation}
  \mathbb{E}[\xx \mid \xx_t] = \frac{\xx_t + t^2 \ss_t(\xx_t)}{1{-}t}\,.
  \label{eq:cond_x_app}
\end{equation}

\textbf{Step 4: Recover $\mathbb{E}[\mepsilon \mid \xx_t]$.}
From $\xx_t=t\mepsilon+(1{-}t)\xx$, $\mepsilon=(\xx_t-(1{-}t)\xx)/t$.
Taking conditional expectations:
\begin{align}
  \mathbb{E}[\mepsilon \mid \xx_t]
  &= \frac{1}{t}\left[\xx_t - (1{-}t)\mathbb{E}[\xx \mid \xx_t]\right] \nonumber \\
  &= \frac{1}{t}\left[\xx_t - (1{-}t) \cdot \frac{\xx_t + t^2 \ss_t(\xx_t)}{1{-}t}\right] \nonumber \\
  &= \frac{1}{t}\left[\xx_t - \xx_t - t^2 \ss_t(\xx_t)\right] \nonumber \\
  &= -t\,\ss_t(\xx_t)\,.
  \label{eq:cond_z_app}
\end{align}

\textbf{Step 5: Form the marginal velocity field.}
This field is the conditional expectation of the target $\mepsilon-\xx$:
\begin{align}
  \vv_t^*(\xx_t) &= \mathbb{E}[\mepsilon - \xx \mid \xx_t] = \mathbb{E}[\mepsilon \mid \xx_t] - \mathbb{E}[\xx \mid \xx_t] \nonumber \\
  &= -t\,\ss_t(\xx_t) - \frac{\xx_t + t^2\ss_t(\xx_t)}{1{-}t}\,.
\end{align}

\textbf{Step 6: Rearrange to obtain the identity.}
Multiplying both sides by $(1{-}t)$:
\begin{align}
  (1{-}t)\vv_t^*(\xx_t) &= -t(1{-}t)\ss_t(\xx_t) - \xx_t - t^2\ss_t(\xx_t) \nonumber \\
  &= -t\ss_t(\xx_t)\underbrace{[(1{-}t) + t]}_{=1} - \xx_t \nonumber \\
  &= -t\ss_t(\xx_t) - \xx_t\,.
\end{align}
Rearranging and dividing by $-t$ gives
\begin{equation}
  \ss_t(\xx_t) = -\frac{\xx_t + (1{-}t)\vv_t^*(\xx_t)}{t}\,.
\end{equation}
\end{proof}

\begin{remark}[Boundary $t=1$]
At $t=1$, $\xx_{1}=\mepsilon\sim\cN(\mathbf{0},\mI_d)$ irrespective of the clean distribution $p$, so $p_1=p_z$ and $\ss_1(\xx_1)=-\xx_1$. The right-hand side gives $-(\xx_1+0\cdot\vv_1^*(\xx_1))/1=-\xx_1$, confirming the identity at $t=1$.
\end{remark}

\begin{corollary}[Marginal Velocity Difference Recovers Score Difference]
\label{cor:population_vel_score}
For $i\in\{1,2\}$, let $\xx^{(i)}\sim p_i$ have finite second moment, and independent $\mepsilon^{(i)}\sim p_z$, and define $\xx_t^{(i)}=(1{-}t)\xx^{(i)}+t\mepsilon^{(i)}$ for $t\in(0,1)$.
Let $p_{i,t}$ be the marginal density of $\xx_t^{(i)}$, and $\ss_{i,t}(\xx_t) := \nabla_{\xx_t}\log p_{i,t}(\xx_t)$
be its marginal score. Define the corresponding marginal
velocity field by $\vv_{i,t}^*(\xx_t):=\mathbb{E}[\mepsilon^{(i)}-\xx^{(i)}\mid\xx_t^{(i)}=\xx_t]$.
Then, for every $\xx_t$ where both quantities are well defined,
\begin{equation}
    \vv_{1,t}^*(\xx_t)-\vv_{2,t}^*(\xx_t)
    =
    -\frac{t}{1-t}
    \left(
        \ss_{1,t}(\xx_t)-\ss_{2,t}(\xx_t)
    \right).
\end{equation}
\end{corollary}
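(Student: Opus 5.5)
The plan is to apply \propref{prop:score_population_velocity} separately to each of the two laws and then subtract the results. For $i\in\{1,2\}$, the hypotheses of that proposition hold for $\xx^{(i)}\sim p_i$: finite second moment, an independent standard Gaussian $\mepsilon^{(i)}$, and the linear path with $t\in(0,1)$. At every point $\xx_t$ where the marginal density $p_{i,t}$ and the conditional-expectation velocity are well defined, the proposition therefore gives
\begin{equation*}
  \ss_{i,t}(\xx_t) = -\frac{\xx_t+(1-t)\vv_{i,t}^*(\xx_t)}{t}.
\end{equation*}
Adding Gaussian noise with $t>0$ makes $p_{i,t}$ smooth and strictly positive, so both scores exist everywhere. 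The only care needed is that $\vv_{i,t}^*$ is defined up to $p_{i,t}$-null sets. This is why the statement says ``where both quantities are well defined''. Concretely, I would fix the version of the conditional expectation given by the Gaussian posterior formula from Steps~2--5 of that proof. That formula expresses $\vv_{i,t}^*$ as a continuous function of $\ss_{i,t}$ on all of $\R^d$, so both identities hold at the same point $\xx_t$.

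The next step is to subtract the identity for $i=2$ from the one for $i=1$. The common term $-\xx_t/t$ cancels, because both velocities are evaluated at the same input $\xx_t$. This leaves
\begin{equation*}
  \ss_{1,t}(\xx_t)-\ss_{2,t}(\xx_t) = -\frac{1-t}{t}\bigl(\vv_{1,t}^*(\xx_t)-\vv_{2,t}^*(\xx_t)\bigr).
\end{equation*}
Multiplying both sides by $-t/(1-t)$, which is finite and nonzero for $t\in(0,1)$, gives the claimed relation. Setting $p_1=p_\theta$ and $p_2=p_r$ then reproduces \eqref{eq:d_micro}.

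The only delicate point is the pointwise evaluation at a common $\xx_t$. The two velocity fields are conditional expectations under different joint laws, and each is determined only almost everywhere with respect to its own marginal. Since both marginals have positive Gaussian-smoothed densities, their null sets coincide, and the continuous version from the posterior-mean identity removes the ambiguity. Everything else is linear algebra.
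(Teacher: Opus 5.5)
Your proof is correct and follows the paper's argument: apply the score--velocity identity to each law at the common point $\xx_t$, subtract so the $-\xx_t/t$ terms cancel, and rescale by $-t/(1-t)$. Your extra step, fixing the smooth Tweedie version of each conditional expectation using the strictly positive noisy marginals, is a sound refinement that the paper leaves implicit in the phrase ``where both quantities are well defined.''
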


\begin{proof}
Applying the Score--Velocity Identity to each field separately:
\begin{align}
  \ss_{1,t}(\xx_t) &= -\frac{\xx_t + (1{-}t)\vv^*_{1,t}(\xx_t)}{t}\,, \\
  \ss_{2,t}(\xx_t) &= -\frac{\xx_t + (1{-}t)\vv^*_{2,t}(\xx_t)}{t}\,.
\end{align}
Subtracting the second from the first:
\begin{align}
  \ss_{1,t}(\xx_t) - \ss_{2,t}(\xx_t)
  &= -\frac{1}{t}\left[(\xx_t + (1{-}t)\vv^*_{1,t}) - (\xx_t + (1{-}t)\vv^*_{2,t})\right] \nonumber \\
  &= -\frac{1{-}t}{t}\left[\vv^*_{1,t}(\xx_t)-\vv^*_{2,t}(\xx_t)\right]\,.
\end{align}
Rearranging for the velocity difference:
\begin{equation}
  \vv^*_{1,t}(\xx_t) - \vv^*_{2,t}(\xx_t) = -\frac{t}{1{-}t}\left(\ss_{1,t}(\xx_t) - \ss_{2,t}(\xx_t)\right)\,.
\end{equation}
\end{proof}

\subsection{DMD Loss: Score-Difference Identity and Regression}
\label{app:micro_theory}

Following the conventions above, $\gv_{\mathrm{KL},t}(\xx_t) := \ss_{\theta,t}(\xx_t) - \ss_{r,t}(\xx_t)$ denotes the reverse-KL field of the noisy marginals $p_{\theta,t}$ and $p_{r,t}$ at noise level $t$, the time-$t$ analog of the endpoint field $\gv_{\mathrm{KL}} = \ss_\theta - \ss_r$.

\begin{proposition}[Noisy-Marginal Score Difference]
\label{prop:micro_dmd}
Let $\vv^*_{\theta,t}$ and $\vv^*_{r,t}$ denote the marginal velocity fields of $p_{\theta,t}$ and $p_{r,t}$, evaluated at the same $\xx_t$. Then the exact velocity difference
\begin{equation}
  \dd_t^*(\xx_t)
  :=
  \vv^*_{\theta,t}(\xx_t)-\vv^*_{r,t}(\xx_t)
\end{equation}
is proportional to the noisy-marginal reverse-KL field:
\begin{equation}
  \dd_t^*(\xx_t)
  =
  \frac{t}{1-t}\left(\ss_{r,t}(\xx_t)-\ss_{\theta,t}(\xx_t)\right)
  =
  -\frac{t}{1-t}\,\gv_{\mathrm{KL},t}(\xx_t),
  \label{eq:micro_descent_app}
\end{equation}
where $\gv_{\mathrm{KL},t}:=\ss_{\theta,t}-\ss_{r,t}$.
Equivalently, the noisy-marginal distribution gradient can be recovered as
\begin{equation}
  \gv_{\mathrm{KL},t}(\xx_t)
  =
  -\frac{1-t}{t}\dd_t^*(\xx_t).
  \label{eq:micro_grad_app}
\end{equation}
\end{proposition}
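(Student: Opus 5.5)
The plan is to reduce \propref{prop:micro_dmd} directly to \corref{cor:population_vel_score}, which is already stated for two arbitrary clean laws under the linear Gaussian path. I will instantiate that corollary with $p_1:=p_\theta(\cdot\mid\cc)$ and $p_2:=p_r(\cdot\mid\cc)$ at a fixed condition. The noisy marginals $p_{1,t},p_{2,t}$ are then exactly $p_{\theta,t},p_{r,t}$ from \secref{sec:micro}, and the velocities $\vv^*_{1,t},\vv^*_{2,t}$ are the marginal velocities $\vv^*_{\theta,t},\vv^*_{r,t}$. The only hypothesis I need to check is finite second moment for both clean laws. For $p_r$ this is the standing assumption on the reference data. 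For $p_\theta$ it follows from the generator output having finite second moment, which is implicit in the use of \propref{prop:score_population_velocity} throughout.

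With these substitutions, \corref{cor:population_vel_score} gives
\begin{equation*}
\vv^*_{\theta,t}(\xx_t)-\vv^*_{r,t}(\xx_t)=-\frac{t}{1-t}\bigl(\ss_{\theta,t}(\xx_t)-\ss_{r,t}(\xx_t)\bigr).
\end{equation*}
Absorbing the minus sign by swapping the order of the score difference yields the first equality in \eqref{eq:micro_descent_app}. The definition $\gv_{\mathrm{KL},t}:=\ss_{\theta,t}-\ss_{r,t}$ then gives the second equality. For \eqref{eq:micro_grad_app}, I multiply both sides by $-(1-t)/t$, which is legitimate since $t\in(0,1)$.

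The one point needing care is well-definedness at a common evaluation point $\xx_t$. The corollary holds wherever both scores and both conditional expectations are defined. Because $t\in(0,1)$, each noisy marginal is a convolution with a nondegenerate Gaussian of variance $t^2$ and is therefore smooth and strictly positive on all of $\R^d$, even when the clean $p_\theta$ is singular, as discussed in \appref{app:density}. So both scores exist everywhere. Each conditional expectation can be fixed as the version given by the Gaussian posterior formula, which is defined pointwise. This means no clean-density assumption is required, and the identity holds for every $\xx_t\in\R^d$. I expect no step to be a genuine obstacle; this well-definedness remark is the only part needing more than substitution.
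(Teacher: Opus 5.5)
Your proposal is correct and follows the paper's proof: it also instantiates \corref{cor:population_vel_score} with $1=\theta$ and $2=r$, rewrites $-\frac{t}{1-t}(\ss_{\theta,t}-\ss_{r,t})$ as $\frac{t}{1-t}(\ss_{r,t}-\ss_{\theta,t})$, and rearranges to obtain \eqref{eq:micro_grad_app}. Your added remark is valid. For $t\in(0,1)$, Gaussian smoothing makes both noisy scores, and the posterior-mean versions of both velocities, defined at every $\xx_t$. This slightly strengthens the paper's qualifier ``wherever both quantities are well defined.''
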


\begin{proof}
By \corref{cor:population_vel_score} with indices $1=\theta$ and $2=r$,
\begin{align}
  \dd_t^*
  &= \vv^*_{\theta,t} - \vv^*_{r,t}
  = -\frac{t}{1-t}(\ss_{\theta,t}(\xx_t) - \ss_{r,t}(\xx_t))
  = \frac{t}{1-t}(\ss_{r,t}(\xx_t) - \ss_{\theta,t}(\xx_t))\,.
\end{align}
Since $\gv_{\mathrm{KL},t}=\ss_{\theta,t}-\ss_{r,t}$, we have
$\dd_t^* = -\frac{t}{1-t}\,\gv_{\mathrm{KL},t}(\xx_t)$.
Rearranging gives \eqref{eq:micro_grad_app}.
\end{proof}

\propref{prop:micro_dmd} is exact for marginal velocity fields and time-$t$ noisy marginals only when both fields recover their respective exact marginals at the same $(\xx_t,t)$.
Practical neural branches and mixed objectives do not guarantee recovery of these exact marginal velocities.
TwinFlow and APEX use internally separated branches to turn this conversion into practical one-step distribution estimators~\citep{cheng2025twinflow,liu2026self}; we adopt APEX-style condition separation with $\cc_{\mathrm{fake}}=-\cc$ and use the losses defined in \secref{sec:micro}.
In practice, $\widehat\dd_t(\xx_t\mid\cc):=\widehat\vv_{\theta,t}(\xx_t\mid\cc)-\widehat\vv_{r,t}(\xx_t\mid\cc)$ approximates $\dd_t^*(\xx_t)$. The correction queries it at $\widehat\xx_{0,t'}^{\sg}$ and detaches the result after weighting and normalization. With neither processing step, $\widehat\dd_{t'}^{\mathrm{DMD}}=\sg\!\left(\widehat\dd_{t'}(\widehat\xx_{0,t'}^{\sg}\mid\cc)\right)$. In Eq.~\ref{eq:dmd_targets}, subtracting $\widehat\dd_{t'}^{\mathrm{DMD}}$ from the velocity target adds $t\widehat\dd_{t'}^{\mathrm{DMD}}$ to the endpoint target.

\paragraph{Gradient of the detached DMD regression.}
For fixed $\xx_t$ and $t>0$, both targets in Eq.~\ref{eq:dmd_targets} are held constant during differentiation. Their velocity-to-endpoint conversion gives
\begin{equation*}
  \widehat\xx_0-\xx_0^{\mathrm{DMD}}
  =-t\bigl(\mmF_\theta(\xx_t,t,\cc)-\vv_t^{\mathrm{DMD}}\bigr),
  \qquad \ell_x(t)=t^2\ell_v(t).
\end{equation*}
Hence $\nabla_\theta\ell_x(t)=t^2\nabla_\theta\ell_v(t)$, and the equal-weight DMD loss in Eq.~\ref{eq:l_score} satisfies
\begin{equation*}
\begin{aligned}
  \cL_{\text{score}}
  &=\mathbb E\!\left[(1+t^2)\ell_v(t)\right],\\
  \nabla_\theta\cL_{\text{score}}
  &=2\mathbb E\!\left[(1+t^2)
    [J_\theta\mmF_\theta(\xx_t,t,\cc)]^\top
    \widehat\dd_{t'}^{\mathrm{DMD}}\right].
\end{aligned}
\end{equation*}
If the correction equals the exact, unweighted velocity difference $\dd_{t'}^*(\widehat\xx_{0,t'}^{\sg})$, then Eq.~\ref{eq:micro_descent_app} and $J_\theta\widehat\xx_0=-tJ_\theta\mmF_\theta$ yield the per-example identity
\begin{equation*}
  \nabla_\theta(\ell_v(t)+\ell_x(t))
  =\frac{2t'(1+t^2)}{t(1-t')}
  [J_\theta\widehat\xx_0]^\top
  \gv_{\mathrm{KL},t'}(\widehat\xx_{0,t'}^{\sg}).
\end{equation*}
This is a pointwise pullback of the field defined by $p_{\theta,t'}$ and $p_{r,t'}$. An on-policy KL interpretation additionally requires the queried prediction law to agree with the model law used to train the model-velocity branch. For the one-step law of \secref{sec:setup}, this agreement is immediate at $t=1$, $\xx_t=\zz$, but is not implied for intermediate regression inputs.
For the squared losses above, the two per-example gradients are collinear and their combination changes the time weighting and magnitude; at $t=1$, $\ell_x=\ell_v$. Separate residual-dependent normalizations can change the proportionality factor and must be included when applying this calculation to a normalized loss.
With approximate branches and correction processing, Eq.~\ref{eq:l_score} does not in general have the same gradient as the unweighted time-averaged noisy reverse KL.

\subsection{Internal model-velocity training}
\label{app:internal_dmd_training}
Draw $\zz,\mepsilon\sim p_z$ independently, set $\xx_\theta^{\sg}:=\sg(G_\theta(\zz,\cc))$ and $\xx_{\theta,t}^{\sg}:=(1{-}t)\xx_\theta^{\sg}+t\mepsilon$, and minimize
\begin{equation}
  \cL_{\text{fake}}(\theta) = \EEb{\zz,\mepsilon,t,\cc}{\|\widehat\vv_{\theta,t}(\xx_{\theta,t}^{\sg}\mid\cc) - (\mepsilon-\xx_\theta^{\sg})\|_2^2}\,,
  \label{eq:l_fake}
\end{equation}
Here $\cc\sim p_c$, $\zz,\mepsilon\sim p_z$ are independent, and $t$ is independently sampled from a distribution supported on $(0,1)$. This auxiliary regression uses the one-step output law $p_\theta$ defined in \secref{sec:setup}.
Gradients pass through the model-velocity branch, while generated samples and regression targets are detached. The branches share $\theta$, so this auxiliary regression also updates shared parameters.

\subsection{Gradient of the Feature-Moment Loss}
\label{app:macro_theory}

\paragraph{Fixed-feature updates.}
Within an update, the EMA feature-map parameters and queued features are held fixed, while derivatives through the current model samples are retained.
The feature-moment loss therefore cannot be reduced by directly changing the feature-map parameters within that update.
This removes a direct feature-collapse shortcut of one update but does not rule out slow degeneration as the EMA map evolves over training.

\begin{proposition}[Gradient of the Feature-Moment Loss]
\label{prop:mt_grad}
Let $\hh_n$, $n=1,\ldots,N_\phi$, denote model features: current entries $\hh_n=\phi(\xx_n,\cc_n)$ retain gradients, while stored queue entries are detached. Define the empirical means and biased empirical variances
\begin{equation*}
  \widehat\mu_\theta^\phi := N_\phi^{-1}\sum_n\hh_n\,,\qquad
  \widehat\nu_{\theta,j}^\phi := N_\phi^{-1}\sum_n(h_{n,j}-\widehat\mu_{\theta,j}^\phi)^2\,,
\end{equation*}
and for the queued reference features, let $\{\hh_k^r\}_{k=1}^K$ denote the $K$ pre-computed feature vectors stored in $\mathcal{Q}_r$ (computed as $\phi(\xx_k,\cc_k)$ at enqueue time and detached), and define
\begin{equation*}
  \widehat\mu_r^\phi := K^{-1}\!\sum_{k=1}^{K}\!\hh_k^r\,,\qquad
  \widehat\nu_{r,j}^\phi := K^{-1}\!\sum_{k=1}^{K}\!(h^r_{k,j}-\widehat\mu_{r,j}^\phi)^2\,;
\end{equation*}
assume $\widehat\nu_{\theta,j}^\phi>0$. These are estimators of the population quantities $\mu_i^\phi,\nu_{i,j}^\phi$ defined in \secref{sec:moment_general}. Write $\widehat{\cL}_{\text{pop}}$ for Eq.~\ref{eq:moment_loss} evaluated at these empirical moments. Its gradient with respect to a current feature $\hh_k$ is
\begin{equation}
  \frac{\partial \widehat{\cL}_{\text{pop}}}{\partial h_{k,j}}
  = \frac{2}{N_\phi}\!\left[(\widehat\mu_{\theta,j}^\phi - \widehat\mu_{r,j}^\phi) + \frac{h_{k,j} - \widehat\mu_{\theta,j}^\phi}{\sqrt{\widehat\nu_{\theta,j}^\phi}}\left(\sqrt{\widehat\nu_{\theta,j}^\phi} - \sqrt{\widehat\nu_{r,j}^\phi}\right)\right]\!.
\end{equation}
This is the exact gradient of the empirical objective at fixed queues; it is not, in general, an unbiased estimate of the gradient of the population loss $\cL_{\text{pop}}$.
\end{proposition}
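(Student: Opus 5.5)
The plan is a direct coordinatewise differentiation of $\widehat{\cL}_{\text{pop}}$. With queues fixed, the reference moments $\widehat\mu_r^\phi$ and $\widehat\nu_r^\phi$ are constants, so only the model moments depend on $h_{k,j}$. Coordinate $j$ of $h_k$ enters only through the $j$-th terms, so I will split the loss into the mean term $(\widehat\mu_{\theta,j}^\phi-\widehat\mu_{r,j}^\phi)^2$ and the standard-deviation term $(\sqrt{\widehat\nu_{\theta,j}^\phi}-\sqrt{\widehat\nu_{r,j}^\phi})^2$. I will then apply the chain rule to each term separately, treating $h_{k,j}$ as one of the $N_\phi$ summands and all queue entries as constants.

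\textbf{Mean term.} Since $\partial\widehat\mu_{\theta,j}^\phi/\partial h_{k,j}=N_\phi^{-1}$, this term contributes
\[
\frac{2}{N_\phi}\bigl(\widehat\mu_{\theta,j}^\phi-\widehat\mu_{r,j}^\phi\bigr).
\]

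\textbf{Variance derivative.} Differentiating $\widehat\nu_{\theta,j}^\phi=N_\phi^{-1}\sum_n(h_{n,j}-\widehat\mu_{\theta,j}^\phi)^2$ gives two pieces. The direct piece is $2N_\phi^{-1}(h_{k,j}-\widehat\mu_{\theta,j}^\phi)$. The indirect piece comes through $\widehat\mu_{\theta,j}^\phi$ and equals $-2N_\phi^{-2}\sum_n(h_{n,j}-\widehat\mu_{\theta,j}^\phi)$. This sum vanishes because centered deviations about the empirical mean sum to zero. This cancellation is the one point that needs care. It relies on using the biased variance with the same $N_\phi$ normalization as the mean, and on the mean being taken over all $N_\phi$ entries, detached queue entries included. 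Hence
\[
\frac{\partial\widehat\nu_{\theta,j}^\phi}{\partial h_{k,j}}=\frac{2}{N_\phi}\bigl(h_{k,j}-\widehat\mu_{\theta,j}^\phi\bigr).
\]

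\textbf{Standard-deviation term.} The assumption $\widehat\nu_{\theta,j}^\phi>0$ makes the square root differentiable, with $\partial\sqrt{\nu}/\partial\nu=1/(2\sqrt{\nu})$. Applying the chain rule gives
\[
2\Bigl(\sqrt{\widehat\nu_{\theta,j}^\phi}-\sqrt{\widehat\nu_{r,j}^\phi}\Bigr)\cdot\frac{1}{2\sqrt{\widehat\nu_{\theta,j}^\phi}}\cdot\frac{2}{N_\phi}\bigl(h_{k,j}-\widehat\mu_{\theta,j}^\phi\bigr).
\]
Adding this to the mean-term contribution yields the stated formula.

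\textbf{Final remark.} The objective is a nonlinear function of the empirical moments, namely squares and square roots of sample averages. By Jensen-type bias, the expectation of $\widehat{\cL}_{\text{pop}}$, and of its gradient, generally differs from $\cL_{\text{pop}}$ and its gradient. In addition, the queued entries come from stale model and EMA states. So the expression is the exact gradient at fixed queues, but not an unbiased estimator of $\nabla\cL_{\text{pop}}$.
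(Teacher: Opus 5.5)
Your proposal is correct and follows essentially the same route as the paper. Like the paper, you split $\widehat{\cL}_{\text{pop}}$ into the mean and standard-deviation terms, use $\partial\widehat\mu_{\theta,j}^\phi/\partial h_{k,j}=1/N_\phi$, remove the indirect variance piece with the centered-sum identity, and apply the chain rule through the square root under $\widehat\nu_{\theta,j}^\phi>0$. One small correction to your side remark: that cancellation does not depend on the biased normalization, since it only needs centering at the empirical mean of the same $N_\phi$ entries; the biased normalization is what fixes the $2/N_\phi$ prefactor in the final formula.
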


\begin{proof}
The empirical loss decomposes as $\widehat{\cL}_{\text{pop}} = \cL_\mu + \cL_\sigma$, where
\begin{align}
  \cL_\mu &= \sum_j (\widehat\mu_{\theta,j}^\phi - \widehat\mu_{r,j}^\phi)^2\,, \\
  \cL_\sigma &= \sum_j (\sqrt{\widehat\nu_{\theta,j}^\phi} - \sqrt{\widehat\nu_{r,j}^\phi})^2\,.
\end{align}

\textbf{(a) Mean term.}
Since $\partial\widehat\mu_{\theta,j}^\phi/\partial h_{k,j}=1/N_\phi$,
\begin{equation}
  \frac{\partial \cL_\mu}{\partial h_{k,j}}
  = \frac{2}{N_\phi}(\widehat\mu_{\theta,j}^\phi - \widehat\mu_{r,j}^\phi)\,.
\end{equation}

\textbf{(b) Variance term.}
Since $\widehat\nu_{\theta,j}^\phi=N_\phi^{-1}\sum_n(h_{n,j}-\widehat\mu_{\theta,j}^\phi)^2$,
\begin{equation}
  \frac{\partial(h_{n,j}-\widehat\mu_{\theta,j}^\phi)}{\partial h_{k,j}}
  = \delta_{nk} - \frac{1}{N_\phi}\,.
\end{equation}
By the chain rule and the centered-sum identity, summing over $n$ gives
\begin{align}
  \frac{\partial \widehat\nu_{\theta,j}^\phi}{\partial h_{k,j}}
  = \frac{2}{N_\phi}(h_{k,j}-\widehat\mu_{\theta,j}^\phi)\,.
\end{align}
Applying the chain rule to the standard deviation gives
\begin{align}
  \frac{\partial \cL_\sigma}{\partial h_{k,j}}
  = \frac{2}{N_\phi}\frac{h_{k,j}-\widehat\mu_{\theta,j}^\phi}{\sqrt{\widehat\nu_{\theta,j}^\phi}}
  (\sqrt{\widehat\nu_{\theta,j}^\phi}-\sqrt{\widehat\nu_{r,j}^\phi})\,.
\end{align}

\textbf{Combining.}
Summing (a) and (b):
\begin{equation}
  \frac{\partial \widehat{\cL}_{\text{pop}}}{\partial h_{k,j}}
  = \frac{2}{N_\phi}\left[(\widehat\mu_{\theta,j}^\phi - \widehat\mu_{r,j}^\phi) + \frac{h_{k,j}-\widehat\mu_{\theta,j}^\phi}{\sqrt{\widehat\nu_{\theta,j}^\phi}}\left(\sqrt{\widehat\nu_{\theta,j}^\phi} - \sqrt{\widehat\nu_{r,j}^\phi}\right)\right]\!.
\end{equation}
For a stabilized version with $\sqrt{\widehat\nu_{i,j}^\phi+\delta}$, where $\delta>0$ is a scalar stabilizer, the same derivation replaces every standard deviation in the gradient by its stabilized value, including the denominator. The displayed gradient is for the unstabilized objective under the stated positive-variance assumption.
\end{proof}

\pagebreak[4]
\subsection{Covariance Contraction Under Regression}
\label{app:var_collapse}

This result shows a limitation of interpreting squared-error regression as distribution matching: a conditional-mean predictor need not preserve the target covariance. It is a separate observation about regression, not a consequence of the reverse-KL field decomposition or a claim that DMD lacks variance information.

\begin{proposition}[Covariance Contraction Under Regression]
\label{prop:var_collapse}
Let $(\mX,\mY)$ be any square-integrable jointly distributed random vectors and let $\mm(\mY):=\mathbb{E}[\mX\mid\mY]$ be the optimal squared-error predictor. Then
\begin{equation}
  \operatorname{Cov}(\mm(\mY))\preceq\operatorname{Cov}(\mX).
\end{equation}
Moreover, $\operatorname{Tr}\operatorname{Cov}(\mm(\mY))<\operatorname{Tr}\operatorname{Cov}(\mX)$ whenever $\mathbb{E}[\operatorname{Tr}\operatorname{Cov}(\mX\mid\mY)]>0$.
\end{proposition}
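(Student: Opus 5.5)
The plan is to prove the law of total covariance and read both claims off it. Write $\mm:=\mm(\mY)=\mathbb E[\mX\mid\mY]$ and the residual $\mathbf r:=\mX-\mm$. Square integrability of $\mX$ gives square integrability of $\mm$ by conditional Jensen, so every covariance below is finite. The tower property gives $\mathbb E[\mm]=\mathbb E[\mX]$ and $\mathbb E[\mathbf r]=\mathbf 0$.

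The first step is the orthogonality of $\mathbf r$ to every square-integrable function of $\mY$. Since $\mm$ is $\mY$-measurable, we have
\begin{equation*}
\mathbb E[\mathbf r\,\mm^\top]=\mathbb E\bigl[\mathbb E[\mX-\mm\mid\mY]\,\mm^\top\bigr]=\mathbf 0 .
\end{equation*}
Now expand $\mX-\mathbb E\mX=(\mm-\mathbb E\mm)+\mathbf r$. The two cross terms vanish by this orthogonality (together with $\mathbb E\mathbf r=\mathbf 0$), which yields
\begin{equation*}
\operatorname{Cov}(\mX)=\operatorname{Cov}(\mm)+\mathbb E[\mathbf r\mathbf r^\top].
\end{equation*}
Conditioning on $\mY$ identifies $\mathbb E[\mathbf r\mathbf r^\top]=\mathbb E[\operatorname{Cov}(\mX\mid\mY)]$. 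This term is positive semidefinite, since $\ww^\top\mathbb E[\mathbf r\mathbf r^\top]\ww=\mathbb E[(\ww^\top\mathbf r)^2]\ge0$ for every $\ww$. That gives the Loewner inequality $\operatorname{Cov}(\mm(\mY))\preceq\operatorname{Cov}(\mX)$.

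For the strict trace statement, take traces in the decomposition:
\begin{equation*}
\operatorname{Tr}\operatorname{Cov}(\mX)-\operatorname{Tr}\operatorname{Cov}(\mm)=\mathbb E\|\mathbf r\|^2=\mathbb E[\operatorname{Tr}\operatorname{Cov}(\mX\mid\mY)] .
\end{equation*}
The right-hand side is strictly positive by hypothesis. Since both traces are finite, the strict inequality follows.

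There is no real obstacle in this argument. The only step needing care is the measure-theoretic justification: conditional expectations of products are integrable (by Cauchy--Schwarz under square integrability), and $\operatorname{Cov}(\mX\mid\mY)$ should be defined as $\mathbb E[\mathbf r\mathbf r^\top\mid\mY]$ so that its expectation equals $\mathbb E[\mathbf r\mathbf r^\top]$ without any regularity assumptions on conditional distributions.
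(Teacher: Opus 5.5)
Your proof is correct and takes the same route as the paper: you use the law of total covariance $\operatorname{Cov}(\mX)=\mathbb{E}[\operatorname{Cov}(\mX\mid\mY)]+\operatorname{Cov}(\mm(\mY))$, note that the conditional-covariance term is positive semidefinite, and take traces for the strict claim. The only difference is that you derive the total-covariance identity from the orthogonality of the residual $\mX-\mm(\mY)$ to functions of $\mY$ (with the integrability bookkeeping), whereas the paper simply invokes it.
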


\begin{proof}
By the law of total covariance,
\begin{equation}
  \operatorname{Cov}(\mX)
  =\mathbb{E}[\operatorname{Cov}(\mX\mid\mY)]
  +\operatorname{Cov}(\mathbb{E}[\mX\mid\mY])
  =\mathbb{E}[\operatorname{Cov}(\mX\mid\mY)]+\operatorname{Cov}(\mm(\mY)).
\end{equation}
The conditional-covariance term is positive semidefinite, proving the matrix inequality. Taking traces gives the strict statement.
\end{proof}

This proposition applies only when the trained predictor realizes the conditional mean under the chosen squared-error coupling.
Strict covariance contraction occurs only when unresolved conditional variance remains.
It therefore motivates possible variance suppression in ambiguous endpoint regression, but it does not imply that deterministic or one-step generators generally underestimate variance.
Note that exact score differences \emph{do} contain variance information: for $p_\theta=\cN(0,\sigma_\theta^2)$ and $p_r=\cN(0,\sigma_r^2)$,
\begin{equation}
  \ss_\theta(x)-\ss_r(x)
  =
  -\frac{x}{\sigma_\theta^2}+\frac{x}{\sigma_r^2}\,,
\end{equation}
so the descent direction $\ss_r(x)-\ss_\theta(x)=x(1/\sigma_\theta^2-1/\sigma_r^2)$ pushes samples outward when $\sigma_\theta<\sigma_r$.
The learned pointwise estimator can nevertheless be noisy because it depends on the accuracy of the learned velocity branches and random time/noise samples.
The pooled feature-moment loss provides an explicit population-statistic correction.
Under an approximately independent, stationary-feature idealization with finite moments, the model- and reference-side mean-estimation variances scale as $O(1/N_\phi)$ and $O(1/K)$; this does not establish that its empirical parameter-gradient variance is lower than that of the DMD loss, and queue staleness can introduce bias.

\clearpage

%% file: resources/appendix_related_work.tex
\section{Related Work}
\label{sec:related}

\paragraph{Fast Generative Models.}
Progressive distillation, consistency models, shortcut models, and MeanFlow reduce sampling steps by learning transitions or relations along a generative trajectory~\citep{salimans2022progressive,song2023consistency,lu2024simplifying,frans2024one,geng2025mean,geng2026improved}. Duality Models jointly predict velocities and flow maps with a shared backbone~\citep{sun2026duality}. Distributional approaches use adversarial or score-based supervision to train generated outputs directly~\citep{sauer2023adversarial,sauer2024fast,yin2024one,yin2024improved}. rCM combines continuous-time consistency with score regularization and demonstrates their complementarity on large image and video models~\citep{zheng2025large}. LLaDA-Image-Turbo supports inference in two to four steps, while Vidu S2 targets real-time interactive video generation and editing~\citep{chen2026llada,zhang2026vidu}. \method studies distributional supervision from the generator itself.

\paragraph{Distribution Gradients and Internal Score Estimation.}
DMD estimates the score difference between noised reference and generated distributions using a reference network and an online fake-score model~\citep{yin2024one}; DMD2 adds adversarial supervision and techniques for stable few-step distillation~\citep{yin2024improved}. Related work connects adversarial and score-based updates through sample-space fields~\citep{franceschi2023unifying,weber2023score} and extends distribution matching to broader divergence families~\citep{xu2025one,wang2026uni}. FlowCPO develops an offline forward-KL formulation for preference alignment in flow models~\citep{han2026flowcpo}. TwinFlow and APEX estimate score differences through internally separated velocity branches~\citep{cheng2025twinflow,liu2026self}. \method adds a feature-moment loss to internal DMD.

\paragraph{Feature-Space Distribution Matching.}
GMMN and MMD-GAN use moment or kernel discrepancies for generative training~\citep{li2015generative,li2017mmd}. FD-Loss turns a full-covariance Gaussian feature discrepancy into a training objective and uses queue or EMA estimators to connect population statistics with batch gradients~\citep{heusel2017gans,yang2026fdloss}. Three-Body Scattering derives one-step regression targets from energy-distance interactions in frozen features~\citep{sun2026three}. Drifting and W-Flow construct distributional updates through kernel interactions and batchwise optimal transport, respectively~\citep{deng2026generative,han2026one}, while Teacher-Feature Drifting uses representations from a pretrained teacher~\citep{zhang2026teacher}. \method instead matches feature means and per-coordinate standard deviations in the generator's EMA feature space.

\paragraph{Gaussian Score Structure and Intrinsic Representations.}
The Gaussian-plus-residual score representation of \citet{wang2023hidden,wang2024unreasonable} underlies our reverse-KL field decomposition and motivates an additional feature-moment constraint. REPA shows that diffusion-transformer hidden states contain useful semantic information and can be strengthened through external representation alignment~\citep{yu2024representation}. Embedded Representation Warmup adds a dedicated representation-alignment stage before generative training~\citep{liu2025efficient}. Our EMA feature map reuses such internal representations and evolves with the generator; its utility is evaluated against external, teacher, and live self-feature alternatives in \secref{sec:exp_ablation}.

\paragraph{Text Rendering in Image Generation.}
Recent text-to-image models, including Qwen-Image and X-Omni, place increasing emphasis on accurate text rendering~\citep{wu2025qwen,geng2025x}. LongText-Bench evaluates extended Chinese and English text across multiple scenarios~\citep{geng2025x}, complementing the broader dimensions assessed by OneIG~\citep{chang2026oneig}. Z-Image-Turbo++ reports LongText as a sensitive measure of text quality under a limited sampling budget~\citep{liu2026high}. We focus on LongText performance at 2~NFE.

%% file: resources/reference.bib
@STRING{NeurIPS = "Advances in Neural Information Processing Systems"}

@STRING{ICML = "International Conference on Machine Learning"}

@STRING{ICLR = "International Conference on Learning Representations"}

@STRING{CVPR = "IEEE Conference on Computer Vision and Pattern Recognition"}

@inproceedings{esser2024scaling,
  author    = {Esser, Patrick and Kulal, Sumith and Blattmann, Andreas and Entezari, Rahim and M{\"u}ller, Jonas and Saini, Harry and Levi, Yam and Lorenz, Dominik and Sauer, Axel and Boesel, Frederic and others},
  booktitle = ICML,
  title     = {Scaling Rectified Flow Transformers for High-Resolution Image Synthesis},
  year      = {2024}
}

@article{podell2023sdxl,
  author  = {Podell, Dustin and English, Zion and Lacey, Kyle and Blattmann, Andreas and Dockhorn, Tim and M{\"u}ller, Jonas and Penna, Joe and Rombach, Robin},
  journal = {arXiv preprint arXiv:2307.01952},
  title   = {{SDXL}: Improving Latent Diffusion models for High-Resolution Image Synthesis},
  year    = {2023}
}

@inproceedings{rombach2022high,
  author    = {Rombach, Robin and Blattmann, Andreas and Lorenz, Dominik and Esser, Patrick and Ommer, Bj{\"o}rn},
  booktitle = CVPR,
  title     = {High-Resolution Image Synthesis with Latent Diffusion Models},
  year      = {2022}
}

@inproceedings{goodfellow2014generative,
  author    = {Goodfellow, Ian and Pouget-Abadie, Jean and Mirza, Mehdi and Xu, Bing and Warde-Farley, David and Ozair, Sherjil and Courville, Aaron and Bengio, Yoshua},
  booktitle = NeurIPS,
  title     = {Generative Adversarial Nets},
  year      = {2014}
}

@inproceedings{heusel2017gans,
  author    = {Heusel, Martin and Ramsauer, Hubert and Unterthiner, Thomas and Nessler, Bernhard and Hochreiter, Sepp},
  booktitle = NeurIPS,
  title     = {{GANs} Trained by a Two Time-Scale Update Rule Converge to a Local Nash Equilibrium},
  year      = {2017}
}

@article{sauer2024fast,
  author  = {Sauer, Axel and Boesel, Frederic and Dockhorn, Tim and Blattmann, Andreas and Esser, Patrick and Rombach, Robin},
  journal = {arXiv preprint arXiv:2403.12015},
  title   = {Fast High-Resolution Image Synthesis with Latent Adversarial Diffusion Distillation},
  year    = {2024}
}

@article{sauer2023adversarial,
  author  = {Sauer, Axel and Lorenz, Dominik and Blattmann, Andreas and Rombach, Robin},
  journal = {arXiv preprint arXiv:2311.17042},
  title   = {Adversarial Diffusion Distillation},
  year    = {2023}
}

@article{chen2024pixartsigma,
  author  = {Junsong Chen and Chongjian Ge and Enze Xie and Yue Wu and Lewei Yao and Xiaozhe Ren and Zhongdao Wang and Ping Luo and Huchuan Lu and Zhenguo Li},
  journal = {arXiv preprint arXiv:2403.04692},
  title   = {{PixArt-{$\Sigma$}}: Weak-to-Strong Training of Diffusion Transformer for 4K Text-to-Image Generation},
  year    = {2024}
}

@inproceedings{liu2022flow,
  author    = {Liu, Xingchao and Gong, Chengyue and Liu, Qiang},
  booktitle = ICLR,
  title     = {Flow Straight and Fast: Learning to Generate and Transfer Data with Rectified Flow},
  year      = {2023}
}

@article{lipman2022flow,
  author  = {Lipman, Yaron and Chen, Ricky TQ and Ben-Hamu, Heli and Nickel, Maximilian and Le, Matt},
  journal = {arXiv preprint arXiv:2210.02747},
  title   = {Flow matching for generative modeling},
  year    = {2022}
}

@article{yu2024representation,
  author  = {Yu, Sihyun and Kwak, Sangkyung and Jang, Huiwon and Jeong, Jongheon and Huang, Jonathan and Shin, Jinwoo and Xie, Saining},
  journal = {arXiv preprint arXiv:2410.06940},
  title   = {Representation alignment for generation: Training diffusion transformers is easier than you think},
  year    = {2024}
}

@article{ho2020denoising,
  author  = {Ho, Jonathan and Jain, Ajay and Abbeel, Pieter},
  journal = {Advances in neural information processing systems},
  pages   = {6840--6851},
  title   = {Denoising diffusion probabilistic models},
  volume  = {33},
  year    = {2020}
}

@article{song2020score,
  author  = {Song, Yang and Sohl-Dickstein, Jascha and Kingma, Diederik P and Kumar, Abhishek and Ermon, Stefano and Poole, Ben},
  journal = {arXiv preprint arXiv:2011.13456},
  title   = {Score-based generative modeling through stochastic differential equations},
  year    = {2020}
}

@article{yin2024improved,
  author  = {Yin, Tianwei and Gharbi, Micha{\"e}l and Park, Taesung and Zhang, Richard and Shechtman, Eli and Durand, Fredo and Freeman, William T},
  journal = {arXiv:2405.14867},
  title   = {Improved Distribution Matching Distillation for Fast Image Synthesis},
  year    = {2024}
}

@article{chen2025sana,
  author  = {Chen, Junsong and Xue, Shuchen and Zhao, Yuyang and Yu, Jincheng and Paul, Sayak and Chen, Junyu and Cai, Han and Xie, Enze and Han, Song},
  journal = {arXiv preprint arXiv:2503.09641},
  title   = {SANA-Sprint: One-Step Diffusion with Continuous-Time Consistency Distillation},
  year    = {2025}
}

@inproceedings{yin2024one,
  title     = {One-step diffusion with distribution matching distillation},
  author    = {Yin, Tianwei and Gharbi, Micha{\"e}l and Zhang, Richard and Shechtman, Eli and Durand, Fredo and Freeman, William T and Park, Taesung},
  booktitle = {Proceedings of the IEEE/CVF conference on computer vision and pattern recognition},
  pages     = {6613--6623},
  year      = {2024}
}

@misc{Dalle-3,
  author = "OpenAI",
  title = "Dalle-3",
  year = "2023",
  url = "https://openai.com/dall-e-3",
}

@misc{FLUX,
  author = "Black Forest Labs",
  title = "FLUX",
  year = "2024",
  url = "https://blackforestlabs.ai/",
}

@article{chenpixartsigma,
  title={Pixart-$\sigma$: Weak-to-strong training of diffusion transformer for 4k text-to-image generation},
  author={Chen, Junsong and Ge, Chongjian and Xie, Enze and Wu, Yue and Yao, Lewei and Ren, Xiaozhe and Wang, Zhongdao and Luo, Ping and Lu, Huchuan and Li, Zhenguo},
  journal={arXiv preprint arXiv:2403.04692},
  year={2024}
}

@article{wang2024phased,
  title={Phased Consistency Model},
  author={Wang, Fu-Yun and Huang, Zhaoyang and Bergman, Alexander William and Shen, Dazhong and Gao, Peng and Lingelbach, Michael and Sun, Keqiang and Bian, Weikang and Song, Guanglu and Liu, Yu and others},
  journal={arXiv preprint arXiv:2405.18407},
  year={2024}
}

@article{luo2023latent,
  title={Latent consistency models: Synthesizing high-resolution images with few-step inference},
  author={Luo, Simian and Tan, Yiqin and Huang, Longbo and Li, Jian and Zhao, Hang},
  journal={arXiv preprint arXiv:2310.04378},
  year={2023}
}

@article{chen2024pixartdelta,
  title={Pixart-$\{$$\backslash$delta$\}$: Fast and controllable image generation with latent consistency models},
  author={Chen, Junsong and Wu, Yue and Luo, Simian and Xie, Enze and Paul, Sayak and Luo, Ping and Zhao, Hang and Li, Zhenguo},
  journal={arXiv preprint arXiv:2401.05252},
  year={2024}
}

@article{xie2024show,
  title={Show-o: One single transformer to unify multimodal understanding and generation},
  author={Xie, Jinheng and Mao, Weijia and Bai, Zechen and Zhang, David Junhao and Wang, Weihao and Lin, Kevin Qinghong and Gu, Yuchao and Chen, Zhijie and Yang, Zhenheng and Shou, Mike Zheng},
  journal={arXiv preprint arXiv:2408.12528},
  year={2024}
}

@article{wang2024emu3,
  title={Emu3: Next-token prediction is all you need},
  author={Wang, Xinlong and Zhang, Xiaosong and Luo, Zhengxiong and Sun, Quan and Cui, Yufeng and Wang, Jinsheng and Zhang, Fan and Wang, Yueze and Li, Zhen and Yu, Qiying and others},
  journal={arXiv preprint arXiv:2409.18869},
  year={2024}
}

@inproceedings{chen2024pixartalpha,
  title={PixArt-$\alpha$: Fast Training of Diffusion Transformer for Photorealistic Text-to-Image Synthesis},
  author={Chen, Junsong and Yu, Jincheng and Ge, Chongjian and Yao, Lewei and Xie, Enze and Wang, Zhongdao and Kwok, James T and Luo, Ping and Lu, Huchuan and Li, Zhenguo},
  booktitle={ICLR},
  year={2024}
}

@misc{flux2024,
    author={BlackForest},
    title={FLUX},
    year={2024},
    howpublished={\url{https://github.com/black-forest-labs/flux}},
}

@inproceedings{ma2025janusflow,
  title={Janusflow: Harmonizing autoregression and rectified flow for unified multimodal understanding and generation},
  author={Ma, Yiyang and Liu, Xingchao and Chen, Xiaokang and Liu, Wen and Wu, Chengyue and Wu, Zhiyu and Pan, Zizheng and Xie, Zhenda and Zhang, Haowei and Yu, Xingkai and others},
  booktitle={Proceedings of the Computer Vision and Pattern Recognition Conference},
  pages={7739--7751},
  year={2025}
}

@article{qin2025lumina,
  title={Lumina-image 2.0: A unified and efficient image generative framework},
  author={Qin, Qi and Zhuo, Le and Xin, Yi and Du, Ruoyi and Li, Zhen and Fu, Bin and Lu, Yiting and Yuan, Jiakang and Li, Xinyue and Liu, Dongyang and others},
  journal={arXiv preprint arXiv:2503.21758},
  year={2025}
}

@article{chen2025janus,
  title={Janus-pro: Unified multimodal understanding and generation with data and model scaling},
  author={Chen, Xiaokang and Wu, Zhiyu and Liu, Xingchao and Pan, Zizheng and Liu, Wen and Xie, Zhenda and Yu, Xingkai and Ruan, Chong},
  journal={arXiv preprint arXiv:2501.17811},
  year={2025}
}

@article{cai2025hidream,
  title={HiDream-I1: A High-Efficient Image Generative Foundation Model with Sparse Diffusion Transformer},
  author={Cai, Qi and Chen, Jingwen and Chen, Yang and Li, Yehao and Long, Fuchen and Pan, Yingwei and Qiu, Zhaofan and Zhang, Yiheng and Gao, Fengbin and Xu, Peihan and others},
  journal={arXiv preprint arXiv:2505.22705},
  year={2025}
}

@misc{gptimage,
  title={GPT-Image-1},
  author={OpenAI},
  year={2025},
  url={https://openai.com/index/introducing-4o-image-generation/},
}

@article{gao2025seedream,
  title={Seedream 3.0 technical report},
  author={Gao, Yu and Gong, Lixue and Guo, Qiushan and Hou, Xiaoxia and Lai, Zhichao and Li, Fanshi and Li, Liang and Lian, Xiaochen and Liao, Chao and Liu, Liyang and others},
  journal={arXiv preprint arXiv:2504.11346},
  year={2025}
}

@article{wu2025qwen,
  title={Qwen-image technical report},
  author={Wu, Chenfei and Li, Jiahao and Zhou, Jingren and Lin, Junyang and Gao, Kaiyuan and Yan, Kun and Yin, Sheng-ming and Bai, Shuai and Xu, Xiao and Chen, Yilei and others},
  journal={arXiv preprint arXiv:2508.02324},
  year={2025}
}

@article{li2024hunyuan-dit,
  title={Hunyuan-dit: A powerful multi-resolution diffusion transformer with fine-grained chinese understanding},
  author={Li, Zhimin and Zhang, Jianwei and Lin, Qin and Xiong, Jiangfeng and Long, Yanxin and Deng, Xinchi and Zhang, Yingfang and Liu, Xingchao and Huang, Minbin and Xiao, Zedong and others},
  journal={arXiv preprint arXiv:2405.08748},
  year={2024}
}

@article{li2024playgroundv2.5,
  title={Playground v2. 5: Three insights towards enhancing aesthetic quality in text-to-image generation},
  author={Li, Daiqing and Kamko, Aleks and Akhgari, Ehsan and Sabet, Ali and Xu, Linmiao and Doshi, Suhail},
  journal={arXiv preprint arXiv:2402.17245},
  year={2024}
}

@article{ghosh2023geneval,
  title={Geneval: An object-focused framework for evaluating text-to-image alignment},
  author={Ghosh, Dhruba and Hajishirzi, Hannaneh and Schmidt, Ludwig},
  journal={Advances in Neural Information Processing Systems},
  volume={36},
  pages={52132--52152},
  year={2023}
}

@article{hu2024ella,
  title={Ella: Equip diffusion models with llm for enhanced semantic alignment},
  author={Hu, Xiwei and Wang, Rui and Fang, Yixiao and Fu, Bin and Cheng, Pei and Yu, Gang},
  journal={arXiv preprint arXiv:2403.05135},
  year={2024}
}

@misc{Qwen-Image-Lightning,
  author = {ModelTC},
  title = {Qwen-Image-Lightning},
  year = {2025},
  publisher = {GitHub},
  journal = {GitHub repository},
  howpublished = {\url{GitHub - ModelTC/Qwen-Image-Lightning: Qwen-Image-Lightning: Speed up Qwen-Image model with distilla}},
}

@inproceedings{wu2025janus,
  title={Janus: Decoupling visual encoding for unified multimodal understanding and generation},
  author={Wu, Chengyue and Chen, Xiaokang and Wu, Zhiyu and Ma, Yiyang and Liu, Xingchao and Pan, Zizheng and Liu, Wen and Xie, Zhenda and Yu, Xingkai and Ruan, Chong and others},
  booktitle={Proceedings of the Computer Vision and Pattern Recognition Conference},
  pages={12966--12977},
  year={2025}
}

@article{chen2025blip3,
  title={Blip3-o: A family of fully open unified multimodal models-architecture, training and dataset},
  author={Chen, Jiuhai and Xu, Zhiyang and Pan, Xichen and Hu, Yushi and Qin, Can and Goldstein, Tom and Huang, Lifu and Zhou, Tianyi and Xie, Saining and Savarese, Silvio and others},
  journal={arXiv preprint arXiv:2505.09568},
  year={2025}
}

@article{chen2025sharegpt,
  title={ShareGPT-4o-Image: Aligning Multimodal Models with GPT-4o-Level Image Generation},
  author={Chen, Junying and Cai, Zhenyang and Chen, Pengcheng and Chen, Shunian and Ji, Ke and Wang, Xidong and Yang, Yunjin and Wang, Benyou},
  journal={arXiv preprint arXiv:2506.18095},
  year={2025}
}

@article{zhuo2024lumina,
  title={Lumina-next: Making lumina-t2x stronger and faster with next-dit},
  author={Zhuo, Le and Du, Ruoyi and Xiao, Han and Li, Yangguang and Liu, Dongyang and Huang, Rongjie and Liu, Wenze and Zhu, Xiangyang and Wang, Fu-Yun and Ma, Zhanyu and others},
  journal={Advances in Neural Information Processing Systems},
  volume={37},
  pages={131278--131315},
  year={2024}
}

@misc{lu2025hyperbagel,
      title={Hyper-Bagel: A Unified Acceleration Framework for Multimodal Understanding and Generation}, 
      author={Yanzuo Lu and Xin Xia and Manlin Zhang and Huafeng Kuang and Jianbin Zheng and Yuxi Ren and Xuefeng Xiao},
      year={2025},
      eprint={2509.18824},
      archivePrefix={arXiv},
      primaryClass={cs.CV}
}

@article{deng2025bagel,
  title   = {Emerging Properties in Unified Multimodal Pretraining},
  author  = {Deng, Chaorui and Zhu, Deyao and Li, Kunchang and Gou, Chenhui and Li, Feng and Wang, Zeyu and Zhong, Shu and Yu, Weihao and Nie, Xiaonan and Song, Ziang and Shi, Guang and Fan, Haoqi},
  journal = {arXiv preprint arXiv:2505.14683},
  year    = {2025}
}

@article{niu2025wise,
  title={Wise: A world knowledge-informed semantic evaluation for text-to-image generation},
  author={Niu, Yuwei and Ning, Munan and Zheng, Mengren and Jin, Weiyang and Lin, Bin and Jin, Peng and Liao, Jiaqi and Feng, Chaoran and Ning, Kunpeng and Zhu, Bin and others},
  journal={arXiv preprint arXiv:2503.07265},
  year={2025}
}

@article{pan2025transfer,
  title={Transfer between modalities with metaqueries},
  author={Pan, Xichen and Shukla, Satya Narayan and Singh, Aashu and Zhao, Zhuokai and Mishra, Shlok Kumar and Wang, Jialiang and Xu, Zhiyang and Chen, Jiuhai and Li, Kunpeng and Juefei-Xu, Felix and others},
  journal={arXiv preprint arXiv:2504.06256},
  year={2025}
}

@article{wu2024vila,
  title={Vila-u: a unified foundation model integrating visual understanding and generation},
  author={Wu, Yecheng and Zhang, Zhuoyang and Chen, Junyu and Tang, Haotian and Li, Dacheng and Fang, Yunhao and Zhu, Ligeng and Xie, Enze and Yin, Hongxu and Yi, Li and others},
  journal={arXiv preprint arXiv:2409.04429},
  year={2024}
}

@article{fang2025flux,
  title={Flux-reason-6m \& prism-bench: A million-scale text-to-image reasoning dataset and comprehensive benchmark},
  author={Fang, Rongyao and Yu, Aldrich and Duan, Chengqi and Huang, Linjiang and Bai, Shuai and Cai, Yuxuan and Wang, Kun and Liu, Si and Liu, Xihui and Li, Hongsheng},
  journal={arXiv preprint arXiv:2509.09680},
  year={2025}
}

@article{team2025zimage,
  title={Z-Image: An Efficient Image Generation Foundation Model with Single-Stream Diffusion Transformer},
  author={Z-Image Team},
  journal={arXiv preprint arXiv:2511.22699},
  year={2025}
}

@article{liu2026high,
  title={High-Fidelity Two-Step Image Generation via Teacher-Aligned End-to-End Distillation},
  author={Liu, Dongyang and Du, Ruoyi and Liu, David and Jiang, Dengyang and Li, Liangchen and Wu, Qilong and Li, Zhen and Hoi, Steven CH and Li, Hongsheng and Gao, Peng},
  journal={arXiv preprint arXiv:2606.12575},
  year={2026}
}

@article{frans2024one,
  title={One step diffusion via shortcut models},
  author={Frans, Kevin and Hafner, Danijar and Levine, Sergey and Abbeel, Pieter},
  journal={arXiv preprint arXiv:2410.12557},
  year={2024}
}

@inproceedings{song2023consistency,
  title={Consistency models},
  author={Song, Yang and Dhariwal, Prafulla and Chen, Mark and Sutskever, Ilya},
  booktitle={Proceedings of the 40th International Conference on Machine Learning},
  pages={32211--32252},
  year={2023}
}

@article{lu2024simplifying,
  title={Simplifying, stabilizing and scaling continuous-time consistency models},
  author={Lu, Cheng and Song, Yang},
  journal={arXiv preprint arXiv:2410.11081},
  year={2024}
}

@article{salimans2022progressive,
  title={Progressive distillation for fast sampling of diffusion models},
  author={Salimans, Tim and Ho, Jonathan},
  journal={arXiv preprint arXiv:2202.00512},
  year={2022}
}

@article{geng2025mean,
  title={Mean flows for one-step generative modeling},
  author={Geng, Zhengyang and Deng, Mingyang and Bai, Xingjian and Kolter, J Zico and He, Kaiming},
  journal={arXiv preprint arXiv:2505.13447},
  year={2025}
}

@article{cheng2025twinflow,
  title={TwinFlow: Realizing One-step Generation on Large Models with Self-adversarial Flows},
  author={Cheng, Zhenglin and Sun, Peng and Li, Jianguo and Lin, Tao},
  journal={arXiv preprint arXiv:2512.05150},
  year={2025}
}

@inproceedings{arjovsky2017wasserstein,
  title={Wasserstein generative adversarial networks},
  author={Arjovsky, Martin and Chintala, Soumith and Bottou, L{\'e}on},
  booktitle={International conference on machine learning},
  pages={214--223},
  year={2017},
  organization={PMLR}
}

@article{nowozin2016f,
  title={f-gan: Training generative neural samplers using variational divergence minimization},
  author={Nowozin, Sebastian and Cseke, Botond and Tomioka, Ryota},
  journal={Advances in neural information processing systems},
  volume={29},
  year={2016}
}

@misc{sun2025anystep,
  author = {Sun, Peng and Lin, Tao},
  note   = {GitHub repository},
  title  = {Any-step Generation via N-th Order Recursive Consistent Velocity Field Estimation},
  url    = {https://github.com/LINs-lab/RCGM},
  year   = {2025}
}

@article{liu2025efficient,
  title={Efficient generative model training via embedded representation warmup},
  author={Liu, Deyuan and Sun, Peng and Li, Xufeng and Lin, Tao},
  journal={arXiv preprint arXiv:2504.10188},
  year={2025}
}

@article{liu2026self,
  title={Self-Adversarial One Step Generation via Condition Shifting},
  author={Liu, Deyuan and Sun, Peng and Han, Yansen and Cheng, Zhenglin and Chen, Chuyan and Lin, Tao},
  journal={arXiv preprint arXiv:2604.12322},
  year={2026}
}

@inproceedings{geng2026improved,
  title={Improved mean flows: On the challenges of fastforward generative models},
  author={Geng, Zhengyang and Lu, Yiyang and Wu, Zongze and Shechtman, Eli and Kolter, J Zico and He, Kaiming},
  booktitle={Proceedings of the IEEE/CVF Conference on Computer Vision and Pattern Recognition},
  pages={30467--30476},
  year={2026}
}

@article{han2026one,
  title={One-Step Generative Modeling via Wasserstein Gradient Flows},
  author={Han, Jiaqi and Li, Puheng and Guo, Qiushan and Xu, Renyuan and Ermon, Stefano and Cand{\`e}s, Emmanuel J},
  journal={arXiv preprint arXiv:2605.11755},
  year={2026}
}

@article{deng2026generative,
  title={Generative Modeling via Drifting},
  author={Deng, Mingyang and Li, He and Li, Tianhong and Du, Yilun and He, Kaiming},
  journal={arXiv preprint arXiv:2602.04770},
  year={2026}
}

@article{yang2026fdloss,
  title={Representation Fréchet Loss for Visual Generation},
  author={Yang, Jiawei and Geng, Zhengyang and Ju, Xuan and Tian, Yonglong and Wang, Yue},
  journal={arXiv:2604.28190},
  url={https://arxiv.org/abs/2604.28190},
  year={2026}
}

@article{zheng2025large,
  title={Large scale diffusion distillation via score-regularized continuous-time consistency},
  author={Zheng, Kaiwen and Wang, Yuji and Ma, Qianli and Chen, Huayu and Zhang, Jintao and Balaji, Yogesh and Chen, Jianfei and Liu, Ming-Yu and Zhu, Jun and Zhang, Qinsheng},
  journal={arXiv preprint arXiv:2510.08431},
  year={2025}
}

@article{zhang2026teacher,
  title={Teacher-Feature Drifting: One-Step Diffusion Distillation with Pretrained Diffusion Representations},
  author={Zhang, Yuan and Li, Chenyi and Ma, Guoqing and Zha, Jiajun and Yang, Yuanming and Wang, Bo and Tang, Wei and Li, Wenbo and Huang, Haoyang and Duan, Nan},
  journal={arXiv preprint arXiv:2605.07327},
  year={2026}
}

@article{xu2025one,
  title={One-step Diffusion Models with $ f $-Divergence Distribution Matching},
  author={Xu, Yilun and Nie, Weili and Vahdat, Arash},
  journal={arXiv preprint arXiv:2502.15681},
  year={2025}
}

@article{wang2026uni,
  title={Uni-instruct: One-step diffusion model through unified diffusion divergence instruction},
  author={Wang, Yifei and Bai, Weimin and Zhang, Debing and Luo, Weijian and Sun, He and others},
  journal={Advances in Neural Information Processing Systems},
  volume={38},
  pages={57097--57132},
  year={2026}
}

@inproceedings{liu2016kernelized,
  title={A kernelized Stein discrepancy for goodness-of-fit tests},
  author={Liu, Qiang and Lee, Jason and Jordan, Michael},
  booktitle={International conference on machine learning},
  pages={276--284},
  year={2016},
  organization={PMLR}
}

@article{holderrieth2025introduction,
  title={An introduction to flow matching and diffusion models},
  author={Holderrieth, Peter and Erives, Ezra},
  journal={arXiv preprint arXiv:2506.02070},
  year={2025}
}

@inproceedings{li2015generative,
  title={Generative Moment Matching Networks},
  author={Li, Yujia and Swersky, Kevin and Zemel, Rich},
  booktitle={International Conference on Machine Learning},
  pages={1718--1727},
  year={2015}
}

@article{li2017mmd,
  title={{MMD} {GAN}: Towards Deeper Understanding of Moment Matching Network},
  author={Li, Chun-Liang and Chang, Wei-Cheng and Cheng, Yu and Yang, Yiming and P{\'o}czos, Barnab{\'a}s},
  journal={Advances in Neural Information Processing Systems},
  volume={30},
  year={2017}
}

@article{wang2023prolificdreamer,
  title={ProlificDreamer: High-Fidelity and Diverse Text-to-3D Generation with Variational Score Distillation},
  author={Wang, Zhengyi and Lu, Cheng and Wang, Yikai and Bao, Fan and Li, Chongxuan and Su, Hang and Zhu, Jun},
  journal={Advances in Neural Information Processing Systems},
  volume={36},
  pages={8406--8441},
  year={2023}
}

@inproceedings{yi2023monoflow,
  title={Monoflow: Rethinking divergence gans via the perspective of wasserstein gradient flows},
  author={Yi, Mingxuan and Zhu, Zhanxing and Liu, Song},
  booktitle={International Conference on Machine Learning},
  pages={39984--40000},
  year={2023},
  organization={PMLR}
}

@article{franceschi2023unifying,
  title={Unifying gans and score-based diffusion as generative particle models},
  author={Franceschi, Jean-Yves and Gartrell, Mike and Dos Santos, Ludovic and Issenhuth, Thibaut and de B{\'e}zenac, Emmanuel and Chen, Micka{\"e}l and Rakotomamonjy, Alain},
  journal={Advances in Neural Information Processing Systems},
  volume={36},
  pages={59729--59760},
  year={2023}
}

@article{weber2023score,
  title={The score-difference flow for implicit generative modeling},
  author={Weber, Romann M},
  journal={arXiv preprint arXiv:2304.12906},
  year={2023}
}

@incollection{ambrosio2007gradient,
  title={Gradient flows of probability measures},
  author={Ambrosio, Luigi and Savar{\'e}, Giuseppe},
  booktitle={Handbook of differential equations: evolutionary equations},
  volume={3},
  pages={1--136},
  year={2007},
  publisher={Elsevier}
}

@article{wang2023hidden,
  title={The hidden linear structure in score-based models and its application},
  author={Wang, Binxu and Vastola, John J},
  journal={arXiv preprint arXiv:2311.10892},
  year={2023}
}

@article{wang2024unreasonable,
  title={The unreasonable effectiveness of gaussian score approximation for diffusion models and its applications},
  author={Wang, Binxu and Vastola, John J},
  journal={arXiv preprint arXiv:2412.09726},
  year={2024}
}

@article{sun2026three,
  title={Three-Body Scattering for Generative Modeling},
  author={Sun, Peng and Cheng, Zhenglin and Liu, Deyuan and Xie, Jun and Shang, Xinyi and Lin, Tao},
  journal={arXiv preprint arXiv:2607.18198},
  year={2026}
}

@article{chang2026oneig,
  title={Oneig-bench: Omni-dimensional nuanced evaluation for image generation},
  author={Chang, Jingjing and Fang, Yixiao and Xing, Peng and Wu, Shuhan and Cheng, Wei and Wang, Rui and Zeng, Xianfang and Yu, Gang and Chen, Hai-Bao},
  journal={Advances in Neural Information Processing Systems},
  volume={38},
  year={2026}
}

@article{geng2025x,
  title={X-omni: Reinforcement learning makes discrete autoregressive image generative models great again},
  author={Geng, Zigang and Wang, Yibing and Ma, Yeyao and Li, Chen and Rao, Yongming and Gu, Shuyang and Zhong, Zhao and Lu, Qinglin and Hu, Han and Zhang, Xiaosong and others},
  journal={arXiv preprint arXiv:2507.22058},
  year={2025}
}

@article{sun2026duality,
  title={Duality Models: An Embarrassingly Simple One-step Generation Paradigm},
  author={Sun, Peng and Shang, Xinyi and Lin, Tao and Shen, Zhiqiang},
  journal={arXiv preprint arXiv:2602.17682},
  year={2026}
}

@article{zhang2026vidu,
  title={Vidu S2: Real-Time Interactive, Editable, and Spatial Video Generation},
  author={Zhang, Jintao and Jiang, Kai and Chen, Jintao and Wang, Xu and Liu, Deyuan and Li, Jungang and Chen, Dechuang and Lin, Ming and Zhou, Jingjiang and Jin, Haopeng and others},
  journal={arXiv preprint arXiv:2609.11638},
  year={2026}
}

@article{han2026flowcpo,
  title={FlowCPO: A Unified Divergence View of Preference Alignment for Flow Models},
  author={Han, Yansen and Liao, Shengyi and Sun, Peng and Liu, Deyuan and Zhang, Yuanxing and Wan, Pengfei and Lin, Tao},
  journal={arXiv preprint arXiv:2609.09905},
  year={2026}
}

@article{chen2026llada,
  title={LLaDA-Image: Building Strong Image Generators with Fully Open Training Recipes},
  author={Chen, Chuyan and Chen, Haoxing and Chen, Kun and Cheng, Zhenglin and Cui, Long and Fang, Ruishan and Gu, Zhangxuan and Huang, Zhicheng and Lan, Zhenzhong and Lei, Yuanting and others},
  journal={arXiv preprint arXiv:2609.03796},
  year={2026}
}
